\documentclass{article}

     \PassOptionsToPackage{numbers, compress}{natbib}

\usepackage[preprint]{neurips_2022}




\usepackage[utf8]{inputenc} 
\usepackage[T1]{fontenc}    
\usepackage{hyperref}       
\usepackage{url}            
\usepackage{booktabs}       
\usepackage{amsfonts}       
\usepackage{nicefrac}       
\usepackage{microtype}      
\usepackage[dvipsnames]{xcolor}         

\usepackage{amstext,amsmath,amssymb,amsthm}
\usepackage{bm}
\usepackage{multirow,array}
\usepackage{algorithm}
\usepackage{algorithmic}
\usepackage{graphicx}
\usepackage{wrapfig}
\usepackage{natbib}
\usepackage{threeparttable}
\usepackage[font=footnotesize, labelformat=simple]{subcaption}%

\usepackage{mathtools}
\newtheorem{theorem}{Theorem}
\newtheorem{lemma}{Lemma}

\usepackage{enumerate}
\usepackage{enumitem}
\usepackage{adjustbox}

\hypersetup{
	colorlinks=true,%
	citecolor={blue!50!black},
	linkcolor={red!50!black},
	urlcolor={green!50!black}
}

\title{Model-Based Opponent Modeling}

%

\author{%

  Xiaopeng Yu \quad Jiechuan Jiang \quad Wanpeng Zhang \quad Haobin Jiang \quad Zongqing Lu \\
  \\
  {Peking University}
  \And
}

\begin{document}

\maketitle

\begin{abstract}
When one agent interacts with a multi-agent environment, it is challenging to deal with various opponents unseen before. Modeling the behaviors, goals, or beliefs of opponents could help the agent adjust its policy to adapt to different opponents. In addition, it is also important to consider opponents who are learning simultaneously or capable of reasoning. However, existing work usually tackles only one of the aforementioned types of opponents. In this paper, we propose \textit{model-based opponent modeling} (MBOM), which employs the environment model to adapt to all kinds of opponents. MBOM simulates the recursive reasoning process in the environment model and imagines a set of improving opponent policies. To effectively and accurately represent the opponent policy, MBOM further mixes the imagined opponent policies according to the similarity with the real behaviors of opponents. Empirically, we show that MBOM achieves more effective adaptation than existing methods in a variety of tasks, respectively with different types of opponents, \emph{i.e.}, fixed policy, na\"ive learner, and reasoning learner.
\end{abstract}

\section{Introduction}

Reinforcement learning (RL) has made great progress in multi-agent competitive games, \emph{e.g.}, AlphaGo \citep{silver2016mastering}, OpenAI Five \citep{OpenAI_dota}, and AlphaStar \citep{vinyals2019grandmaster}. In multi-agent environments, an agent usually has to compete against or cooperate with diverse other agents (collectively termed as \emph{opponents} whether collaborators or competitors) unseen before. Since the opponent policy influences the transition dynamics experienced by the agent, interacting with diverse opponents makes the environment nonstationary from the agent's perspective. Due to the complexity and diversity in opponent policies, it is very challenging for the agent to retain overall supremacy.

Explicitly modeling the behaviors, goals, or beliefs of opponents \citep{albrecht2018autonomous}, rather than treating them as a part of the environment, could help the agent adjust its policy to adapt to different opponents. Many studies rely on predicting the actions \citep{he2016opponent,hong2018deep,grover2018learning,papoudakis2020variational} and goals \citep{rabinowitz2018machine, raileanu2018modeling} of opponents during training. When facing diverse or unseen opponents, the agent policy conditions on such predictions or representations generated by corresponding modules. However, opponents may also have the same reasoning ability, \emph{e.g.}, an opponent who makes predictions about the agent's goal. In this scenario, higher-level reasoning and some other modeling techniques are required to handle such sophisticated opponents \citep{wen2018probabilistic,yang2019efficient,yuan2020emergence}. In addition, the opponents may learn simultaneously, the modeling becomes unstable, and the fitted models with historical experiences lag behind. To enable the agent to continuously adapt to learning opponents, LOLA \citep{foerster2018learning} takes into account the gradients of the opponent's learning for policy updates, Meta-PG \citep{shedivat2017continuous} formulates continuous adaptation as a meta-learning problem, and Meta-MAPG \citep{kim2021policy} combines meta-learning with LOLA. However, LOLA requires knowing the learning algorithm of opponents, while Meta-PG and Meta-MAPG require all opponents use the same learning algorithm.  

Unlike existing work, we do not make such assumptions and focus on enabling the agent to learn effectively by directly representing the policy improvement of opponents when interacting with them, even if they may be also capable of reasoning. Inspired by the intuition that humans could anticipate the future behaviors of opponents by simulating the interactions in the brain after knowing the rules and mechanics of the environment, in this paper, we propose \textit{model-based opponent modeling} (MBOM), which employs the environment model to predict and capture the policy improvement of opponents. By simulating the interactions in the environment model, we could obtain the best responses of opponents to the agent policy that is conditioned on the opponent model. Then, the opponent model can be finetuned using the simulated best responses to get a higher-level opponent model. 
By recursively repeating the simulation and finetuning, MBOM imagines the learning and reasoning of opponents and generates a set of opponent models with different levels, which could also be seen as recursive reasoning. However, since the learning and reasoning of opponents are \textit{unknown}, a certain-level opponent model might erroneously estimate the opponent. To effectively and accurately represent the opponent policy, we further propose to mix the imagined opponent policies according to the similarity with the real behaviors of opponents updated by the Bayesian rule.

We empirically evaluate MBOM in a variety of competitive tasks, against three types of opponents, \textit{i.e.}, fixed policy, na\"ive learner, and reasoning learner. MBOM outperforms strong baselines, especially when against na\"ive learner and reasoning learner. Ablation studies verify the effectiveness of recursive imagination and Bayesian mixing. We also show that MBOM can be applied in cooperative tasks.

\section{Related Work}

\textbf{Opponent Modeling.}
In multi-agent reinforcement learning (MARL), it is a big challenge to form robust policies due to the unknown opponent policy. From the perspective of an agent, if opponents are considered as a part of the environment, the environment is unstable and complex for policy learning when the policies of opponents are also changing. If the information about opponents is included, \emph{e.g.}, behaviors, goals, and beliefs, the environment may become stable, and the agent could learn using single-agent RL methods. This line of research is \emph{opponent modeling}. 

One simple idea of opponent modeling is to build a model each time a new opponent or group of opponents is encountered \citep{zheng2018deep}. However, learning a model every time is not efficient. A more computationally tractable approach is to represent an opponent's policy with an embedding vector. \citet{grover2018learning} uses a neural network as an encoder, taking as input the trajectory of one agent. Imitation learning and contrastive learning are used to train the encoder. Then, the learned encoder can be combined with RL by feeding the generated representation into policy or/and value networks. Learning of the model can also be performed simultaneously with RL, as an auxiliary task \citep{2016Reinforcement}. Based on DQN \citep{mnih2015DQN}, DRON \citep{he2016opponent} and DPIQN \citep{hong2018deep} use a secondary network that takes observations as input and predicts opponents’ actions. The hidden layer of this network is used by the DQN module to condition on for a better policy. It is also feasible to model opponents using variational auto-encoders \citep{papoudakis2020variational}, which means the generated representations are no longer deterministic vectors, but high-dimensional distributions. ToMnet \citep{rabinowitz2018machine} tries to make agents have the same Theory of Mind \citep{premack1978does} as humans. ToMnet consists of three networks, reasoning about the agent's action and goal based on past and current information. SOM \citep{raileanu2018modeling} implements Theory of Mind from a different perspective. SOM uses its own policy, opponent's observation, and opponent's action to work backward to learn opponent's goal by gradient ascent.

The methods aforementioned only consider opponent policies that are independent of the agent. If opponents hold a belief about the agent, the agent can form a higher-level belief about opponents' beliefs. This process can perform recursively, termed \emph{recursive reasoning}.
In repeated games, R2-B2 \citep{dai2020r2} performs recursive reasoning by assuming that all agents select actions according to GP-UCB acquisition function \cite{srinivas2009gaussian}, and shows that recursive reasoning on one level higher than the other agents leads to faster regret convergence. 
When it comes to more complicated stochastic games, recursive reasoning should be compatible with reinforcement learning algorithms. PR2 \citep{wen2018probabilistic} and GR2 \citep{wen2020modeling} use the agent’s joint Q-function to obtain recursive reasoning. \citet{yuan2020emergence} takes both level-0 and level-1 beliefs as input to the value function, where level-0 belief is updated according to the Bayesian rule and level-1 belief is updated using a learnable neural network. However, these methods \cite{wen2018probabilistic,wen2020modeling,yuan2020emergence} use centralized training with decentralized execution algorithms to train a set of fixed agents that cannot handle diverse opponents in execution. 

If the opponents are also learning, the modeling mentioned above becomes unstable and the fitted models with historical experiences lag behind. So it is beneficial to take into consideration the learning process of opponents. LOLA \citep{foerster2018learning} introduces the impact of the agent's policy on the anticipated parameter update of the opponent. A neural network is used to model the opponent's policy and estimate the learning gradient of the opponent's policy, implying that the learning algorithm used by the opponent should be known, otherwise the estimated gradient will be inaccurate. Further, the opponents may still be learning continuously during execution. Meta-PG \citep{shedivat2017continuous} is a method based on meta policy gradient, using trajectories from the current opponents to do multiple meta-gradient steps and construct a policy that is good for the updated opponents. Meta-MAPG \citep{kim2021policy} extends this method by including an additional term that accounts for the impact of the agent’s current policy on the future policies of opponents, similar to LOLA. These meta-learning based methods require that the distribution of trajectories matches across training and test, which implicitly means all opponents use the same learning algorithm.

Unlike existing work, we go one step further and consider a more general setting, where the opponents could be fixed, randomly sampled from an unknowable policy set, or continuously learning using an unknowable and changeable RL algorithm, both in training and execution. We learn an environment model that allows the agent to perform recursive reasoning against opponents who may also have the same reasoning ability.

\textbf{Model-based RL and MARL.}
Model-based RL allows the agent to have access to the transition function. There are two typical branches of model-based RL approaches: background planning and decision-time planning. 
In background planning, the agent could use the learned model to generate additional experiences for assisting learning. For example, Dyna-style algorithms \citep{sutton1990integrated,kurutach2018model,luo2019algorithmic} perform policy optimization on simulated experiences, and model-augmented value expansion algorithms \citep{feinberg2018model,oh2017value,buckman2018sample} use model-based rollouts to improve the update targets. In decision-time planning \citep{chua2018deep}, the agent could use the model to rollout the optimal action at a given state by looking forward during execution, \emph{e.g.}, model predictive control. 
Recent studies have extended model-based methods to multi-agent settings for sample efficiency \citep{zhang2020model,AORPO}, centralized training \citep{willemsen2021mambpo}, and communication \citep{kim2020communication}. Unlike these studies, we exploit the environment model for opponent modeling.

\section{Preliminaries}

In general, we consider an $n$-agent stochastic game $(\mathcal{S},\mathcal{A}^1, \dots, \mathcal{A}^n,\mathcal{P},\mathcal{R}^1,\dots,\mathcal{R}^n,\gamma)$, where $\mathcal{S}$ is the state space, $\mathcal{A}^i$ is the action space of agent $i \in [1,\dots,n]$, $\mathcal{A}=\prod_{i=1}^n\mathcal{A}^i$ is the joint action space of agents, $\mathcal{P}:\mathcal{S} \times \mathcal{A} \times \mathcal{S} \to [0, 1]$ is a transition function, $\mathcal{R}^i: \mathcal{S} \times \mathcal{A} \to \mathbb{R}$ is the reward function of agent $i$ , and $\gamma$ is the discount factor. The policy of agent $i$ is $\pi^i$, and the joint policy of other agents is $\pi^{-i}(a^{-i}|s) = \prod_{j \neq i}\pi^j(a^j|s)$, where $a^{-i}$ is the joint action except agent $i$.
All agents interact with the environment simultaneously without communication. The historical trajectory is available, \emph{i.e.}, for agent $i$ at timestep $t$, $\{s_0, a_0^i, a_0^{-i}, \dots, s_{t-1}, a_{t-1}^i, a_{t-1}^{-i}\}$ is observable. The goal of the agent $i$ is to maximize its expected cumulative discount rewards
\begin{equation}
	\label{eq_agent_goal}
	\mathop{\mathbb{E}}\limits_{\substack{s_{t+1}\sim\mathcal{P}(\cdot|s_t, a_t^i, a_t^{-i}), \\a^i\sim\pi^i(\cdot |s_t), a_t^{-i}\sim\pi^{-i}(\cdot |s_t)}}\left [ \sum_{t=0}^{\infty}\gamma^t\mathcal{R}^i(s_t,a_t^i,a_t^{-i})  \right ].
\end{equation}
For convenience, the learning agent treats all other agents as a joint opponent with the joint action $a^o \sim \pi^o(\cdot|s)$ and reward $r^o$. The action and reward of the learning agent are denoted as $a \sim \pi(\cdot|s)$ and $r$, respectively.


\section{Model-Based Opponent Modeling}
\label{method}
MBOM employs the environment model to predict and capture the learning of opponent policy. By simulating recursive reasoning via the environment model, MBOM imagines the learning and reasoning of the opponent and generates a set of opponent models. To obtain a stronger representation ability and accurately capture the adaptation of the opponent, MBOM mixes the imagined opponent policies according to the similarity with the real behaviors of the opponent.

\subsection{Recursive Imagination}
If the opponent is also learning during the interaction, the opponent model fitted with historical experiences always lags behind, making the agent hard to adapt to the opponent. Moreover, if the opponent could adjust its policy according to the actions, intentions, or goals of the agent, then recursive reasoning may occur between agent and opponent. However, based on the lagged opponent model, the agent would struggle to keep up with the learning of opponent. To adapt to the learning and reasoning opponent, the agent should predict the current opponent policy and reason more deeply than the opponent. 

\begin{figure}[t]
  \centering
  \includegraphics[width=.95\textwidth]{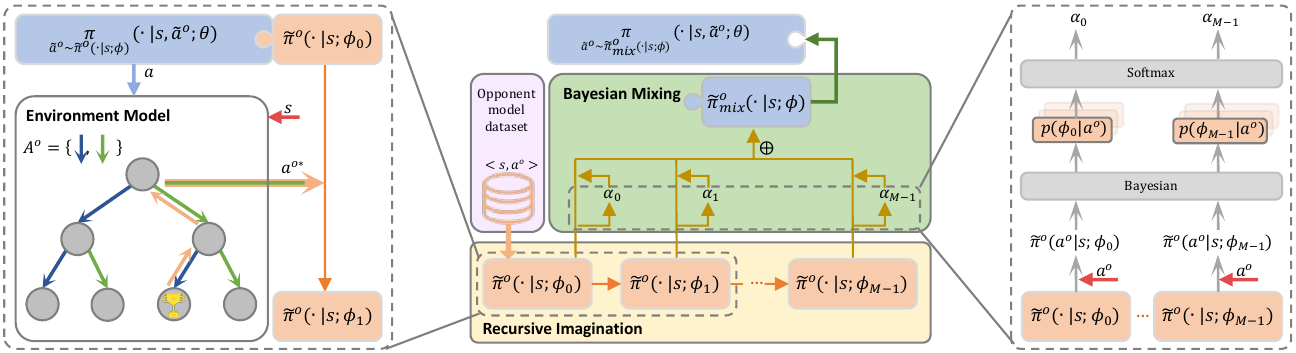}
  \caption{Architectures of MBOM}
  \label{fig-overview}
  \vspace{-2mm}
\end{figure}

MBOM explicitly simulates the recursive reasoning process utilizing the environment model, called \textit{recursive imagination}, to generate a series of opponent models, called Imagined Opponent Policies (IOPs). 
First, we pretrain the agent's policy $\pi$ using PPO \citep{schulman2017proximal} while interacting with $\nu$ different opponents with diverse policies that can be learned for example by \citep{tang2021discovering}.
and collects a buffer $\mathcal{D}$ which contains the experience $\left<s, a, a^o, s' , \bm{r}\right>$, where $\bm{r} = \left<r,r^o\right>$. 
For zero-sum game ($r+r^o=0$) and fully cooperative game ($r=r^o$), $r^o$ can be easily obtained, while for general-sum game we make an assumption that $r^o$ can be accessed during experience collection. 
Then, using the experience buffer $\mathcal{D}$, we can train the environment model $\Gamma(s', \bm{r} | s, a, a^o;\zeta)$ by minimizing the mean square error
\begin{equation}
\mathop{\mathbb{E}}\limits_{\substack{s, a, a^o, s' ,\bm{r} \sim \mathcal{D}}} \frac{1}{2}\|s' - \hat{s}'\|^2 +\frac{1}{2}\|\bm{r} - \hat{\bm{r}}\|^2,
\end{equation}
where $\hat{s}',\hat{\bm{r}} = \Gamma(s, a, a^o;\zeta)$, and obtain the level-0 IOP $\tilde{\pi}^o_0(\cdot|s;\phi_0)$ by maximum-likelihood estimation,
\begin{equation}
\mathop{\mathbb{E}}\limits_{s, a^o \sim \mathcal{D}} \log \tilde{\pi}^o_0(a^o|s;\phi_0).
\end{equation}
%
To imagine the learning of the opponent, as illustrated in Figure~\ref{fig-overview}, we use the rollout algorithm \citep{Tesauro1996On} to get the best response of the opponent to the agent policy $\pi$. For each opponent action $a^o_t$ at timestep $t$, we uniformly sample the opponent action sequences in the next $k$ timesteps, simulate the trajectories using the learned environment model $\Gamma_\zeta$, and select the best response with the highest rollout value
\begin{equation}
\label{eq:best}
a^{o *}_t = \mathop{\operatorname{argmax}}\limits_{a_t^{o}}\max\limits_{\substack{a_{t+1}^o,\cdots, a_{t+k}^o \sim \text{Unif}}} \sum_{j = 0}^{k}\gamma^{j} {r}_{t+j}^{o}.
\end{equation}
During the simulation, the agent acts according to the policy conditioned on the modeled opponent policy, $a_t \sim\pi(\cdot |s_t, \tilde{a}_t^o \sim \tilde{\pi}^o_0(\cdot|s_t;\phi_0);\theta)$, and the learned environment model provides the transition $s_{t+1},\bm{r_t} = \Gamma(s_t, a_t, a_t^o;\zeta)$. With larger $k$, the rollout has a longer planning horizon, and thus could evaluate the action $a^{o*}$ more accurately, assuming a perfect environmental model. However, the computation cost of rollout increases exponentially with the planning horizon to get an accurate estimate of $a^{o*}$, while in practice the compounding error of the environmental model also increases with the planning horizon \citep{janner2019trust}. Therefore, the choice of $k$ is a tradeoff between accuracy and cost. Specifically, for zero-sum game and fully cooperative game, we can approximately estimate the opponent state value $V^o(s)$ as $-V(s)$ and $V(s)$, respectively, and modify the rollout value like $n$-step return \citep{sutton2018reinforcement} to obtain a longer horizon (see Appendix \ref{app:ablation} for the empirical effect of $k$)
\begin{equation}
\label{eq:best_1}
a^{o *}_t = \mathop{\operatorname{argmax}}\limits_{a_t^{o}}\max\limits_{\substack{a_{t+1}^o,\cdots, a_{t+k}^o \sim \text{Unif}}} \left[ \sum_{j = 0}^{k}\gamma^{j} {r}_{t+j}^{o} + \gamma^{k+1} V^o(s_{t+k+1}) \right].
\end{equation}
By imagination, we can obtain the best response of the opponent to the agent policy $\pi$ and construct the simulated data $\{\left<s, a^{o*}\right>\}$. Then, we use the data to finetune the level-0 IOP $\tilde{\pi}^o_0(\cdot|s;\phi_0)$ by maximum-likelihood estimation, and obtain the level-1 IOP $\tilde{\pi}^o_1(\cdot|s;\phi_1)$. The level-1 IOP can be seen as the best \textit{response} of the opponent to the \textit{response} of the agent conditioned on level-0 IOP. In the imagination, it is the nested form as ``the opponent believes [that the agent believes (that the opponent believes ...)].'' The existing imagined opponent policy is the innermost ``(that the opponent believes),'' while the outermost ``the opponent believes'' is $a^{o*}$ which is obtained by the rollout process. Recursively repeating the rollout and finetuning, where the agent policy is conditioned on the IOP $\tilde{\pi}^o_{m-1}(\cdot|s;\phi_{m-1})$, we could derive the level-$m$ IOP $\tilde{\pi}^o_m(\cdot|s;\phi_m)$. 

\subsection{Bayesian Mixing}

By recursive imagination, we get $M$ IOPs with different reasoning levels. However, since the learning and reasoning of the opponent are unknown, a single IOP might erroneously estimate the opponent. To obtain stronger representation capability and accurately capture the learning of the opponent, we linearly combine the IOPs to get a mixed IOP,
\begin{equation}
\label{eq:phi_mix}
\tilde{\pi}^{o}_{\operatorname{mix}}(\cdot | s) = \sum_{m=0}^{M-1}\alpha_m\tilde{\pi}^o_m(\cdot|s; \phi_m),
\end{equation}
where $\alpha_m$ is the weight of level-$m$ IOP, which is taken as $p(\tilde{\pi}^o_m|a^o)$, also denoted as  $p(m|a^o)$, the probability that the opponent action $a^o$ is generated from the level-$m$ IOP. Thus, $p(m|a^o)$ indicates the similarity between the level-$m$ IOP $\tilde{\pi}^o_m$ and the opponent policy $\pi^o$ in the most recent stage. By Bayesian rule, we have
\begin{equation}
\label{eq:bayes}
    p(m|a^o) = \frac{p(a^o|m)p(m)}{\sum_{i=0}^{M-1}p(a^o|i)p(i)} = \frac{\tilde{\pi}^o_m(a^o|s;\phi_m)p(m)}{\sum_{i=0}^{M-1}\left[\tilde{\pi}^o_i(a^o|s;\phi_i)p(i)\right]}.
\end{equation}
Updating $p(m|a^o)$ during interaction can obtain a more accurate estimate of the improving opponent policy. 

In practice, we use the moving average of $p(m|a^o)$ as the prior $p(m)$, take the decayed moving average of $p(m|a^o)$ as $\Psi_m$, and obtain $\bm{\alpha}$ by the IOPs mixer,
\begin{equation}
\label{eq:mixture_weights}
    \bm{\alpha} = (\alpha_0, \dots, \alpha_{M-1}) = \text{softer-softmax}(\Psi_0, \dots, \Psi_{M-1}).
\end{equation}
Softer-softmax \citep{hinton2015distilling} is a variant of softmax, which uses higher temperature to control a softy of the probability distribution. The IOPs mixer is non-parametric, which could be updated quickly and efficiently without parameter training and too many interactions (see Appendix~\ref{app:alpha} for empirical analysis on $\bm{\alpha}$). Therefore, the IOPs mixer could adapt to the fast-improving opponent.

\newcommand{\unsim}{\mathord{\sim}}
\begin{algorithm}[t]
	\caption{MBOM}
	\label{alg:1}
\small
\begin{algorithmic}[1]
		\STATE \textit{\textbf{Pretraining:}}
		\STATE Initialize the recursive imagination layer $M$.	
		\STATE Initialize the weights $\bm{\alpha}$ with $(1/M, 1/M, \dots, 1/M)$.
		\STATE Interact with $\nu$ learning opponents to train the agent policy $\theta$ and collect the experience buffer $\mathcal{D}$.
		\STATE Train the level-0 IOP $\phi_0$, the environment model $\Gamma_{\zeta}$ using the experience buffer $\mathcal{D}$.
		\STATE \textit{\textbf{Interaction:}}					
		\FOR{$t = 1, \ldots, \text{ max\_epoch}$}		
		\STATE Interact with the opponent to observe the real opponent actions.\\
		\COMMENT {\textit{//Recursive Imagination}}
		\STATE Finetune $\phi_0$ with the real opponent actions
		\FOR{$m = 1, \ldots, M-1$}	
		\STATE Rollout in $\Gamma_\zeta$ with $\pi(\cdot |s, \tilde{a}^o \unsim \tilde{\pi}^o_{m-1}(\cdot|s;\phi_{m-1});\theta)$ to get $a^{o*}$ by \eqref{eq:best} or \eqref{eq:best_1}.
		\STATE Finetune $\phi_{m-1}$ with best responses to get the $\phi_m$.
		\ENDFOR	\\
		\COMMENT {\textit{//Bayesian Mixing}}
		\STATE Update $\bm{\alpha}$ by \eqref{eq:mixture_weights}.	
		\STATE Mix the IOPs to get $\tilde{\pi}_{\operatorname{mix}}^o$ by \eqref{eq:phi_mix}.
		\ENDFOR		
\end{algorithmic}
\end{algorithm}

For completeness, the full procedure of MBOM in given in Algorithm~\ref{alg:1}.

\subsection{Theoretical Analysis}

MBOM consists of several components, including a conditional policy, an environment model, and $M$ IOPs. They all interact with each other through model rollouts, which however makes it extremely hard to give an error analysis based on each component of MBOM. Therefore, we instead focus mainly on recursive imagination and Bayesian mixing, and give a profound understanding of MBOM. Proofs are available in Appendix~\ref{app:proof}.

Without loss of generality, we define $\delta_m$ as the discrepancy between level-$m$ and $m$-$1$ IOPs in terms of estimated value function given the IOP, and $\varepsilon_m$ as the error of level-$m$ IOP compared to the true value function given the opponent policy, \textit{i.e.}, $\delta_m \doteq |\widehat{V}_m-\widehat{V}_{m-1}|$, $\varepsilon_m \doteq |\widehat{V}_m - V|$, and $\delta_0 \doteq \varepsilon_0$. Then, we have the following two lemmas. 

\begin{lemma}
For the mixed imagined opponent policy (IOP) $\tilde{\pi}_{\operatorname{mix}}^o(\cdot|s) = \sum_{m=0}^{M-1}\alpha_m\tilde{\pi}^o_m(\cdot|s; \phi_m)$, the total error $\varepsilon_{total} = \sum_{m=0}^{M-1}\alpha_m\varepsilon_m$ satisfies the following inequality:
\begin{equation*}\label{eq:total-bound}
    \varepsilon_{total}\leq\sum_{m=0}^{M-1}\delta_m\sum_{i=m}^{M-1}\alpha_i.
\end{equation*}
\label{lem:total-error}
\end{lemma}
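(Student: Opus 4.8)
The plan is to bound each individual error $\varepsilon_m$ by a partial sum of the consecutive discrepancies $\delta_j$ via a telescoping argument, substitute this bound into the definition of $\varepsilon_{total}$, and then exchange the order of the resulting finite double summation so that it matches the claimed right-hand side. The whole argument is essentially the triangle inequality plus a reindexing, so I would expect it to be short.

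First I would write the error of the level-$m$ IOP as a telescoping sum over consecutive reasoning levels,
\begin{equation*}
\widehat{V}_m - V = (\widehat{V}_0 - V) + \sum_{j=1}^{m}\bigl(\widehat{V}_j - \widehat{V}_{j-1}\bigr),
\end{equation*}
and then apply the triangle inequality. Using the definitions $\delta_j = |\widehat{V}_j - \widehat{V}_{j-1}|$ for $j \geq 1$ together with $\delta_0 = \varepsilon_0 = |\widehat{V}_0 - V|$, this yields the per-level bound $\varepsilon_m = |\widehat{V}_m - V| \leq \sum_{j=0}^{m}\delta_j$.

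Next I would substitute this into the weighted total error and swap the summation order. Since the mixing weights satisfy $\alpha_m \geq 0$, the substitution preserves the inequality, giving
\begin{equation*}
\varepsilon_{total} = \sum_{m=0}^{M-1}\alpha_m\varepsilon_m \leq \sum_{m=0}^{M-1}\alpha_m\sum_{j=0}^{m}\delta_j.
\end{equation*}
The double sum ranges over all index pairs $(m,j)$ with $0 \leq j \leq m \leq M-1$; collecting the coefficient multiplying each $\delta_j$ (equivalently, applying finite Fubini over this triangular index region) converts it into $\sum_{j=0}^{M-1}\delta_j\sum_{m=j}^{M-1}\alpha_m$, which after relabeling $j \mapsto m$ and $m \mapsto i$ is exactly $\sum_{m=0}^{M-1}\delta_m\sum_{i=m}^{M-1}\alpha_i$, as required.

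I do not expect a serious obstacle. The only points requiring genuine care are tracking the index ranges correctly when exchanging the order of summation and confirming that the nonnegativity of the weights $\alpha_m$ is what allows the per-level bound to be substituted into the weighted sum without reversing the inequality; everything else is routine.
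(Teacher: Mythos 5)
Your proposal is correct and follows essentially the same argument as the paper's proof: a telescoping decomposition of $\widehat{V}_m - V$ bounded via the triangle inequality to get $\varepsilon_m \leq \sum_{j=0}^{m}\delta_j$, followed by substitution into the weighted sum and an exchange of summation order over the triangular index set. Your explicit remarks on the nonnegativity of the weights $\alpha_m$ and the index bookkeeping are details the paper leaves implicit, but the route is identical.
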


\begin{lemma}
Suppose the value function is Lipschitz continuous on the state space $\mathcal{S}$, $K$ is the Lipschitz constant, $\widehat{\mathcal{M}}_i(s,a) \doteq \Gamma(s,a,a^o;\zeta) \cdot \tilde{\pi}^o_i(a^o|s;\phi_i)$ is the transition distribution given $s$ and $a$, then

\begin{equation*}
\left|\widehat{V}_i-\widehat{V}_{j}\right| \leq\frac{\gamma K}{1-\gamma} \cdot\underset{s,a\sim \pi,\widehat{\mathcal{M}}_j}{\mathbb{E}}\left\|\widehat{\mathcal{M}}_i(s, a)-\widehat{\mathcal{M}}_j(s, a)\right\|.
\end{equation*}
\label{lem:obj-bound}
\end{lemma}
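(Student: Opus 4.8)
The plan is to prove this as a \emph{simulation lemma}: both $\widehat{V}_i$ and $\widehat{V}_j$ are value functions of the \emph{same} agent policy $\pi$, evaluated in two environments that differ only through the marginalized transition models $\widehat{\mathcal{M}}_i$ and $\widehat{\mathcal{M}}_j$ (obtained by averaging the fixed model $\Gamma_\zeta$ over the level-$i$ and level-$j$ IOPs). First I would write the two Bellman fixed-point equations, $\widehat{V}_i(s)=\mathbb{E}_{a\sim\pi}\big[\,\cdots+\gamma\,\widehat{V}_i(\widehat{\mathcal{M}}_i(s,a))\,\big]$ and the analogue for $j$, carrying the immediate-reward contribution inside the model output $\widehat{\mathcal{M}}$ (recall $\Gamma_\zeta$ emits $s'$ and $\bm{r}$ jointly).

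Next I would subtract the two equations and insert a cross term, adding and subtracting $\gamma\,\widehat{V}_i(\widehat{\mathcal{M}}_j(s,a))$ inside the expectation. This decomposes the per-state gap $\Delta(s)\doteq\widehat{V}_i(s)-\widehat{V}_j(s)$ into (a) a \emph{model-mismatch} term $\gamma\,\mathbb{E}_{a\sim\pi}[\widehat{V}_i(\widehat{\mathcal{M}}_i(s,a))-\widehat{V}_i(\widehat{\mathcal{M}}_j(s,a))]$, in which the \emph{same} value function $\widehat{V}_i$ is evaluated at the two models' successors, and (b) a \emph{recursive} term $\gamma\,\mathbb{E}_{a\sim\pi,\,\widehat{\mathcal{M}}_j}[\Delta(s')]$ that propagates the gap forward under $\pi$ and model $j$.

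For term (a) I would invoke the $K$-Lipschitz continuity of $\widehat{V}_i$ to obtain the pointwise bound $\gamma K\,\mathbb{E}_{a\sim\pi}\|\widehat{\mathcal{M}}_i(s,a)-\widehat{\mathcal{M}}_j(s,a)\|$; this is the single place where the Lipschitz hypothesis and the norm in the statement enter. Term (b) is exactly $\gamma$ times the same quantity $\Delta$ one step later, so iterating the recursion (equivalently, applying the Bellman contraction and summing the resulting geometric series of per-step Lipschitz bounds along trajectories drawn from $\pi$ and $\widehat{\mathcal{M}}_j$) gives $|\Delta(s)|\le \gamma K\sum_{t\ge0}\gamma^{t}\,\mathbb{E}_{s_t\sim\pi,\widehat{\mathcal{M}}_j}\|\widehat{\mathcal{M}}_i-\widehat{\mathcal{M}}_j\|$. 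Recognizing $\sum_{t\ge0}\gamma^{t}\mathbb{E}_{s_t}[\cdot]=\frac{1}{1-\gamma}\,\mathbb{E}_{s,a\sim\pi,\widehat{\mathcal{M}}_j}[\cdot]$ as the normalized discounted state-action occupancy under model $j$ collapses the sum and produces the prefactor $\frac{\gamma K}{1-\gamma}$, yielding the claim.

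The main obstacle I anticipate is making the Lipschitz step in (a) produce \emph{exactly} $K\|\widehat{\mathcal{M}}_i-\widehat{\mathcal{M}}_j\|$: if $\Gamma_\zeta$ is read as a deterministic mean prediction, then $\widehat{\mathcal{M}}_{(\cdot)}(s,a)$ is a point in $\mathcal{S}$ and the Euclidean Lipschitz bound is immediate, whereas if $\widehat{\mathcal{M}}_{(\cdot)}(s,a)$ is a genuine distribution (a mixture over opponent actions) one must read $\|\cdot\|$ as a Wasserstein-$1$ distance and apply Kantorovich--Rubinstein duality, which still relies on the same constant $K$. A secondary subtlety is the absence of a separate reward-discrepancy term in the bound, which I would resolve by treating reward as a coordinate of the model output $\widehat{\mathcal{M}}$ (justified since $\Gamma_\zeta$ predicts $s'$ and $\bm{r}$ together), so that both reward and next-state mismatch are absorbed into the single norm; I must also insert the cross term in the correct direction, keeping $\widehat{V}_i$ (not $\widehat{V}_j$) inside (a) and rolling out under $\widehat{\mathcal{M}}_j$, so as to match the asymmetric expectation $\mathbb{E}_{s,a\sim\pi,\widehat{\mathcal{M}}_j}$ that appears in the statement.
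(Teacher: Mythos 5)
Your proposal is correct, but you should know that the paper does not actually prove this lemma: its entire ``proof'' is the one-line remark that Lemma~\ref{lem:obj-bound} is ``a directly cited theorem'' from \citep{algo2019luo}, ``with some modifications to fit our context.'' What you have written is, in effect, a reconstruction of how that cited simulation lemma is established: Bellman fixed-point equations for the same policy $\pi$ under the two marginalized models, insertion of the cross term $\gamma\,\widehat{V}_i(\widehat{\mathcal{M}}_j(s,a))$, a Lipschitz bound on the resulting model-mismatch term, and a geometric-series collapse of the recursive term into the normalized discounted state-action occupancy under $\pi$ and $\widehat{\mathcal{M}}_j$, which is exactly where the prefactor $\frac{\gamma K}{1-\gamma}$ and the asymmetric expectation $\mathbb{E}_{s,a\sim\pi,\widehat{\mathcal{M}}_j}$ come from. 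Your sketch also surfaces precisely the points that the paper's citation glosses over: (i) the norm must be read as a distance between distributions (Wasserstein-$1$, via Kantorovich--Rubinstein, with the same constant $K$) unless $\Gamma_\zeta$ is treated as a deterministic mean predictor; (ii) the absence of a separate reward-discrepancy term in the bound only makes sense if the reward $\bm{r}$ emitted by $\Gamma_\zeta$ is folded into the model output and hence into the single norm --- this is presumably the unstated ``modification to fit our context,'' since the cited theorem assumes a shared known reward function, whereas here the two models induce different expected rewards through the differing IOPs; and (iii) the cross term must keep $\widehat{V}_i$ inside the Lipschitz step while rolling out under model $j$ to match the stated expectation. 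So your route buys a self-contained argument with honest hypotheses, at the cost of length; the paper's route is a deferral to prior work that leaves those subtleties implicit.
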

According to \eqref{eq:phi_mix} and \eqref{eq:bayes}, we can express $\alpha_m$ as the following
\begin{align*}
    \alpha_m & \doteq \mathbb{E}\left[p(m | a^o)\right] = \sum\limits_{s,a^o} d(s,a^o) \frac{\tilde{\pi}^o_m\left(a^{o} | s ; \phi_{m}\right)p(m)}{\sum_{i=0}^{M-1}\tilde{\pi}^o_i\left(a^{o} | s ; \phi_{i}\right) p(i)},
\end{align*}
where $d(s,a^o)$ is the stationary distribution of pairs of state and action of opponent. Then,
based on Lemma \ref{lem:total-error} and \ref{lem:obj-bound}, we can obtain the following theorem, where $\widehat{\mathcal{M}}_{-1}$ denotes $\mathcal{P}(\cdot|s,a,a^o) \cdot \pi^o(a^o|s)$, the true transition dynamics of agent. 

\begin{theorem}\label{the:error-bound}
Define the error between the approximated value function of the Bayesian mixing IOP and the true value function as $\varepsilon_{true}\doteq \left|\widehat{V}-V\right|$. For the Bayesian mixing IOP, the true error satisfies the following inequality:
\begin{equation*}\small
\label{eq:full-total-bound}
\varepsilon_{true} \leq \varepsilon_{total} \leq \sum\limits_{s,a^o}\frac{\gamma K d(s,a^o)}{1-\gamma} 
  \cdot \frac{\sum\limits_{m=0}^{M-1}\underset{s,a\sim\pi,\widehat{\mathcal{M}}_{m-1}}{\mathbb{E}}\left\|\widehat{\mathcal{M}}_m-\widehat{\mathcal{M}}_{m-1}\right\|\sum\limits_{i=m}^{M-1}\tilde{\pi}^o_i\left(a^{o} | s ; \phi_{i}\right) p(i)}{\sum\limits_{j=0}^{M-1}\tilde{\pi}^o_j\left(a^{o} | s ; \phi_{j}\right) p(j)}.
\end{equation*}
\end{theorem}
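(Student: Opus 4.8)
The plan is to chain together the two lemmas and then substitute the explicit form of the mixing weights $\alpha_m$. First I would establish the leftmost inequality $\varepsilon_{true}\le\varepsilon_{total}$. By definition $\varepsilon_{true}=|\widehat{V}-V|$, where $\widehat{V}$ is the value function induced by the mixed IOP $\tilde{\pi}^o_{\operatorname{mix}}=\sum_m\alpha_m\tilde{\pi}^o_m$. Since the transition distribution associated with the mixed IOP is the convex combination $\sum_m\alpha_m\widehat{\mathcal{M}}_m$ (the weights $\alpha_m$ are a probability vector, being outputs of softer-softmax), I expect $\widehat{V}$ to be a convex combination of the $\widehat{V}_m$, or at least to satisfy $\widehat{V}=\sum_m\alpha_m\widehat{V}_m$ by linearity of the relevant Bellman operator in the transition kernel. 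Then $|\widehat{V}-V|=|\sum_m\alpha_m(\widehat{V}_m-V)|\le\sum_m\alpha_m|\widehat{V}_m-V|=\sum_m\alpha_m\varepsilon_m=\varepsilon_{total}$, by the triangle inequality. This is the step I would want to justify carefully, since linearity of the induced value in the policy mixture is not automatic for a nonlinear Bellman operator.

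Next I would invoke Lemma~\ref{lem:total-error} to bound $\varepsilon_{total}\le\sum_{m=0}^{M-1}\delta_m\sum_{i=m}^{M-1}\alpha_i$. Then I substitute the bound on $\delta_m=|\widehat{V}_m-\widehat{V}_{m-1}|$ from Lemma~\ref{lem:obj-bound}, taking $i=m$, $j=m-1$, which gives $\delta_m\le\frac{\gamma K}{1-\gamma}\,\mathbb{E}_{s,a\sim\pi,\widehat{\mathcal{M}}_{m-1}}\|\widehat{\mathcal{M}}_m-\widehat{\mathcal{M}}_{m-1}\|$. Note this requires the convention $\widehat{\mathcal{M}}_{-1}=\mathcal{P}(\cdot|s,a,a^o)\cdot\pi^o(a^o|s)$ so that the $m=0$ term, which involves $\delta_0=\varepsilon_0=|\widehat{V}_0-V|$, fits the same pattern against the true dynamics; I would verify that Lemma~\ref{lem:obj-bound} applied with $j=-1$ indeed recovers $\varepsilon_0$ consistently.

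Finally I would plug in the explicit expression for $\alpha_i$. Since $\alpha_i=\mathbb{E}[p(i|a^o)]=\sum_{s,a^o}d(s,a^o)\,\frac{\tilde{\pi}^o_i(a^o|s;\phi_i)p(i)}{\sum_{j}\tilde{\pi}^o_j(a^o|s;\phi_j)p(j)}$, the inner sum $\sum_{i=m}^{M-1}\alpha_i$ becomes $\sum_{s,a^o}d(s,a^o)\,\frac{\sum_{i=m}^{M-1}\tilde{\pi}^o_i(a^o|s;\phi_i)p(i)}{\sum_{j}\tilde{\pi}^o_j(a^o|s;\phi_j)p(j)}$. Pulling the $\delta_m$ bound inside the double sum over $(s,a^o)$ and reorganizing the factor of $\frac{\gamma K}{1-\gamma}$ and $d(s,a^o)$ to the front yields exactly the claimed right-hand side. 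The main obstacle I anticipate is the first step — showing the mixed value decomposes (exactly or as an upper bound) as $\sum_m\alpha_m\widehat{V}_m$ so that the triangle inequality delivers $\varepsilon_{true}\le\varepsilon_{total}$; the remaining manipulations are bookkeeping, essentially a careful interchange of the expectation over $d(s,a^o)$ with the finite sum over levels.
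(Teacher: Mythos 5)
Your proposal follows essentially the same route as the paper's proof: the triangle inequality applied to $\widehat{V}=\sum_{m}\alpha_m\widehat{V}_m$ gives $\varepsilon_{true}\le\varepsilon_{total}$, Lemma~1 bounds $\varepsilon_{total}$ by $\sum_m\delta_m\sum_{i\ge m}\alpha_i$, Lemma~2 with the convention $\widehat{\mathcal{M}}_{-1}=\mathcal{P}(\cdot|s,a,a^o)\cdot\pi^o(a^o|s)$ bounds each $\delta_m$, and substituting the Bayesian expression for $\alpha_i$ and interchanging the sums over levels and over $(s,a^o)$ yields the stated bound. The one step you flag as requiring care — that the value induced by the mixed IOP decomposes exactly as $\sum_m\alpha_m\widehat{V}_m$ — is asserted without further justification in the paper's proof as well, so your caution is well placed but does not constitute a divergence in approach.
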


Theorem \ref{the:error-bound} tells us the error accumulates as level $M$ increases. However, larger $M$ also has advantages. To analyze, we first define the benefit using the mixed IOP as
In the error bound of Theorem \ref{the:error-bound}, it is easy to see that the error accumulates as level $M$ increases, but at the same time, higher level also has advantages. To analyze, we need to define the benefit of using the mixed IOP as
the optimization. We use the negative distribution distance $\eta\doteq -\|\hat{\pi}-\pi\|$ to denote the benefit of using policy in different levels, where $\hat{\pi}$ is the estimated opponent policy, $\pi$ is the true opponent policy and $\|\cdot\|$ is a general form of distribution distance function. For a mixing policy, we can define the benefit as:
\begin{align*}
    \eta_M=-\|(\alpha_0\tilde{\pi}^o_0+\cdots+\alpha_{M-1}\tilde{\pi}^o_{M-1})-\pi^o\|.
\end{align*}
Then, we have the following lemma and theorem.
\begin{lemma}\label{lem:max-benefit}
Assume that the true opponent policy $\pi^o$ has a probability distribution over the given set of IOPs $\{\tilde{\pi}^o_0,\ldots,\tilde{\pi}^o_{M-1}\}$, and a probability $\mathrm{Pr}\{\pi^o=\tilde{\pi}_m^o\} = p_m$ for each $\tilde{\pi}_m^o$, then the maximum expectation of $\eta_M$ can be achieved if and only if $\alpha_m=p_m, \forall m\in[0,M-1].$
\end{lemma}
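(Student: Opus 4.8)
The plan is to recast the lemma as a weighted barycenter (Fréchet-mean) problem and solve it by convexity. Writing $\hat{\pi}=\sum_{i=0}^{M-1}\alpha_i\tilde{\pi}^o_i$ and taking the expectation over the random opponent policy $\pi^o$, which equals $\tilde{\pi}^o_m$ with probability $p_m$, I would first express the objective as
\[
\mathbb{E}[\eta_M] = -\sum_{m=0}^{M-1} p_m\bigl\|\hat{\pi} - \tilde{\pi}^o_m\bigr\|,
\]
so that maximizing $\mathbb{E}[\eta_M]$ is equivalent to minimizing $f(\bm{\alpha})=\sum_{m=0}^{M-1}p_m\|\hat{\pi}-\tilde{\pi}^o_m\|$ over the probability simplex $\{\bm{\alpha}:\alpha_m\ge 0,\ \sum_m\alpha_m=1\}$. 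This is precisely the problem of finding the weighted barycenter of the IOPs $\{\tilde{\pi}^o_m\}$ with weights $\{p_m\}$, constrained to their convex hull.

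For the \emph{if} direction I would treat the distance as (the square of) a Hilbert-space norm, the regime in which the arithmetic weighted mean is the barycenter. Using the bias--variance identity
\[
\sum_{m}p_m\|\hat{\pi}-\tilde{\pi}^o_m\|^2 = \|\hat{\pi}-\bar{\pi}\|^2 + \sum_m p_m\|\tilde{\pi}^o_m-\bar{\pi}\|^2, \qquad \bar{\pi}\doteq\sum_m p_m\tilde{\pi}^o_m,
\]
only the first term depends on $\hat{\pi}$, it is nonnegative, and it vanishes iff $\hat{\pi}=\bar{\pi}$. Since $\bar{\pi}$ lies in the convex hull of the IOPs and is realized by the choice $\alpha_m=p_m$, this shows $\bm{\alpha}=\bm{p}$ maximizes $\mathbb{E}[\eta_M]$. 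An analogous computation handles the case where $\|\cdot\|$ is read as a forward-KL (cross-entropy) divergence: minimizing $\sum_m p_m D(\tilde{\pi}^o_m\,\|\,\hat{\pi})$ reduces to minimizing the cross-entropy $-\sum_{a^o}\bar{\pi}(a^o)\log\hat{\pi}(a^o)$, again solved uniquely by $\hat{\pi}=\bar{\pi}$. For the \emph{only if} direction I would invoke strict convexity: the map $\hat{\pi}\mapsto\|\hat{\pi}-\bar{\pi}\|^2$ (resp.\ the cross-entropy) is strictly convex, so its minimizer over the convex hull is unique, and provided the IOPs are affinely independent the representation $\hat{\pi}=\sum_m\alpha_m\tilde{\pi}^o_m$ is unique, forcing $\alpha_m=p_m$ for all $m$.

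The step I expect to be the main obstacle is justifying that the optimum is the weighted \emph{mean} rather than the \emph{mode}. If $\|\cdot\|$ is taken literally as a metric (total variation, $\ell_1$, or un-squared $\ell_2$), then $f$ is positively homogeneous in the deviations $\hat{\pi}-\tilde{\pi}^o_m$ and its minimizer is a winner-take-all / geometric-median solution concentrated on the most probable IOP, not $\bm{\alpha}=\bm{p}$; already at $M=2$ a direct computation gives $f(\alpha_0)=\|\tilde{\pi}^o_0-\tilde{\pi}^o_1\|\,[p_0+\alpha_0(1-2p_0)]$, whose minimizer is $\alpha_0\in\{0,1\}$ according to whether $p_0\gtrless 1/2$. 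Hence the crux of a correct proof is to pin the divergence down to a squared norm or Bregman divergence (for which the barycenter is the arithmetic mean) and to establish strict convexity so that the optimum is simultaneously the mean and unique. I would therefore state this divergence assumption explicitly at the outset and then carry the convexity argument above through to the two-sided conclusion.
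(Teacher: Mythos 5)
Your route is genuinely different from the paper's, and your ``main obstacle'' paragraph is not merely an obstacle --- it is a refutation of the paper's own argument. The paper proves the lemma by a sandwich: since $\eta_M$ is a negative distance, $\mathbb{E}[\eta_M]\leq 0$; it then claims that at $\alpha_m=p_m$ one has $\mathbb{E}[\eta_M]\geq-\sum_{m=0}^{M-1}\|p_m\tilde{\pi}^o_m-p_m\tilde{\pi}^o_m\|=0$, so the maximum (zero) is attained exactly there. The intermediate step --- bounding $-\sum_{m}p_m\bigl\|\sum_{i}\alpha_i\tilde{\pi}^o_i-\tilde{\pi}^o_m\bigr\|$ from below by $-\sum_{m}\bigl\|\alpha_m\tilde{\pi}^o_m-p_m\tilde{\pi}^o_m\bigr\|$ (reading the paper's stray subscript $\alpha_j$ charitably as $\alpha_m$) --- is invalid: the mixture $\sum_i p_i\tilde{\pi}^o_i$ does not equal each individual $\tilde{\pi}^o_m$, so at $\alpha=p$ the true value $-\sum_m p_m\bigl\|\sum_i p_i\tilde{\pi}^o_i-\tilde{\pi}^o_m\bigr\|$ is strictly negative whenever the IOPs are distinct and $p$ is non-degenerate, not zero as the paper asserts. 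Your $M=2$ computation, $f(\alpha_0)=\|\tilde{\pi}^o_0-\tilde{\pi}^o_1\|\,[p_0+\alpha_0(1-2p_0)]$ with minimizer $\alpha_0\in\{0,1\}$, is precisely the counterexample: for a literal metric the optimal mixture is winner-take-all on the most probable IOP, so the ``if and only if'' claim is false as stated, and no proof in the paper's generality can succeed.

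Your repair --- recasting the problem as a weighted barycenter over the simplex, pinning $\|\cdot\|$ down to a squared Hilbert norm or a Bregman/KL divergence so that the bias--variance identity makes $\bar{\pi}=\sum_m p_m\tilde{\pi}^o_m$ the unique minimizer, and then invoking affine independence of the IOPs to convert uniqueness of the barycenter into uniqueness of the weights --- is correct and is what a sound version of this lemma requires. Two things to make explicit when you write it up. First, you are proving an amended statement, not the paper's: the hypothesis that the ``distance'' is a squared norm or Bregman divergence must be added to the lemma, and the ``only if'' direction additionally needs affine independence of $\{\tilde{\pi}^o_0,\ldots,\tilde{\pi}^o_{M-1}\}$, since otherwise weights other than $p$ can represent the same mixture $\bar{\pi}$. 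Second, your KL/cross-entropy variant is the most natural choice in context: the weights $\alpha_m$ in the paper are Bayesian posteriors, and Theorem~\ref{the:mix-benefit}'s claim that $\alpha_m\to p_m$ is optimal is exactly consistency of the posterior mixture under log-loss, so your version supplies the statement that theorem actually needs.
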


\begin{theorem}\label{the:mix-benefit}
Given the action trajectory of opponent $\{a^o_0,a^o_1,\ldots,a^o_t\}$, the posterior probability updated by Bayesian mixing approximates the true probability of opponent as the length of the trajectory grows, \textit{i.e.},

\begin{equation*}
    P(m|a^o_t,\cdots,a^o_1,a^o_0)\to P_{\operatorname{true}}(m)\doteq p_m, \quad t\to \infty.
\end{equation*}

Then the maximum expectation of $\eta_M$ can be achieved with $\alpha_m=\mathbb{E}[P(m|a^o_t,\cdots,a^o_1,a^o_0)]$.
\end{theorem}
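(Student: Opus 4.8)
The plan is to prove the theorem in two stages matching its two assertions: posterior consistency of the Bayesian update, and then optimality of the resulting weights via Lemma \ref{lem:max-benefit}. For the consistency stage I would make the hypothesis of Lemma \ref{lem:max-benefit} concrete as a latent-variable generative process: a hidden index $m^\star$ is drawn once with $\mathrm{Pr}\{m^\star=m\}=p_m$, and conditioned on $m^\star$ the observed opponent actions $a^o_0,a^o_1,\dots$ are produced by $\tilde{\pi}^o_{m^\star}$ along the visited states. Unrolling the single-step rule \eqref{eq:bayes} with the previous posterior playing the role of the prior yields the closed form
\[
P(m\mid a^o_{0:t}) = \frac{P_0(m)\prod_{\tau=0}^{t}\tilde{\pi}^o_m(a^o_\tau\mid s_\tau;\phi_m)}{\sum_{i=0}^{M-1}P_0(i)\prod_{\tau=0}^{t}\tilde{\pi}^o_i(a^o_\tau\mid s_\tau;\phi_i)},
\]
where $P_0$ is the uniform initial prior. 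I would then study the normalized log-likelihood ratio $\frac{1}{t}\log\frac{P(m\mid a^o_{0:t})}{P(m^\star\mid a^o_{0:t})}$ between any competitor $m$ and the true index $m^\star$. After cancelling the constant priors this is an average of per-step log-ratios $\log\big(\tilde{\pi}^o_m(a^o_\tau\mid s_\tau)/\tilde{\pi}^o_{m^\star}(a^o_\tau\mid s_\tau)\big)$, whose expectation under the stationary state-action distribution $d(s,a^o)$ equals $-\mathbb{E}_{s}\,D_{\mathrm{KL}}\!\big(\tilde{\pi}^o_{m^\star}(\cdot\mid s)\,\|\,\tilde{\pi}^o_m(\cdot\mid s)\big)$, which is strictly negative once the IOPs are distinguishable. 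An ergodic law of large numbers then drives the ratio to $-\infty$, so $P(m\mid a^o_{0:t})\to 0$ for every $m\neq m^\star$ and $P(m^\star\mid a^o_{0:t})\to 1$ almost surely; equivalently the posterior converges to the indicator $\mathbf{1}[m=m^\star]$.

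For the second stage I would take the expectation over the random draw of $m^\star$. Since each posterior coordinate lies in $[0,1]$, dominated convergence gives $\mathbb{E}\big[P(m\mid a^o_{0:t})\big]\to \mathbb{E}\big[\mathbf{1}[m=m^\star]\big]=\mathrm{Pr}\{m^\star=m\}=p_m$, which is precisely $P_{\operatorname{true}}(m)$ and establishes the displayed convergence. Setting $\alpha_m=\mathbb{E}\big[P(m\mid a^o_{0:t})\big]$ therefore yields $\alpha_m\to p_m$, and invoking Lemma \ref{lem:max-benefit}, whose characterizing condition is exactly $\alpha_m=p_m$, shows that the maximum expectation of $\eta_M$ is attained in the limit, completing the argument.

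I expect the consistency step, not the bookkeeping, to be the main obstacle, and it carries two delicate points. First is identifiability: if two IOPs induce the same conditional action distribution on the support of $d$, the corresponding KL term vanishes and the posterior cannot separate them, so either I state a distinguishability assumption or interpret the convergence target modulo such ties. Second is the non-i.i.d. structure of the data, since the actions are conditioned on a correlated state process; the plain strong law does not apply, and I would instead invoke an ergodic theorem for the Markov chain induced jointly by $\pi$ and $\tilde{\pi}^o_{m^\star}$, together with positivity of its stationary distribution, to guarantee that the per-step KL terms average to their stationary mean. A final gap is reconciling the idealized fixed-prior unrolling above with the decayed moving-average prior actually used in \eqref{eq:mixture_weights}; I would handle this by noting that the decay only reweights a finite prefix and does not affect the exponential separation that drives the limit.
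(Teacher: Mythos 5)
Your proposal is correct, and its skeleton --- unroll the single-step Bayes rule \eqref{eq:bayes} into a telescoped posterior, establish convergence, then invoke Lemma \ref{lem:max-benefit} --- is the same as the paper's. The comparison is still worth making, because the paper's own proof contains no convergence argument at all: after writing the telescoped posterior it simply declares that ``obviously, with sufficient samples'' $P(m|a^o_t,\cdots,a^o_0)\to p_m$, and moves on. What you supply is precisely the missing content: the latent-index generative model that makes the hypothesis of Lemma \ref{lem:max-benefit} operational, the log-likelihood-ratio analysis whose per-step mean is a negative KL divergence, and the ergodic law of large numbers needed because the actions are conditioned on a correlated state process rather than i.i.d. You also quietly repair a real imprecision in the paper's statement and proof: under the latent-index model, the posterior itself converges almost surely to the indicator $\mathbf{1}[m=m^\star]$, a random quantity, and \emph{not} to $p_m$; it is only the expectation over the draw of $m^\star$ that converges to $p_m$. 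The paper conflates these (it asserts the pointwise limit is $p_m$ and then says this ``can also be expressed as'' the expectation statement), whereas your dominated-convergence step is the legitimate bridge between the two, and it is what makes the final appeal to Lemma \ref{lem:max-benefit} valid, since that lemma requires the deterministic equality $\alpha_m=p_m$. Your flagged caveats --- distinguishability of the IOPs on the support of $d(s,a^o)$, ergodicity and positivity of the induced state-action chain, and the mismatch between the idealized fixed prior and the decayed moving-average prior actually used in \eqref{eq:mixture_weights} --- are assumptions the paper makes silently; stating them explicitly strengthens rather than weakens the argument.
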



According to Theorem \ref{the:mix-benefit}, we know that the estimated probability distribution of the opponent converges to the true distribution of the opponent. It is obvious that larger $M$ improves the representation capability of IOPs and thus better satisfies the assumption in Lemma \ref{lem:max-benefit}, but also increases the error bound in Theorem \ref{the:error-bound}. Therefore, the selection of $M$ is a tradeoff between them (see Appendix~\ref{app:ablation} for empirically study on $M$).







\section{Experiments}

We first evaluate MBOM thoroughly in two-player zero-sum tasks. Then, we investigate MBOM when against multiple opponents and in a cooperative task. In all the experiments, the baselines have the same neural network architectures as MBOM. All the methods are trained for five runs with different random seeds, and results are presented using mean and 95\% confidence intervals. More details about experimental settings and hyperparameters are available in Appendix~\ref{app:settings}.

\subsection{Two-Player Zero-Sum Tasks}

We evaluate MBOM in two competitive tasks: (1) \textbf{Triangle Game} is an asymmetric zero-sum game implemented on Multi-Agent Particle Environments (MPE) \citep{lowe2017multi}. When facing different policies of the opponent, the agent has to adjust its policy to adapt to the opponent for higher reward. (2) \textbf{One-on-One} is a two-player competitive game implemented on Google Research Football  \citep{kurach2020google}. The goalkeeper controlled by the agent could only passively react to the strategies of the shooter controlled by the opponent and makes policy adaptation when the shooter strategy changes.

\textbf{Baselines.} In the experiments, we compare {MBOM} with the following methods:  
\vspace{-2mm}
\begin{itemize}[leftmargin = 2em]
	\setlength\itemsep{0em}
	\item {LOLA-DiCE} \citep{foerster2018dice} is an expansion of the LOLA, which uses Differentiable Monte-Carlo Estimator (DiCE) operation to consider how to shape the learning dynamics of other agents. 
	\item {Meta-PG} \citep{shedivat2017continuous} uses trajectories from the current opponents to do multiple meta-gradient steps and construct a policy that is good for the updated opponents.
	\item {Meta-MAPG} \citep{kim2021policy} includes an additional term that accounts for the impact of the agent’s current policy on the future policies of opponents, compared with Meta-PG. 
	\item {PPO} \citep{schulman2017proximal} is a classical single-agent RL algorithm, without any other modules.
\end{itemize}
\vspace{-1mm}

\textbf{Opponents.} We construct three types of opponents:
\vspace{-2mm}
\begin{itemize}[leftmargin = 2em]
	\setlength\itemsep{0em}
	\item {Fixed policy.} The opponents are pre-trained but not updated during interaction.
	\item {Na\"ive learner.} The opponents are pre-trained and updated using PPO during interaction.
	\item {Reasoning learner.} The opponents are pre-trained and can model the behavior of the agent. The model is finetuned during interaction and their policy is conditioned on the predicted action of the model. 
\end{itemize}
\vspace{-1mm}

\begin{figure}[t]
    \centering
	\begin{subfigure}{.32\textwidth}
	    \centering
	    \setlength{\abovecaptionskip}{2pt}
        \includegraphics[width=1\textwidth]{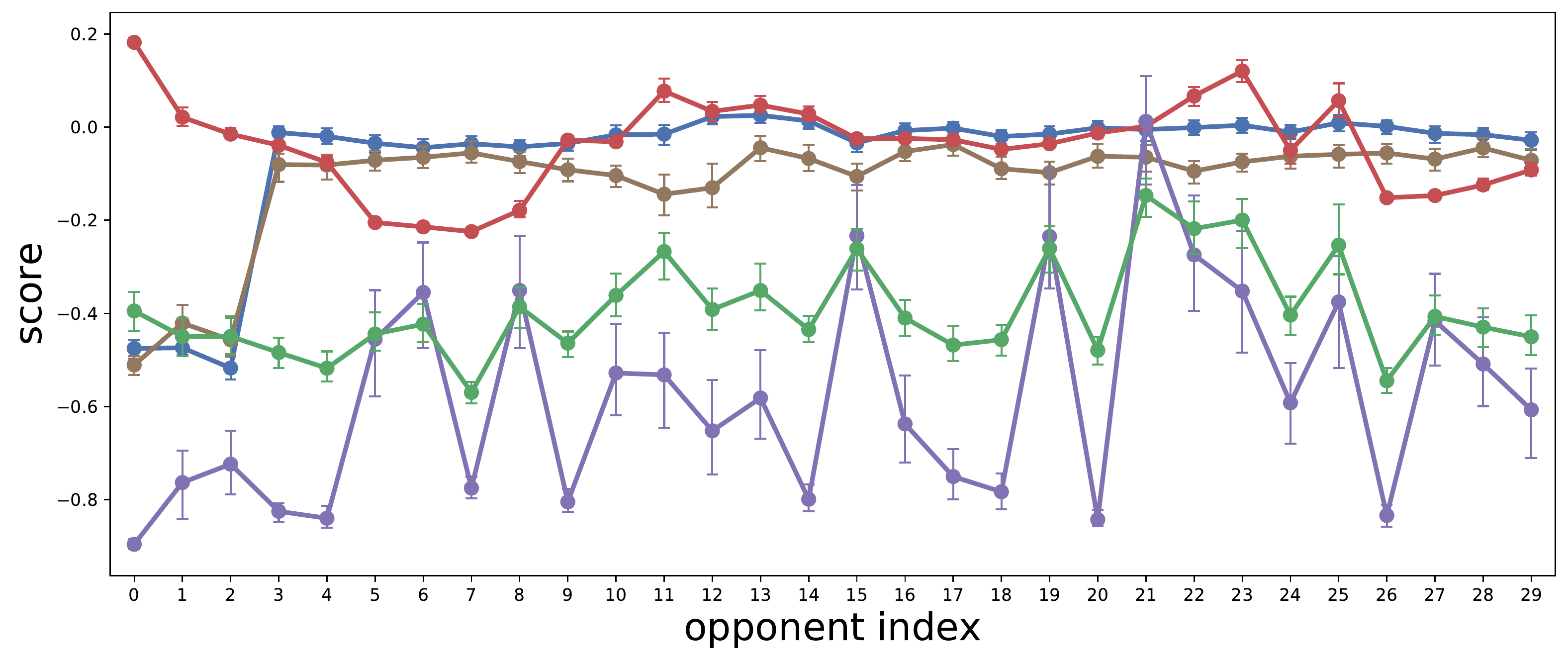}
        \caption{Triangle Game, versus fixed policy}
		\label{fig:curve_trigame_0}
    \end{subfigure}
    \hspace{0.1cm}
    \begin{subfigure}{.32\textwidth}
        \centering
	    \setlength{\abovecaptionskip}{2pt}
        \includegraphics[width=1\textwidth]{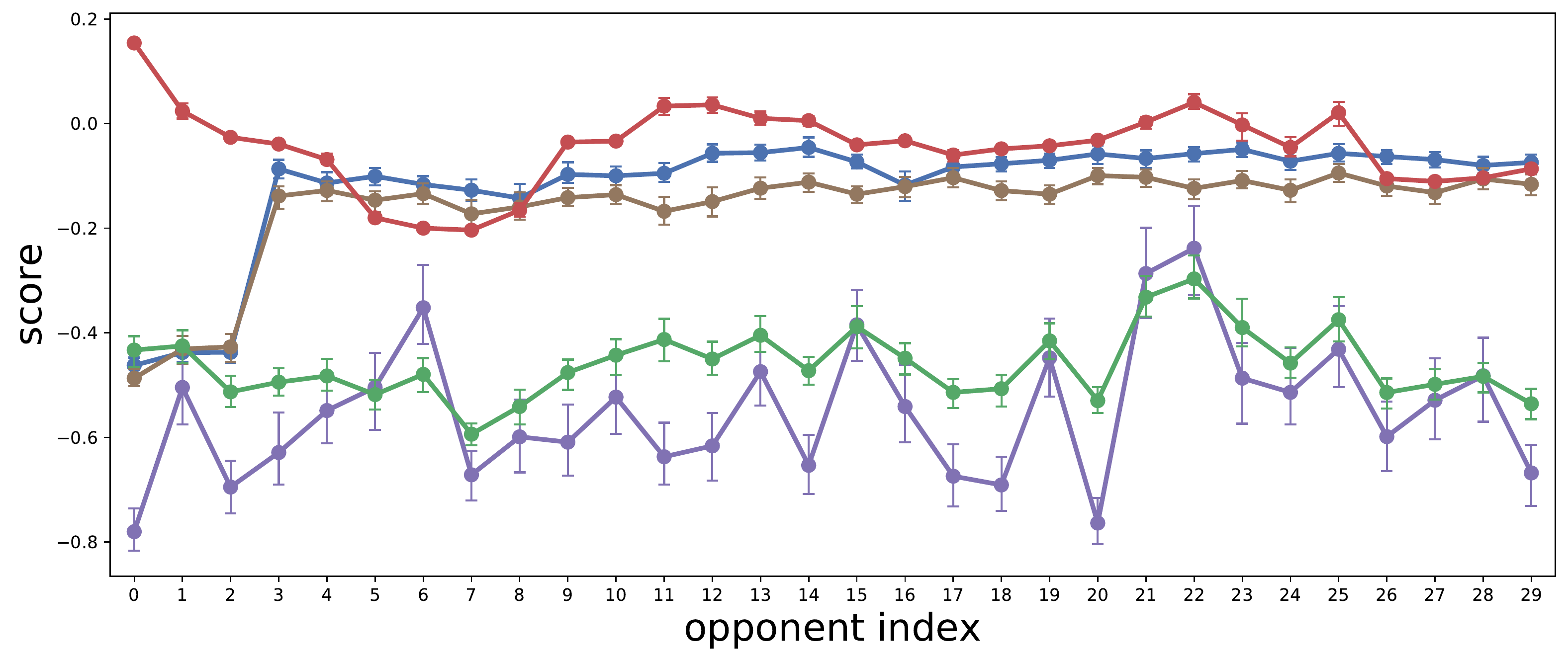}
        \caption{Triangle Game, versus na\"ive learner}
		\label{fig:curve_trigame_1}
    \end{subfigure}
    \hspace{0.1cm}
    \begin{subfigure}{.32\textwidth}
    	\centering
	    \setlength{\abovecaptionskip}{2pt}
        \includegraphics[width=1\textwidth]{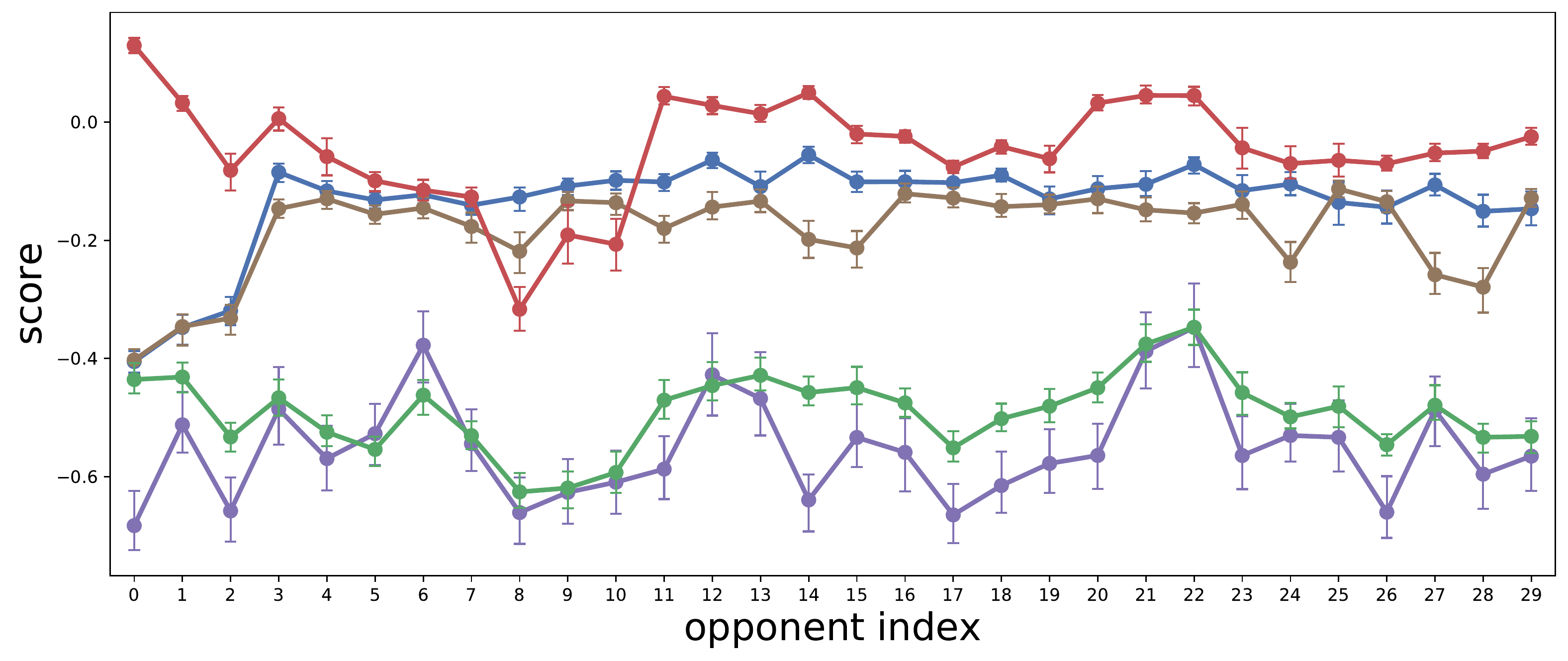}
        \caption{Triangle Game, versus reasoning learner}
		\label{fig:curve_trigame_2}
    \end{subfigure}
    \\\vspace{0.15cm}
    \begin{subfigure}{.32\textwidth}
    	\centering
	    \setlength{\abovecaptionskip}{2pt}
        \includegraphics[width=1\textwidth]{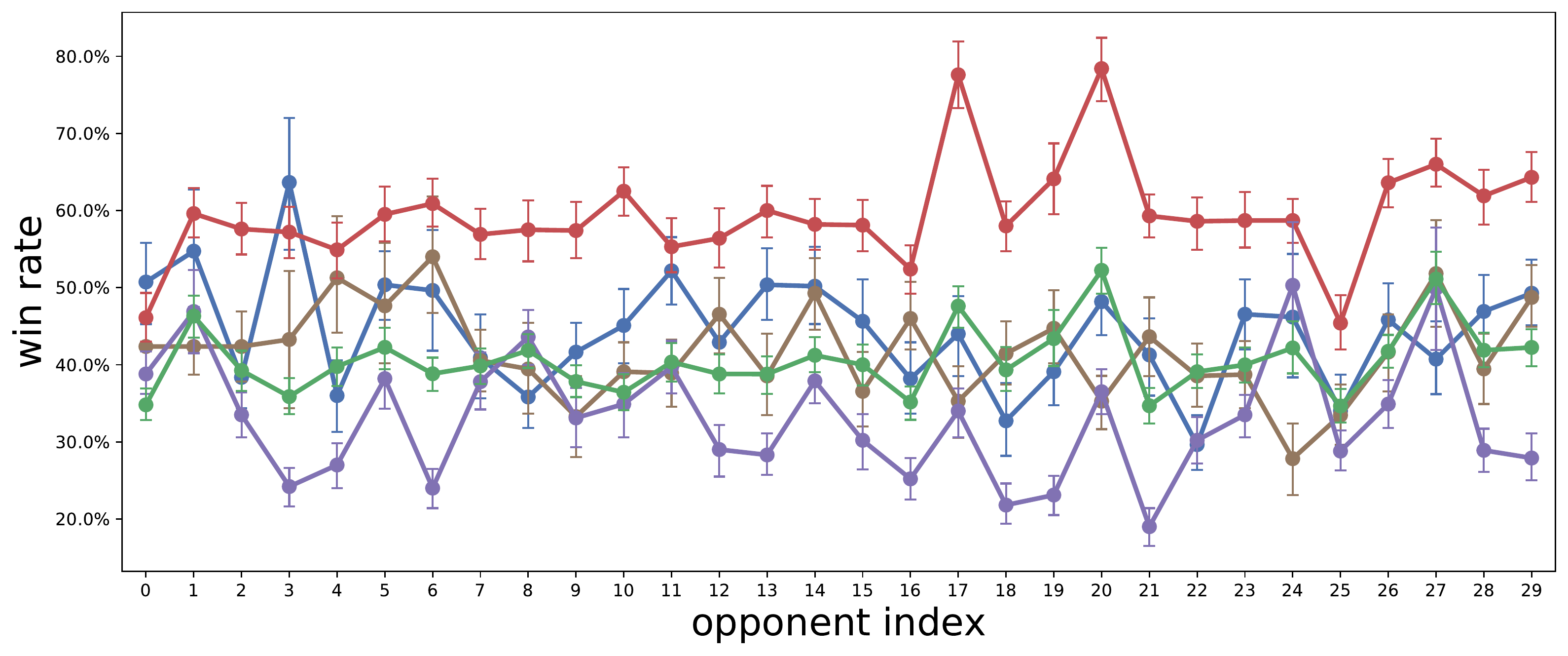}
        \caption{One-on-One, versus fixed policy}
		\label{fig:curve_oneonone_0}
    \end{subfigure}
    \hspace{0.1cm}
    \begin{subfigure}{.32\textwidth}
    	\centering
	    \setlength{\abovecaptionskip}{2pt}
        \includegraphics[width=1\textwidth]{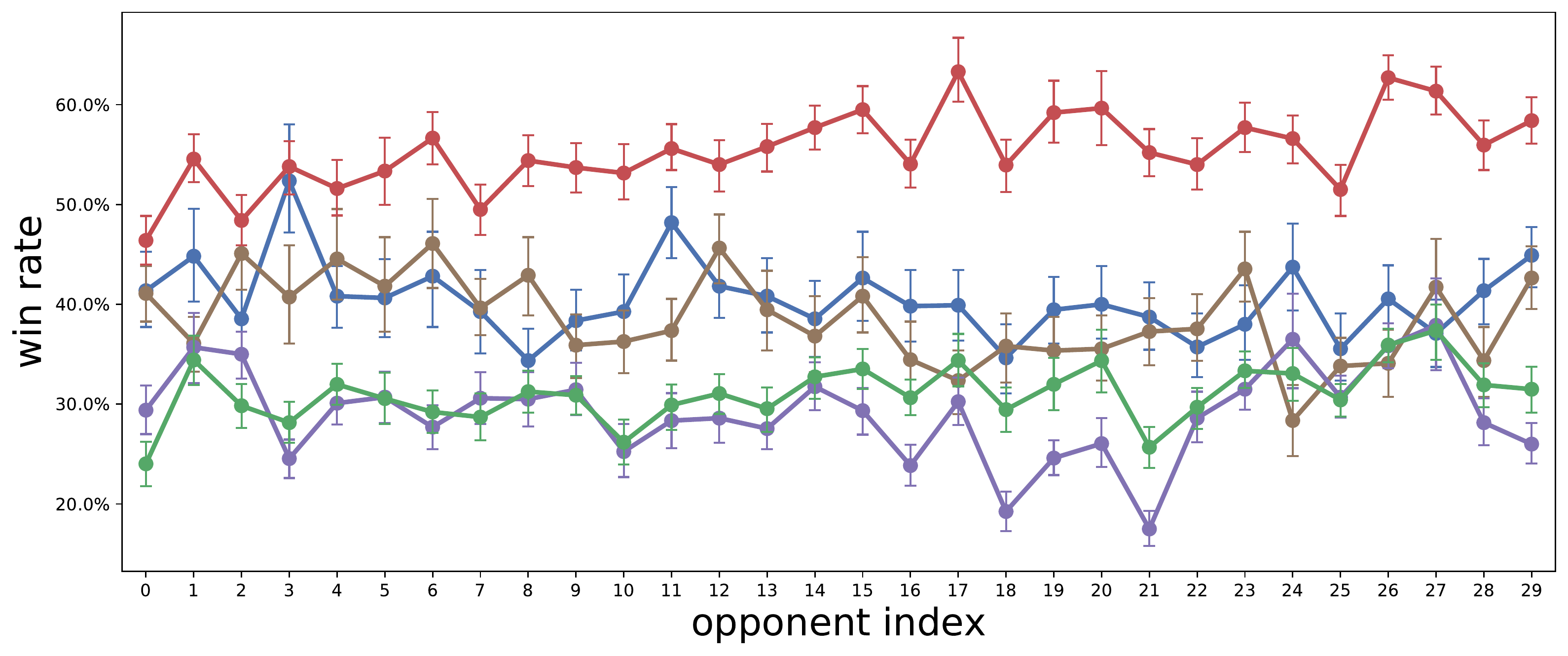}
        \caption{One-on-One, versus na\"ive learner}
		\label{fig:curve_oneonone_1}
    \end{subfigure}
    \hspace{0.1cm}
    \begin{subfigure}{.32\textwidth}
    	\centering
	    \setlength{\abovecaptionskip}{2pt}
        \includegraphics[width=1\textwidth]{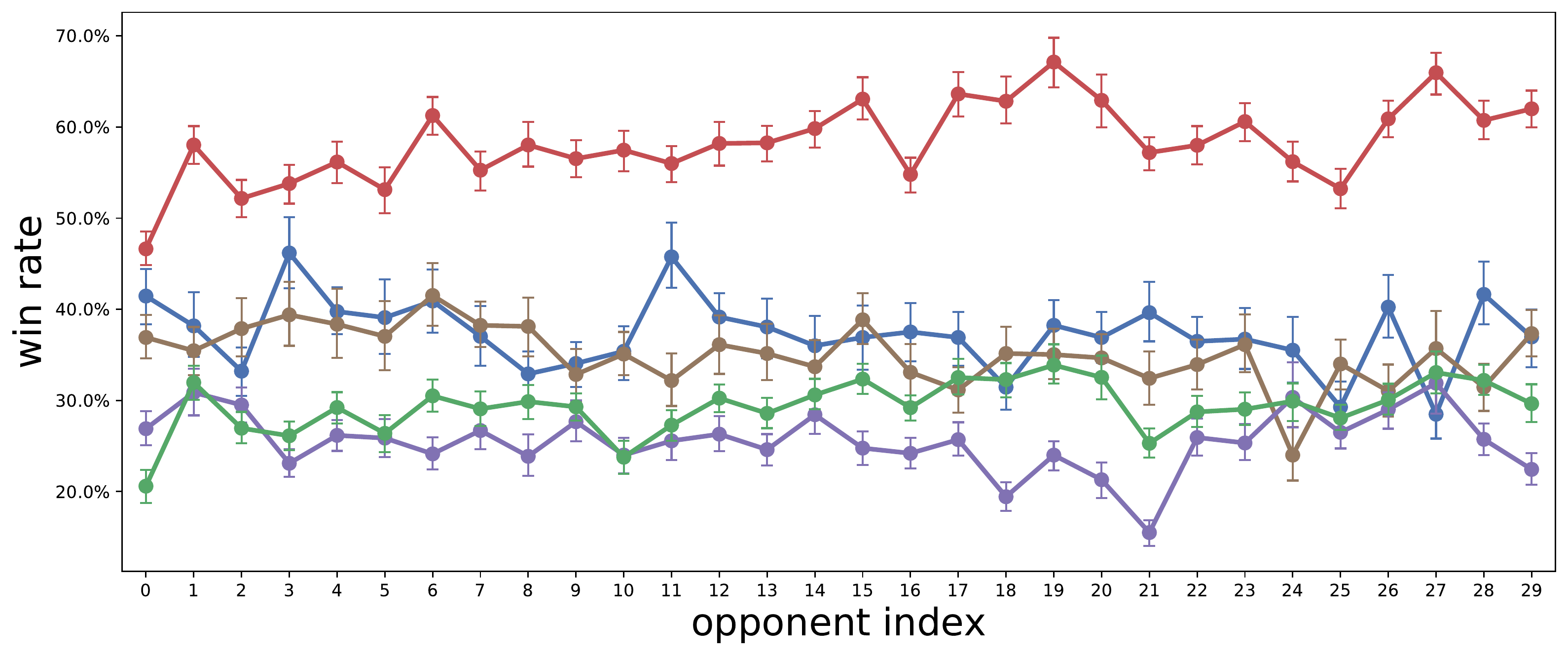}
        \caption{One-on-One, versus reasoning learner}
		\label{fig:curve_oneonone_2}
    \end{subfigure}

    \vspace{1.5mm}
    \begin{small}
    \textcolor[RGB]{129,114,179}{$\bullet\ \ $}LOLA-DiCE\ \ \ \ \ \ \
    \textcolor[RGB]{147,120,96}{$\bullet\ \ $}Meta-PG\ \ \ \ \ \ \ 
    \textcolor[RGB]{76,114,176}{$\bullet\ \ $}Meta-MAPG\ \ \ \ \ \ \
    \textcolor[RGB]{85,168,104}{$\bullet\ \ $}PPO\ \ \ \ \ \ \ 
    \textcolor[RGB]{196,78,82}{$\bullet\ \ $}MBOM\ \ 
    \end{small}
    \caption{Performance against different types of opponents, \textit{i.e.}, fixed policy, na\"ive learner, and reasoning learner, where \textit{x}-axis is opponent index. The results show that MBOM outperforms other baselines, especially against na\"ive learner and reasoning learner.}
    \label{fig:performance}
    \vspace*{-3mm}
\end{figure}

\renewcommand\tabcolsep{2.0pt}
\newcommand{\meanstd}[2]{\begin{tabular}{c}{$#1$ $\pm$ $#2$}\end{tabular}}
\newcommand{\textincell}[2]{\begin{tabular}{@{}#1@{}}#2\end{tabular}}
\begin{table*}[t]
    \renewcommand\arraystretch{1}
    \centering
    \caption{Performance on Triangle Game and One-on-One}
    \label{table:performance}
    \vskip -2mm
    \begin{adjustbox}{width=1.0\textwidth}
    \begin{tabular}{@{}l||rrr||rrr@{}}
    \toprule
    \multirow{3}{*}{Methods}
     & \multicolumn{3}{c||}{Triangle Game (score $\uparrow$)} & \multicolumn{3}{c}{One-on-One (win rate \%)} \\ \cmidrule(lr){2-4}\cmidrule(lr){5-7} 
     & Fixed Policy & Na\"ive Learner & Reasoning Learner & Fixed Policy & Na\"ive Learner & Reasoning Learner \\ \midrule
    LOLA-DiCE & \meanstd{-22.51}{5.22}  & \meanstd{-20.48}{4.02} & \meanstd{-21.55}{4.43} &  \meanstd{33.0}{0.5} &  \meanstd{25.2}{1.2} &  \meanstd{18.4}{0.9} \\ 
    Meta-PG & \meanstd{\ \ -3.78}{0.13} & \meanstd{\ \ -6.72}{0.18} & \meanstd{\ \ -8.35}{1.87} &  \meanstd{41.7}{0.7} &  \meanstd{35.4}{2.5} & \meanstd{28.0}{0.6} \\ 
    Meta-MAPG & \meanstd{\ \ -2.01}{0.06} & \meanstd{\ \ -5.76}{0.29} & \meanstd{\ \ -6.14}{0.84} &  \meanstd{44.4}{1.5} &  \meanstd{36.6}{1.6} & \meanstd{31.1}{1.9} \\
    PPO & \meanstd{-13.29}{1.80} & \meanstd{-18.42}{1.28} & \meanstd{-20.51}{1.80} & \meanstd{40.6}{0.7} &  \meanstd{21.5}{0.7} & \meanstd{25.8}{1.0} \\
    MBOM & \meanstd{\ \ \bm{-1.19}}{\bm{0.03}} & \meanstd{\bm{\ \ -1.66}}{\bm{0.29}} & \meanstd{\bm{\ \ -2.75}}{\bm{0.89}} &  \meanstd{\bm{59.5}}{\bm{0.9}} &  \meanstd{\bm{51.3}}{\bm{1.0}} & \meanstd{\bm{64.2}}{\bm{1.8}} \\ \bottomrule
    \end{tabular}
    \end{adjustbox}
    \vspace{-2mm}
\end{table*}

\textbf{Performance.} The experimental results against test opponents in Triangle Game and One-on-One are shown in Figure~\ref{fig:performance}, and the mean performance with standard deviation over all test opponents is summarized in Table \ref{table:performance}. Without explicitly considering opponent policy, PPO achieves poor performance. The learning of LOLA-DiCE depends on the gradient of the opponent model. However, the gradient information cannot clearly reflect the distinctions between diverse opponents, which leads to that LOLA-DiCE cannot adapt to the unseen opponent quickly and effectively. Meta-PG and Meta-MAPG show mediocre performance in Triangle Game. In One-on-One, since Meta-PG and Meta-MAPG heavily rely on the reward signal, adaptation is difficult for the two methods in this sparse reward task. MBOM outperforms Meta-MAPG and Meta-PG, because the opponent model of MBOM improves the ability to adapt to different opponents. And the performance gain becomes more significant when the opponent is na\"ive learner or reasoning learner, which is attributed to the recursive reasoning capability brought by recursive imagination in the environment model and Bayesian mixing that quickly captures the learning of the opponent.

\textbf{Ablation Studies.} The opponent modeling module of MBOM is composed of both recursive imagination and Bayesian mixing. 
We respectively test the functionality of the two components.

In \textit{recursive imagination}, MBOM generates a sequence of different levels of IOPs finetuned by the imagined best response of the opponent in the environment model. For comparison, we use only $\phi_0$ as the opponent model (\textit{i.e.}, without recursive imagination), denoted as MBOM w/o IOPs, and use random actions to finetune the IOPs rather than using the best response, denoted as MBOM-BM. 
Note that MBOM w/o IOPs is essentially the same as the simple opponent modeling methods that predict opponents' actions, like \citep{he2016opponent,hong2018deep,grover2018learning}.
The experimental results of MBOM, MBOM w/o IOPs, and MBOM-BM are shown in Figure~\ref{fig:ablation_trigame_bm} and \ref{fig:ablation_football_bm}. MBOM w/o IOPs obtains similar results to MBOM when facing fixed policy opponents because $\phi_0$ could accurately predict the opponent behaviors if the opponent is fixed, which corroborates the empirical results in \citep{he2016opponent,hong2018deep,grover2018learning}. However, if the opponent is learning or reasoning, $\phi_0$ cannot accurately represent the opponent, thus the performance of MBOM w/o IOPs drops. The methods with IOPs perform better than MBOM w/o IOPs, which means that a variety of IOPs can capture the policy changes of learning opponents. When facing the reasoning learner, MBOM outperforms MBOM-BM, indicating that the IOPs finetuned by the imagined best response have a stronger ability to represent the reasoning learner.


In \textit{Bayesian mixing}, MBOM mixes the generated IOPs to obtain a policy that is close to the real opponent. As a comparison, we directly use the individual level-$m$ IOP generated by recursive imagination without mixing, denoted as MBOM-$\phi_m$, and use uniformly mixing instead of Bayesian mixing, denoted as MBOM-$\rm unif$. The experimental results are illustrated in Figure~\ref{fig:ablation_trigame_ri} and \ref{fig:ablation_football_ri}. Benefited from recursive imagination, MBOM-$\phi_1$ and MBOM-$\phi_2$ show stronger performance than MBOM-$\phi_0$. MBOM consistently outperforms the ablation baselines without mixing when fighting against reasoning learners. The middling performance of MBOM-$\rm unif$ is due to not exploiting the more representative IOPs, indicating that Bayesian mixing could obtain a more accurate estimate of reasoning opponents.

We additionally provide analysis on the weight $\bm{\alpha}$ and ablation studies on hyperparameters including the rollout length $k$ and the level of recursive imagination $M$, which are available in Appendix~\ref{app:alpha} and \ref{app:ablation}, respectively.

\begin{figure}[!t]
    \centering
    \begin{subfigure}{.23\textwidth}
	    \centering
	    \setlength{\abovecaptionskip}{2pt}
        \includegraphics[width=1\textwidth]{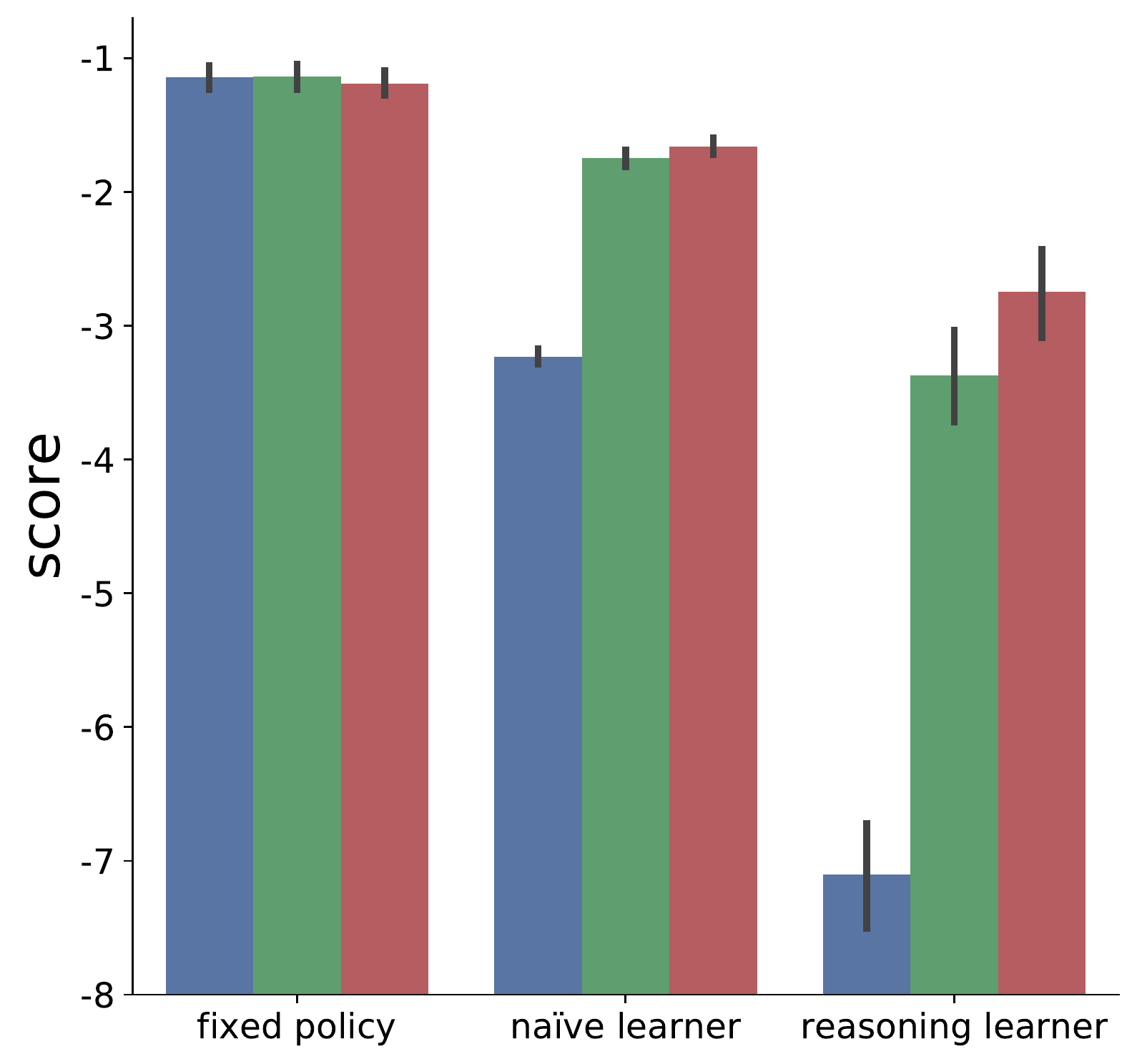}
        \caption{Triangle Game}
		\label{fig:ablation_trigame_bm}
    \end{subfigure}
    \begin{subfigure}{.23\textwidth}
	    \centering
	    \setlength{\abovecaptionskip}{2pt}
        \includegraphics[width=1\textwidth]{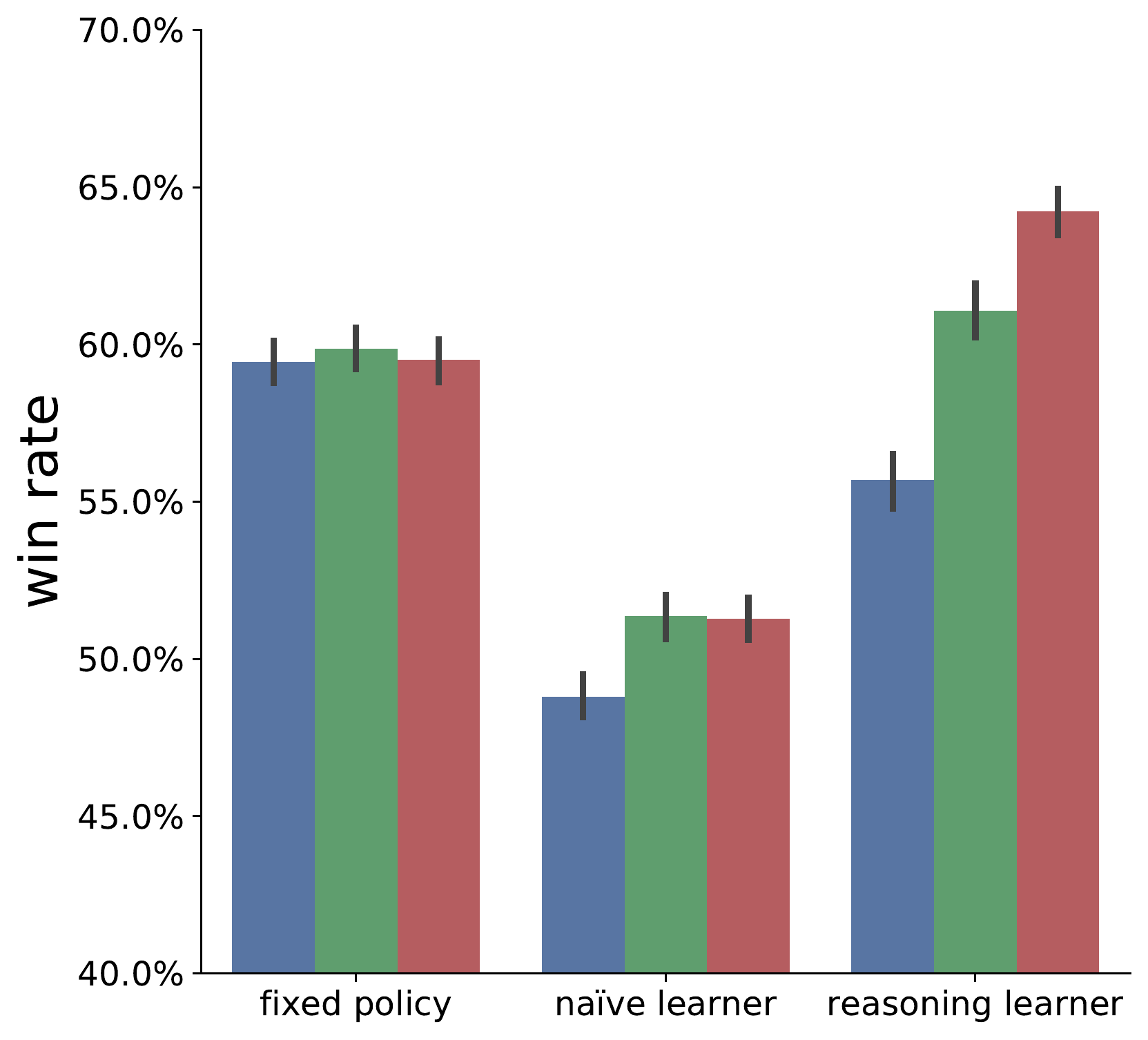}
        \caption{One-on-One}
		\label{fig:ablation_football_bm}
    \end{subfigure}
    \hspace{0.6cm}
    \begin{subfigure}{.23\textwidth}
	    \centering
	    \setlength{\abovecaptionskip}{2pt}
        \includegraphics[width=1\textwidth]{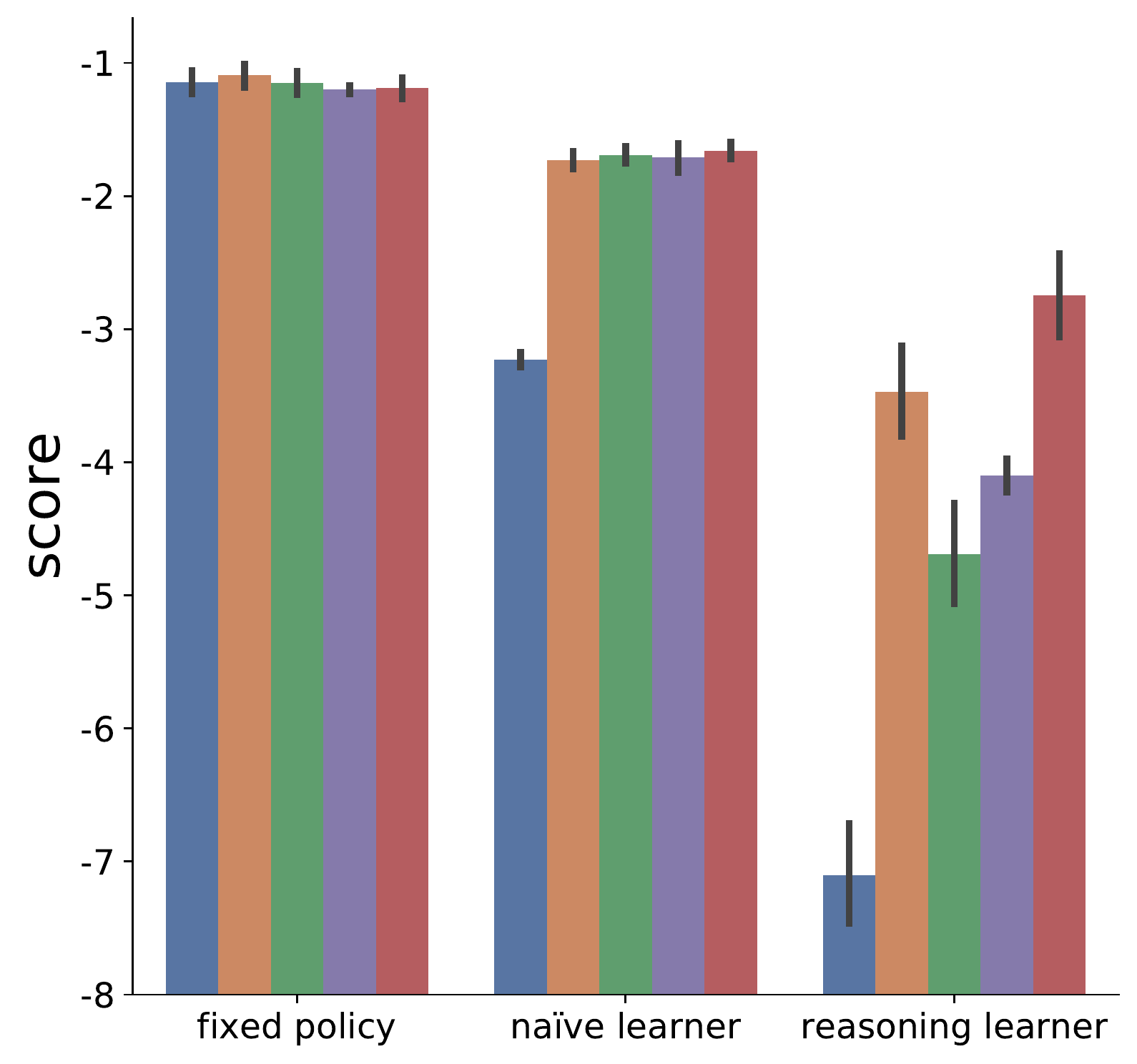}
        \caption{Triangle Game}
		\label{fig:ablation_trigame_ri}
    \end{subfigure}
    \begin{subfigure}{.23\textwidth}
	    \centering
	    \setlength{\abovecaptionskip}{2pt}
        \includegraphics[width=1\textwidth]{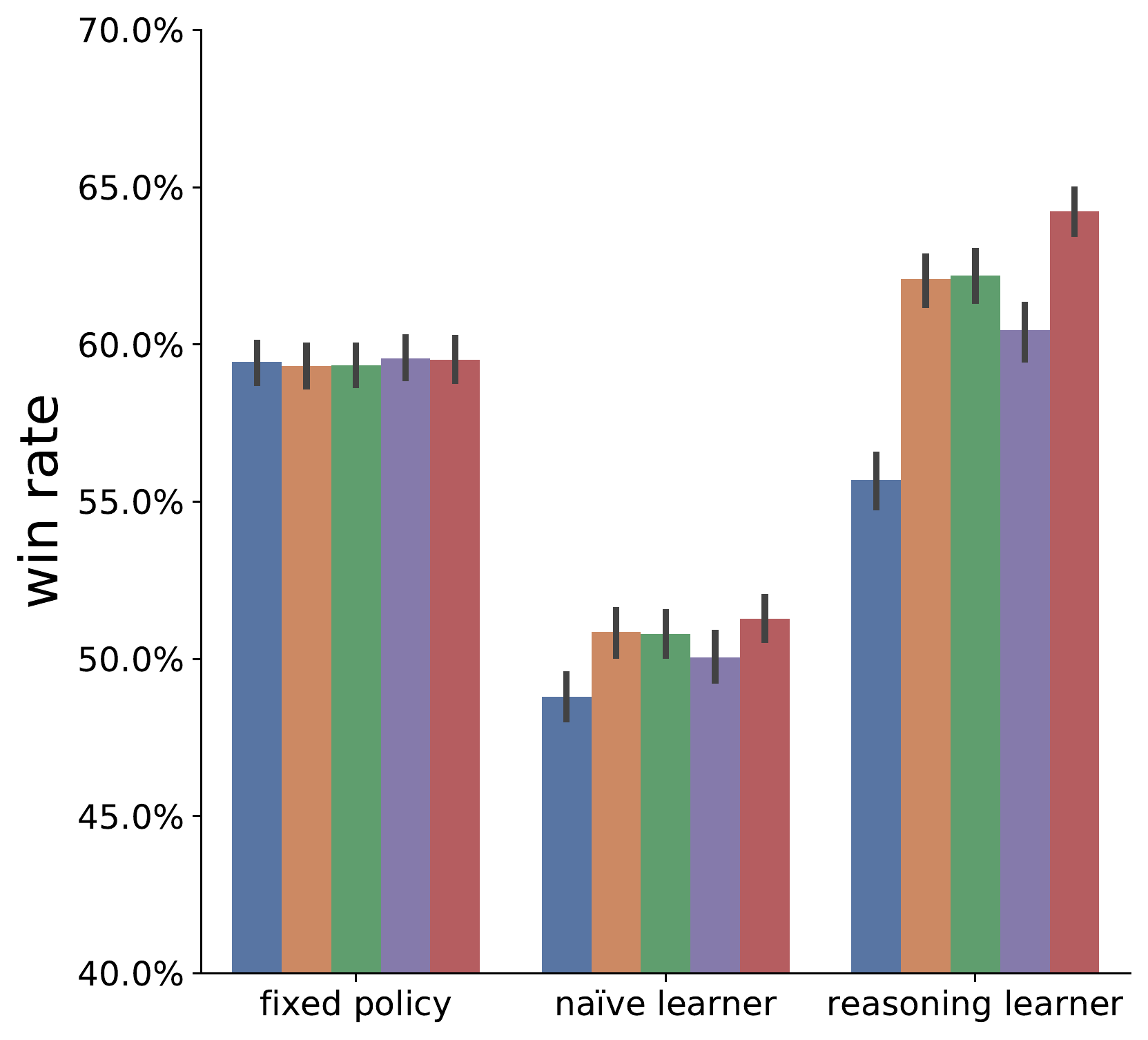}
        \caption{One-on-One}
		\label{fig:ablation_football_ri}
    \end{subfigure}
    \begin{minipage}{0.49\textwidth}\begin{tiny}\begin{center}
        \textcolor[RGB]{76,114,176}{$\blacksquare\ \ $}MBOM w/o IOPs\ \ \ \ \ \textcolor[RGB]{85,168,104}{$\blacksquare\ \ $}MBOM-BM\ \ \ \ \ \textcolor[RGB]{196,78,82}{$\blacksquare\ \ $}MBOM
    \end{center}\end{tiny}\end{minipage}
    \hfill
    \begin{minipage}{0.49\textwidth}\begin{tiny}\begin{center}
        \textcolor[RGB]{76,114,176}{$\blacksquare\ $}MBOM-$\phi_0$\ \ \ \textcolor[RGB]{221,132,82}{$\blacksquare\ $}MBOM-$\phi_1$\ \ \ \textcolor[RGB]{85,168,104}{$\blacksquare\ $}MBOM-$\phi_2$\ \ \ \textcolor[RGB]{133,122,171}{$\blacksquare\ $}MBOM-$\rm unif$\ \ \ \textcolor[RGB]{196,78,82}{$\blacksquare\ $}MBOM
    \end{center}\end{tiny}\end{minipage}
    
    \caption{Ablation study of MBOM. MBOM is compared with MBOM-BM and MBOM w/o IOPs on \textit{recursive imagination} in \subref{fig:ablation_trigame_bm} Triangle Game and \subref{fig:ablation_football_bm} One-on-One. MBOM is compared with MBOM-$\phi_0$, MBOM-$\phi_1$, MBOM-$\phi_2$, and MBOM-$\rm unif$ on \textit{Bayesian mixing} in \subref{fig:ablation_trigame_ri} Triangle Game and \subref{fig:ablation_football_ri} One-on-One.}
    \label{fig:ablation_bm}
    \vspace*{-1mm}
\end{figure}

\subsection{Multiple Opponents}

When facing multiple opponents, MBOM takes them as a joint opponent, which is consistent with the single-opponent case. We perform experiments in Predator-Prey \citep{lowe2017multi}. We control the prey as the agent and take the three predators as opponents. In this task, the agent tries not to be touched by the three opponents. When the policies of opponents are fixed (\textit{e.g.}, enveloping the agent), it is difficult for the agent to get high reward. However, if the policies of opponents change during interaction, there may be changes for the agent to escape.   
The results are shown in Figure \ref{fig:performance_predator_prey}. When facing learning opponents, MBOM achieves performance gain, especially competing with the reasoning learner, which indicates that MBOM adapts to the learning and reasoning of the opponents and responds effectively. Meta-MAPG and Meta-PG do not effectively adapt to the changes of opponents' policies and are prone to be touched by the opponents many times in a small area, resulting in poor performance. LOLA-DiCE, Meta-PG and Meta-MAPG underperforms PPO when against multiple reasoning learners, indicating their adaptation may induce a negative affect on the agent performance. Detailed experimental results with different opponents are available in Appendix~\ref{app:predator-prey}. 



\subsection{Cooperative Task}

MBOM could also be applied to cooperative tasks. We test MBOM on a cooperative scenario, Coin Game, which is a high-dimension expansion of the iterated prisoner dilemma with multi-step actions \citep{lerer2018maintaining, foerster2018learning}. Both agents simultaneously update their policies using MBOM or the baseline methods to maximize the sum of rewards. 
The experiment results are shown in Figure~\ref{fig:curve_coin_game}. Meta-PG and Meta-MAPG degenerate to Policy Gradients for this task as there is no training set. Both learn a greedy strategy that collects any color coin, which leads to a zero total score of two players. LOLA-DiCE learns too slow and does not learn to cooperate within 200 iterations, indicating the inefficiency of estimating the opponent gradients. 
PPO learns to cooperate quickly and successfully, which corroborates the good performance of independent PPO in cooperative multi-agent tasks as pointed out in \cite{deWitt2020independent,yu2021surprising}. MBOM slightly outperforms PPO, which indicates that MBOM can also be applied to cooperative tasks without negative effects. Note that we do not compare other cooperative MARL methods like QMIX \citep{QMIX}, as they require centralized training. 

\begin{figure}[!t]
    \centering
	\setlength{\abovecaptionskip}{3pt}
	\begin{subfigure}{.45\textwidth}
	    \centering
	    \setlength{\abovecaptionskip}{2pt}
        \includegraphics[width=1\textwidth]{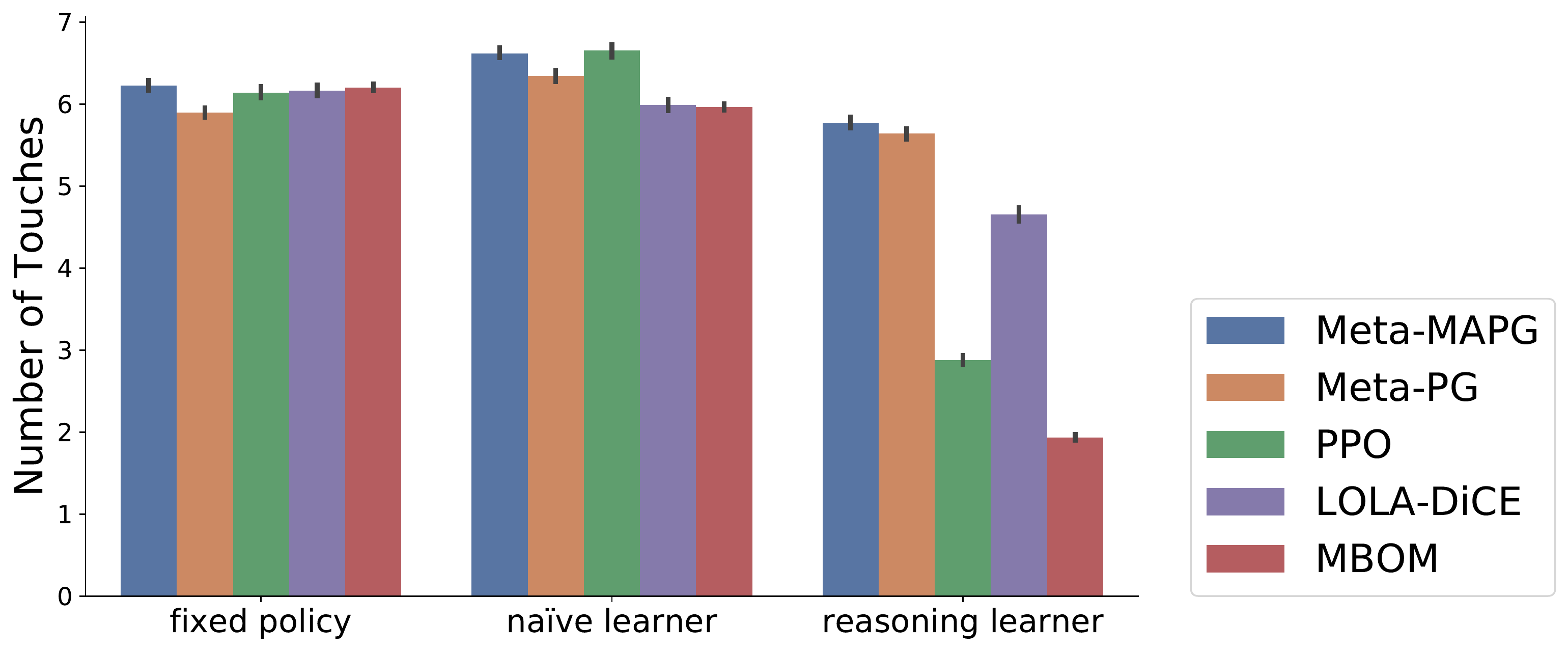}
        \caption{Predator-Prey}
		\label{fig:performance_predator_prey}
    \end{subfigure}
    \hspace{0.1cm}
    \hspace{0.1cm}
    \begin{subfigure}{.28\textwidth}
	    \centering
	    \setlength{\abovecaptionskip}{0pt}
        \includegraphics[width=1\textwidth]{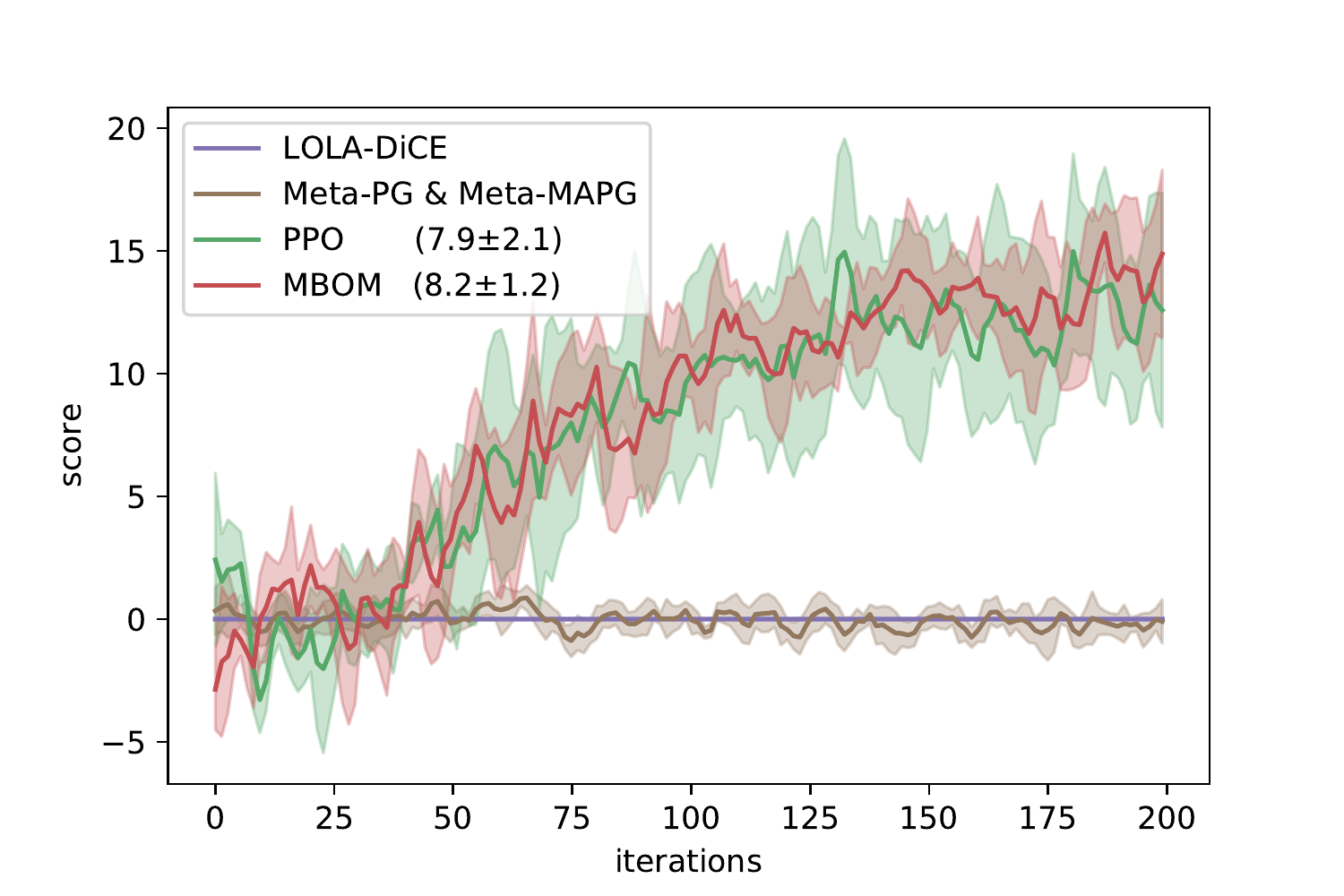}
        \caption{Learning curves on Coin Game}
		\label{fig:curve_coin_game}
    \end{subfigure}
    \caption{\subref{fig:performance_predator_prey} Performance against different types of opponents in Predator-Prey, where smaller touch number means better performance.
    \subref{fig:curve_coin_game} Learning curves in Coin Game.
}
    \label{fig:performance_coin_game}
	\vspace{-0.20cm}
\end{figure}

\section{Conclusion}

We have proposed MBOM, which employs recursive imagination and Bayesian mixing to predict and capture the learning and improvement of opponents. Empirically, we evaluated MBOM in a variety of competitive tasks and demonstrated MBOM adapts to learning and reasoning opponents much better than the baselines. These make MBOM a simple and effective RL method whether opponents be fixed, continuously learning, or reasoning in competitive environments. Moreover, MBOM can also be applied in cooperative tasks. 


\bibliography{mbom}
\bibliographystyle{plainnat}


\newpage
\appendix

\setcounter{lemma}{0}
\setcounter{theorem}{0}
\newpage

\section{Proofs}
\label{app:proof}

\begin{lemma}
For the mixed imaged opponent policy (IOP) $\tilde{\pi}_{\operatorname{mix}}^o(\cdot|s) = \sum\limits_{m=0}^{M-1}\alpha_m\tilde{\pi}^o_m(\cdot|s; \phi_m)$, the total error satisfies the following inequality:
\begin{equation*}\label{eq:total-bound-1}
    \varepsilon_{total}\leq\sum_{m=0}^{M-1}\delta_m\sum_{i=m}^{M-1}\alpha_i.
\end{equation*}
\begin{proof}
By the definitions of $\delta_m$ and $\varepsilon_m$, it is clear that 
\begin{align*}
\varepsilon_{m} &= |\widehat{V}_{m}-V| \leq |\widehat{V}_{m}-\widehat{V}_{m-1}| + \cdots + |\widehat{V}_{1}-\widehat{V}_{0}| + |\widehat{V}_0-V|\\
                & = \delta_{m} + \cdots + \delta_{1} + \delta_0
\end{align*}
Then, we can derive the following inequality,
\begin{align*}
\varepsilon_{total} & = \sum_{m=0}^{M-1}\alpha_m\varepsilon_m \leq \sum_{m=0}^{M-1}\alpha_m\sum_{k=0}^{m}\delta_{k}\\
 & = \sum_{m=0}^{M-1}\delta_m\sum_{i=m}^{M-1}\alpha_i
\end{align*}
\end{proof}
\end{lemma}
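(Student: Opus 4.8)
The plan is to bound each individual error $\varepsilon_m$ by a telescoping sum of the consecutive discrepancies $\delta_k$, then substitute this bound into $\varepsilon_{total}$ and interchange the order of the resulting double summation. The whole argument is elementary: it relies only on the definitions of $\delta_k$ and $\varepsilon_m$, the triangle inequality, and a rearrangement of finite sums.

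First I would insert a telescoping decomposition into $\widehat{V}_m - V$, writing it as $\sum_{k=1}^{m}(\widehat{V}_k - \widehat{V}_{k-1}) + (\widehat{V}_0 - V)$, and apply the triangle inequality. Since $\delta_k = |\widehat{V}_k - \widehat{V}_{k-1}|$ for $k \ge 1$ and $\delta_0 = \varepsilon_0 = |\widehat{V}_0 - V|$ by definition, this immediately gives $\varepsilon_m = |\widehat{V}_m - V| \le \sum_{k=0}^{m}\delta_k$. No property of the value functions is used beyond the definitions, so this step is essentially forced.

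Next I would form the weighted total error and combine the two previous observations: $\varepsilon_{total} = \sum_{m=0}^{M-1}\alpha_m \varepsilon_m \le \sum_{m=0}^{M-1}\alpha_m \sum_{k=0}^{m}\delta_k$. The remaining task is to swap the two finite sums. The double sum ranges over the triangular index set $\{(m,k) : 0 \le k \le m \le M-1\}$; collecting terms by the inner index $k$ yields $\sum_{k=0}^{M-1}\delta_k \sum_{m=k}^{M-1}\alpha_m$, which after renaming $k \mapsto m$ and $m \mapsto i$ is exactly the claimed bound $\sum_{m=0}^{M-1}\delta_m \sum_{i=m}^{M-1}\alpha_i$.

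The only step requiring any care is this interchange of summation order: one must verify that for a fixed $\delta_m$ the surviving weights are precisely $\alpha_m, \alpha_{m+1}, \dots, \alpha_{M-1}$, i.e. that the inner index runs from $i=m$ up to $M-1$ rather than over the complementary range. Since both sums are finite this reordering is unconditionally valid, so I do not anticipate a genuine obstacle; the proof is purely combinatorial once the telescoping bound on $\varepsilon_m$ is in place.
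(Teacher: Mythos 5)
Your proposal is correct and follows essentially the same route as the paper's proof: a telescoping decomposition of $\widehat{V}_m - V$ with the triangle inequality to get $\varepsilon_m \le \sum_{k=0}^{m}\delta_k$, followed by weighting with $\alpha_m$ and interchanging the order of the finite double sum. The only difference is presentational: you justify the summation swap explicitly, whereas the paper states it as a one-line equality.
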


\begin{lemma}
Suppose the value function is Lipschitz continuous on the state space $\mathcal{S}$, $K$ is the Lipschitz constant, $\widehat{\mathcal{M}}_i(s,a) \doteq \Gamma(s,a,a^o;\zeta) \cdot \tilde{\pi}^o_i(a^o|s;\phi_i)$ is the transition distribution given $s$ and $a$, then
\begin{equation*}
\left|\widehat{V}_i-\widehat{V}_{j}\right| \leq\frac{\gamma K}{1-\gamma} \cdot\underset{s,a\sim \pi,\widehat{\mathcal{M}}_j}{\mathbb{E}}\left\|\widehat{\mathcal{M}}_i(s, a)-\widehat{\mathcal{M}}_j(s, a)\right\|.
\end{equation*}
\begin{proof}
Lemma \ref{lem:obj-bound} is a directly cited theorem in \citep{algo2019luo}, and we make some modifications to fit our context.
\end{proof}
\end{lemma}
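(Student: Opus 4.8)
The plan is to prove this as a value-difference (simulation) bound between two single-agent MDPs that share the agent policy $\pi$ but differ only in their effective one-step kernels $\widehat{\mathcal{M}}_i$ and $\widehat{\mathcal{M}}_j$ (each obtained by marginalizing the opponent action out of $\Gamma$ against the corresponding IOP). Since $\widehat{V}_i$ and $\widehat{V}_j$ are the fixed points of the Bellman operators of these two kernels under $\pi$, I would first write the exact telescoping identity
\begin{equation*}
\widehat{V}_i - \widehat{V}_j = \gamma\,(I-\gamma P_j^{\pi})^{-1}(P_i^{\pi}-P_j^{\pi})\,\widehat{V}_i,
\end{equation*}
where $P_m^{\pi}$ is the state-to-state transition operator induced by $\pi$ and $\widehat{\mathcal{M}}_m$. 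This follows by inserting $\widehat{V}_i=(I-\gamma P_i^{\pi})^{-1}R^{\pi}$ and then substituting the Bellman relation $\widehat{V}_j=R^{\pi}+\gamma P_j^{\pi}\widehat{V}_j$ (treating the immediate reward as shared across models, or folding a model-dependent reward into a Lipschitz integrand as noted below). Expanding around model $j$ rather than model $i$ is deliberate: the operator $(I-\gamma P_j^{\pi})^{-1}$ is precisely what generates the discounted occupancy under $\pi$ and $\widehat{\mathcal{M}}_j$, which is the distribution $\mathbb{E}_{s,a\sim\pi,\widehat{\mathcal{M}}_j}$ appearing on the right-hand side of the target inequality.

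Next I would unfold the Neumann series $(I-\gamma P_j^{\pi})^{-1}=\sum_{t\ge 0}\gamma^t (P_j^{\pi})^t$, evaluate the identity at the start state, and recognize the resulting discounted weighted sum as $\tfrac{1}{1-\gamma}\,\mathbb{E}_{s,a\sim\pi,\widehat{\mathcal{M}}_j}[\,\cdot\,]$. The integrand is the one-step lookahead gap
\begin{equation*}
\underset{s'\sim\widehat{\mathcal{M}}_i(s,a)}{\mathbb{E}}\bigl[\widehat{V}_i(s')\bigr]-\underset{s'\sim\widehat{\mathcal{M}}_j(s,a)}{\mathbb{E}}\bigl[\widehat{V}_i(s')\bigr],
\end{equation*}
and applying the triangle inequality to move the absolute value inside the expectation yields the prefactor $\tfrac{\gamma}{1-\gamma}$ (the $\gamma$ coming from the single bootstrapping step, the $\tfrac{1}{1-\gamma}$ from the geometric sum).

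The decisive step is then to bound this inner gap by $K\,\|\widehat{\mathcal{M}}_i(s,a)-\widehat{\mathcal{M}}_j(s,a)\|$ using that $\widehat{V}_i$ is $K$-Lipschitz on $\mathcal{S}$. This is exactly the integral-probability-metric / Kantorovich--Rubinstein duality: the difference of expectations of a $K$-Lipschitz function under two distributions is controlled by $K$ times the Wasserstein-$1$ distance between them. Combining this with the $\tfrac{\gamma}{1-\gamma}$ factor gives the claimed $\tfrac{\gamma K}{1-\gamma}$ bound. I expect the main obstacle to be matching the metric $\|\cdot\|$ to the Lipschitz hypothesis: the inequality only holds verbatim if $\|\widehat{\mathcal{M}}_i-\widehat{\mathcal{M}}_j\|$ is read as (at least) the Wasserstein/IPM distance rather than, say, total variation, since a Lipschitz (unbounded) value function cannot be controlled by total variation alone. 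A secondary care point is the model-generated reward: because $\Gamma$ emits both $s'$ and $\bm r$, a fully faithful treatment replaces the integrand $\widehat{V}_i$ by $g(s',r)=r+\gamma\widehat{V}_i(s')$ over the joint $(s',r)$ variable, and one must argue that $g$ is Lipschitz with the same effective constant so the single constant $K$ and factor $\tfrac{\gamma}{1-\gamma}$ survive; this is a minor modification of the dynamics-only argument of \citep{algo2019luo}, which I would mirror.
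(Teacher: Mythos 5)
Your proposal is correct and follows essentially the same route as the paper, whose proof simply defers to the telescoping simulation lemma of \citep{algo2019luo}: your identity $\widehat{V}_i-\widehat{V}_j=\gamma\,(I-\gamma P_j^{\pi})^{-1}(P_i^{\pi}-P_j^{\pi})\,\widehat{V}_i$, the Neumann-series expansion into the discounted occupancy under $\pi$ and $\widehat{\mathcal{M}}_j$, and the Lipschitz/Kantorovich--Rubinstein step are exactly the ingredients of that cited argument, adapted to the MBOM setting. Your two care points --- that $\|\cdot\|$ must be read as a Wasserstein-type metric rather than total variation for a Lipschitz value function, and that the reward emitted by $\Gamma$ should be folded into the integrand since both models share $\Gamma$ but marginalize it against different IOPs --- are precisely the ``modifications to fit our context'' that the paper invokes without spelling out.
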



\begin{theorem}
Define the error between the approximated value function of the Bayesian mixing IOP and the true value function as $\varepsilon_{true}\doteq \left|\widehat{V}-V\right|$. For the Bayesian mixing IOP, the true error satisfies the following inequality:
\begin{equation*}\small
\varepsilon_{true} \leq \varepsilon_{total} \leq \sum\limits_{s,a^o}\frac{\gamma K d(s,a^o)}{1-\gamma} 
  \cdot \frac{\sum\limits_{m=0}^{M-1}\underset{s,a\sim\pi,\widehat{\mathcal{M}}_{m-1}}{\mathbb{E}}\left\|\widehat{\mathcal{M}}_m-\widehat{\mathcal{M}}_{m-1}\right\|\sum\limits_{i=m}^{M-1}\tilde{\pi}^o_i\left(a^{o} | s ; \phi_{i}\right) p(i)}{\sum\limits_{j=0}^{M-1}\tilde{\pi}^o_j\left(a^{o} | s ; \phi_{j}\right) p(j)}.
\end{equation*}
\begin{proof}

It is clear that

\begin{align*}
    \varepsilon_{true} & \doteq\left|\widehat{V}-V\right|=\left|\sum_{m=0}^{M-1}\alpha_m\widehat{V}_m-V\right|\\
    &\leq\sum_{m=0}^{M-1}\alpha_m\left|\widehat{V}_m - V\right|\\
    &=\sum_{m=0}^{M-1}\alpha_m\varepsilon_m=\varepsilon_{total}
\end{align*}

Then, we can derive the following inequality,

\begin{align*}
\varepsilon_{true}\leq\varepsilon_{total} & \leq \sum\limits_{m=0}^{M-1}\delta_m\sum\limits_{i=m}^{M-1}\alpha_i = \sum\limits_{m=0}^{M-1}\delta_m\sum\limits_{i=m}^{M-1}\sum\limits_{s,a^o}d(s,a^o) \frac{\tilde{\pi}^o_i\left(a^{o} | s ; \phi_{i}\right) p(i)}{\sum\limits_{j=0}^{M-1}\tilde{\pi}^o_j\left(a^{o} | s ; \phi_{j}\right) p(j)}\\
& = \sum\limits_{s,a^o}d(s,a^o)\frac{\sum\limits_{m=0}^{M-1}\delta_m\sum\limits_{i=m}^{M-1}\tilde{\pi}^o_i\left(a^{o} | s ; \phi_{i}\right) p(i)}{\sum\limits_{j=0}^{M-1}\tilde{\pi}^o_j\left(a^{o} | s ; \phi_{j}\right) p(j)}\\
& = \sum\limits_{s,a^o}\frac{\gamma K d(s,a^o)}{1-\gamma}
  \cdot \frac{\sum\limits_{m=0}^{M-1}\underset{s,a\sim\pi,\widehat{\mathcal{M}}_{m-1}}{\mathbb{E}}\left\|\widehat{\mathcal{M}}_m-\widehat{\mathcal{M}}_{m-1}\right\|\sum\limits_{i=m}^{M-1}\tilde{\pi}^o_i\left(a^{o} | s ; \phi_{i}\right) p(i)}{\sum\limits_{j=0}^{M-1}\tilde{\pi}^o_j\left(a^{o} | s ; \phi_{j}\right) p(j)},
\end{align*}
where $d(s,a_o)$ is the stationary distribution of pairs of state and action of opponent and $\widehat{\mathcal{M}}_{-1}$ denotes $\mathcal{P}(\cdot|s,a,a^o) \cdot \pi^o(a^o|s)$.
\end{proof}
\end{theorem}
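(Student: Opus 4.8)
The plan is to treat Theorem \ref{the:error-bound} as essentially algebraic bookkeeping layered on top of the two preceding lemmas: Lemma \ref{lem:total-error} converts the total error into a $\delta_m$-expansion weighted by partial sums of the $\alpha_i$, Lemma \ref{lem:obj-bound} bounds each $\delta_m$ by a transition-distribution discrepancy, and the explicit Bayesian form of $\alpha_m$ from \eqref{eq:phi_mix} and \eqref{eq:bayes} glues the two together. First I would establish the left inequality $\varepsilon_{true}\le\varepsilon_{total}$. Since the value of the Bayesian mixing IOP is, by convention, the weighted average $\widehat{V}=\sum_{m=0}^{M-1}\alpha_m\widehat{V}_m$ and the weights form a probability vector ($\sum_m\alpha_m=1$, being a softer-softmax output), I would write $\widehat{V}-V=\sum_m\alpha_m(\widehat{V}_m-V)$ and apply the triangle inequality with nonnegativity of the $\alpha_m$ to get $|\widehat{V}-V|\le\sum_m\alpha_m|\widehat{V}_m-V|=\varepsilon_{total}$. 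This step is clean, but it rests on the modeling convention that the value of the mixed IOP equals the $\alpha$-weighted mixture of per-level values rather than the value induced by simulating the mixed policy directly; I would state this explicitly as a definition, since that is where I expect scrutiny.

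Next I would invoke Lemma \ref{lem:total-error} to replace $\varepsilon_{total}$ by $\sum_{m=0}^{M-1}\delta_m\sum_{i=m}^{M-1}\alpha_i$, then substitute $\alpha_i=\sum_{s,a^o}d(s,a^o)\,\tilde{\pi}^o_i(a^o|s;\phi_i)p(i)\big/\sum_{j}\tilde{\pi}^o_j(a^o|s;\phi_j)p(j)$. The key manoeuvre is to interchange the order of summation so that the state–action average $\sum_{s,a^o}d(s,a^o)$ is pulled to the front and the denominator $\sum_j\tilde{\pi}^o_j(a^o|s;\phi_j)p(j)$, which depends on $s,a^o$ but not on the indices $m$ or $i$, is factored out of the double sum. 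This leaves $\sum_{s,a^o}d(s,a^o)\big[\sum_m\delta_m\sum_{i=m}^{M-1}\tilde{\pi}^o_i(a^o|s;\phi_i)p(i)\big]\big/\big[\sum_j\tilde{\pi}^o_j(a^o|s;\phi_j)p(j)\big]$. Finally I would apply Lemma \ref{lem:obj-bound} with $i=m$ and $j=m-1$ to bound each $\delta_m=|\widehat{V}_m-\widehat{V}_{m-1}|$ by $\frac{\gamma K}{1-\gamma}\,\mathbb{E}_{s,a\sim\pi,\widehat{\mathcal{M}}_{m-1}}\|\widehat{\mathcal{M}}_m-\widehat{\mathcal{M}}_{m-1}\|$, substitute, and pull the common constant $\gamma K/(1-\gamma)$ outside to recover exactly the claimed expression. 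The base case $\delta_0=\varepsilon_0$ is absorbed by the convention $\widehat{\mathcal{M}}_{-1}\doteq\mathcal{P}(\cdot|s,a,a^o)\cdot\pi^o(a^o|s)$, so that the $m=0$ summand measures the level-0 IOP against the true dynamics and $\widehat{V}_{-1}=V$.

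The hard part is not any individual inequality, since each is licensed by a cited lemma, but rather the summation interchange and the notational collision it conceals. I would verify the interchange by writing the triple sum $\sum_{m}\sum_{i=m}^{M-1}\sum_{s,a^o}$ out in full and confirming that reindexing it to $\sum_{s,a^o}\sum_m\sum_{i=m}^{M-1}$ is legitimate precisely because the denominator is common to every term. I would also be careful that the inner expectation $\mathbb{E}_{s,a\sim\pi,\widehat{\mathcal{M}}_{m-1}}$ from Lemma \ref{lem:obj-bound} uses dummy variables internal to the Lipschitz bound and must not be conflated with the outer average against the stationary distribution $d(s,a^o)$, even though the symbol $s$ is reused; keeping these two roles of $s$ separate is the one place where a reader could be misled, so I would flag it explicitly in the write-up.
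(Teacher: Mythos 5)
Your proposal is correct and follows essentially the same route as the paper's own proof: the triangle inequality over the mixture $\widehat{V}=\sum_m\alpha_m\widehat{V}_m$ for the left inequality, then Lemma~\ref{lem:total-error}, substitution of the Bayesian form of $\alpha_i$, interchange of summation with the common denominator factored out, and Lemma~\ref{lem:obj-bound} applied to each $\delta_m$ with the $\widehat{\mathcal{M}}_{-1}$ convention absorbing the base case $\delta_0=\varepsilon_0$. Your explicit flagging of the mixture-of-values convention and of the reuse of $s$ inside the expectation versus the outer stationary average is a presentational improvement, not a mathematical departure.
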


\begin{lemma}
Assume that the true opponent policy $\pi^o$ has a probability distribution over the given set of IOPs $\{\tilde{\pi}^o_0,\ldots,\tilde{\pi}^o_{M-1}\}$, and a probability $\mathrm{Pr}\{\pi^o=\tilde{\pi}_m^o\} = p_m$ for each $\tilde{\pi}_m^o$, then the maximum expectation of $\eta_M$ can be achieved if and only if $\alpha_m=p_m, \forall m\in[0,M-1].$

\begin{proof}
\begin{align*}
\mathbb{E}[\eta_M] &= -\mathbb{E}\|(\alpha_0\tilde{\pi}^o_0+\cdots+\alpha_{M-1}\tilde{\pi}^o_{M-1})-\pi\|\\
&=-\sum_{m=0}^{M-1}p_m\|(\alpha_0\tilde{\pi}^o_0+\cdots+\alpha_{M-1}\tilde{\pi}^o_{M-1})-\tilde{\pi}^o_m\|\\
&\geq -\sum_{m=0}^{M-1}\|\alpha_j\tilde{\pi}^o_m-p_m\tilde{\pi}^o_m\|
\end{align*}

Since $\mathbb{E}[\eta_M]$ is a negative distance function, it is clear that 
\begin{equation*}
\mathbb{E}[\eta_M]\leq 0,
\end{equation*}
and when $\alpha_m=p_m, \forall m\in[0,M-1]$, 

\begin{equation*}
    \mathbb{E}[\eta_M] \geq -\sum_{m=0}^{M-1}\|p_m\tilde{\pi}^o_m-p_m\tilde{\pi}^o_m\| = 0
\end{equation*}

Therefore, the maximum expectation of $\eta_M$ can be achieved if and only if $\alpha_m=p_m, \forall m\in[0,M-1].$
\end{proof}
\end{lemma}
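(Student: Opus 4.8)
The plan is to recast the claim as a constrained optimization: maximize the expected benefit $\mathbb{E}[\eta_M] = -\sum_{m=0}^{M-1} p_m\, D(\tilde{\pi}^o_m, \hat{\pi}(\alpha))$ over the mixing weights on the probability simplex $\{\alpha : \alpha_m \ge 0,\ \sum_m \alpha_m = 1\}$, where $\hat{\pi}(\alpha) = \sum_{j} \alpha_j \tilde{\pi}^o_j$ is the mixed IOP and $D$ is the distribution distance denoted by $\|\cdot\|$. First I would take the expectation over the draw $\mathrm{Pr}\{\pi^o = \tilde{\pi}^o_m\} = p_m$ to reach this weighted-sum form, and note that because $\hat{\pi}$ is linear in $\alpha$ and each $D(\tilde{\pi}^o_m, \cdot)$ is convex, the objective is concave, so a global maximizer is pinned down by first-order conditions and it suffices to exhibit one optimal $\alpha$ and then argue uniqueness.

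The crux is that the characterization $\alpha_m = p_m$ holds only for the right notion of distance, so I would take $D$ to be the forward Kullback--Leibler divergence $D(\tilde{\pi}^o_m, \hat{\pi}) = \mathrm{KL}(\tilde{\pi}^o_m \,\|\, \hat{\pi})$ (or, more generally, any Bregman divergence carrying the data in its first slot). The key algebraic step is the average-divergence decomposition, with $\bar{\pi} \doteq \sum_m p_m \tilde{\pi}^o_m$,
\[
\sum_{m=0}^{M-1} p_m\, \mathrm{KL}(\tilde{\pi}^o_m \,\|\, \hat{\pi}) = \sum_{m=0}^{M-1} p_m\, \mathrm{KL}(\tilde{\pi}^o_m \,\|\, \bar{\pi}) + \mathrm{KL}(\bar{\pi} \,\|\, \hat{\pi}),
\]
in which the first term is independent of $\hat{\pi}$. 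Minimizing the left side over $\hat{\pi}$ therefore reduces to minimizing $\mathrm{KL}(\bar{\pi} \,\|\, \hat{\pi})$, whose unique minimizer is $\hat{\pi} = \bar{\pi}$ by Gibbs' inequality. This is exactly the step where a naive argument --- claiming the optimal value equals $0$ --- breaks down: that stronger claim holds only when all IOPs coincide, whereas the decomposition shows the true maximum is $-\sum_m p_m\, \mathrm{KL}(\tilde{\pi}^o_m \,\|\, \bar{\pi}) \le 0$ while still forcing the optimal mixture to equal $\bar{\pi}$.

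To close the ``if and only if'', I would restrict $\hat{\pi}$ to the mixture family $\{\sum_j \alpha_j \tilde{\pi}^o_j\}$ and observe that $\bar{\pi}$ lies in this family, so the family-restricted optimum is still attained exactly when $\sum_j (\alpha_j - p_j)\tilde{\pi}^o_j = 0$. The ``if'' direction is immediate substitution of $\alpha = p$. For ``only if'', since $\alpha$ and $p$ both sum to one the coefficient vector $c = \alpha - p$ satisfies $\sum_j c_j = 0$; I would then invoke an affine-independence assumption on the IOPs --- that the displacements $\tilde{\pi}^o_m - \tilde{\pi}^o_0$ are linearly independent --- so that the only such $c$ annihilating $\sum_j c_j \tilde{\pi}^o_j$ is $c = 0$, giving $\alpha_m = p_m$ for all $m$.

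The hard part is the choice of distance together with the correct extremal value. For a generic metric such as total variation or $\ell_1$, the minimizer of $\sum_m p_m D(\tilde{\pi}^o_m, \hat{\pi})$ is a median or mode rather than the weighted mean, so $\alpha = p$ is not optimal and the statement is simply false; it is precisely the Bregman (KL) structure that makes the weighted arithmetic mean $\bar{\pi}$ the unique centroid and hence $\alpha = p$ optimal. The secondary obstacle is the uniqueness half of the iff, which does not follow from concavity alone and genuinely requires affine-independence of the IOPs: without it, distinct weight vectors realize the same mixture $\bar{\pi}$ and the ``only if'' direction fails.
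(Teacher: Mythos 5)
Your route is genuinely different from the paper's, and it is the sounder of the two. The paper argues by a sandwich: after expanding $\mathbb{E}[\eta_M]=-\sum_{m} p_m\bigl\|\sum_{j}\alpha_j\tilde{\pi}^o_j-\tilde{\pi}^o_m\bigr\|$ (your first step, identical), it observes $\mathbb{E}[\eta_M]\le 0$ and then claims that at $\alpha_m=p_m$ one has $\mathbb{E}[\eta_M]\ge-\sum_{m}\|p_m\tilde{\pi}^o_m-p_m\tilde{\pi}^o_m\|=0$, concluding the maximum value $0$ is attained there. The intermediate inequality it invokes (written with a stray index $j$) in effect replaces the whole mixture by the single term $\alpha_m\tilde{\pi}^o_m$ inside the $m$-th distance, which no norm inequality justifies, and the conclusion is exactly the failure mode you flag in your second paragraph: the true value at $\alpha=p$ is $-\sum_m p_m\|\bar{\pi}-\tilde{\pi}^o_m\|$ with $\bar{\pi}=\sum_j p_j\tilde{\pi}^o_j$, which is strictly negative whenever the IOPs are distinct, so the paper's argument only goes through in the degenerate case where all IOPs with positive probability coincide. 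Your compensation identity $\sum_m p_m\,\mathrm{KL}(\tilde{\pi}^o_m\,\|\,\hat{\pi})=\sum_m p_m\,\mathrm{KL}(\tilde{\pi}^o_m\,\|\,\bar{\pi})+\mathrm{KL}(\bar{\pi}\,\|\,\hat{\pi})$ is precisely the repair: it isolates an $\hat{\pi}$-independent term and reduces the optimization to Gibbs' inequality, giving the correct optimal value $-\sum_m p_m\,\mathrm{KL}(\tilde{\pi}^o_m\,\|\,\bar{\pi})\le 0$ rather than $0$, with $\hat{\pi}=\bar{\pi}$ as unique optimizer. The price is two hypotheses absent from the paper's statement, and you are right that both are indispensable rather than pedantic: for total variation or $\ell_1$ the weighted centroid is a median rather than the mean, so the lemma as stated for ``a general form of distribution distance'' is false, and without affine independence of the IOPs distinct weight vectors realize the same mixture $\bar{\pi}$, so the ``only if'' half fails. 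In short, you prove a corrected version of the lemma; what each approach buys is clear-cut --- the paper's direct sandwich is shorter but does not establish the claim, while your Bregman-centroid argument is rigorous at the cost of restricting the distance and adding an identifiability assumption.
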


\begin{theorem}
Given the action trajectory of oponent $\{a^o_0,a^o_1,\ldots,a^o_t\}$, the posterior probability updated by Bayesian mixing approximates the true probability of opponent as the length of the trajectory grows, \textit{i.e.},
\begin{equation*}
    P(m|a^o_t,\cdots,a^o_1,a^o_0)\to P_{\operatorname{true}}(m), \quad t\to \infty.
\end{equation*}
Then the maximum expectation of $\eta_M$ can be achived with $\alpha_m=\mathbb{E}[P(m|a^o_t,\cdots,a^o_1,a^o_0)]$.
\end{theorem}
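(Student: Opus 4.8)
The plan is to prove Theorem~\ref{the:mix-benefit} in two stages: first establish the Bayesian convergence of the posterior, then invoke Lemma~\ref{lem:max-benefit} to conclude the benefit claim. For the convergence stage, I would treat the observed opponent actions $a^o_0,\dots,a^o_t$ as i.i.d.\ (or at least stationary ergodic) draws generated by the true opponent policy $\pi^o$, which under the assumption of Lemma~\ref{lem:max-benefit} equals $\tilde\pi^o_{m^\star}$ with probability $p_{m^\star}$ over the index set. The recursive Bayesian update in \eqref{eq:bayes} applied across the whole trajectory gives, by independence,
\begin{equation*}
P(m\mid a^o_t,\dots,a^o_0) = \frac{p(m)\prod_{\tau=0}^{t}\tilde\pi^o_m(a^o_\tau\mid s_\tau;\phi_m)}{\sum_{i=0}^{M-1}p(i)\prod_{\tau=0}^{t}\tilde\pi^o_i(a^o_\tau\mid s_\tau;\phi_i)}.
\end{equation*}

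Next I would take logs and normalize by $t$, so that $\tfrac1t\log\prod_\tau \tilde\pi^o_m(a^o_\tau)$ converges almost surely by the law of large numbers to the expected log-likelihood $\mathbb{E}_{a^o\sim\pi^o}[\log\tilde\pi^o_m(a^o\mid s)]$. The standard argument is that this expectation is maximized exactly at the IOP matching the data-generating policy, because the gap to any other index $i$ equals a Kullback--Leibler divergence $D_{\mathrm{KL}}(\pi^o\,\|\,\tilde\pi^o_i)\ge 0$ which is strictly positive when $\tilde\pi^o_i\ne\pi^o$. Consequently the log-posterior-ratio between the correct index and any incorrect index diverges to $+\infty$ linearly in $t$, forcing the posterior mass to concentrate on the matching index; in the mixed-distribution formulation of the theorem this yields $P(m\mid a^o_t,\dots,a^o_0)\to p_m$ as $t\to\infty$, which is the claimed posterior consistency.

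Finally, to close the benefit claim I would pass to the expectation: taking $\alpha_m=\mathbb{E}[P(m\mid a^o_t,\dots,a^o_0)]$ and applying the convergence just established, the limiting weights satisfy $\alpha_m\to p_m$. By Lemma~\ref{lem:max-benefit}, the maximum expectation of $\eta_M$ is attained \emph{if and only if} $\alpha_m=p_m$ for all $m$, so in the limit the Bayesian mixing weights realize the optimal benefit. This neatly chains the two results: Theorem~\ref{the:mix-benefit} supplies the convergence of the data-driven weights to the minimizer identified abstractly in Lemma~\ref{lem:max-benefit}.

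The main obstacle I anticipate is the rigor of the consistency step under the paper's actual generative assumptions. The clean KL argument requires the realizability assumption from Lemma~\ref{lem:max-benefit}---that $\pi^o$ genuinely lies in (the convex combination or the support of) the finite IOP set---and requires the samples to be well-behaved enough for a strong law to apply. In the MBOM setting the opponent is \emph{learning and nonstationary}, so the $a^o_\tau$ are not truly i.i.d.\ and the states $s_\tau$ are correlated through the induced Markov chain; making the almost-sure convergence precise would need either a stationarity/ergodicity hypothesis on the induced state-action chain or a martingale-based posterior-consistency argument (e.g.\ Doob's theorem) in place of the naive law of large numbers. I would therefore flag that the theorem, as stated, is best read under an effectively stationary regime, and note that the decayed moving average actually used in \eqref{eq:mixture_weights} is a practical surrogate that tracks the nonstationary case rather than exactly realizing the asymptotic limit.
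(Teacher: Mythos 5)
Your proposal takes essentially the same route as the paper's own proof: unroll the recursive Bayesian updates over the trajectory, establish posterior consistency $P(m\mid a^o_t,\dots,a^o_0)\to p_m$, and then invoke Lemma~\ref{lem:max-benefit} to conclude that $\alpha_m=\mathbb{E}[P(m\mid a^o_t,\dots,a^o_0)]$ attains the maximum of $\eta_M$. The difference is purely one of rigor, in your favor: where the paper disposes of the key consistency step with ``obviously, with sufficient samples,'' you supply the standard log-likelihood/KL-divergence concentration argument, and you correctly flag both the realizability and i.i.d./stationarity hypotheses it requires and the subtlety that the posterior itself concentrates on the realized index, so that only the expectation over which IOP the opponent actually is recovers $p_m$ --- points the paper's proof silently glosses over.
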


\begin{proof}
According to Bayes' theorem, as we update the posterior probability as \eqref{eq:bayes}, we have the following posterior probabilities
\begin{align*}
&p(m|a^o_0) = \frac{p(a^o_0|m)p(m)}{\sum_{i=0}^{M-1}p(a^o_0|i)p(i)}\\
&p(m|a^o_1, a^o_0) = \frac{p(a^o_1|a^o_0,m)p(a^o_0|m)p(m)}{\left[\sum_{i=0}^{M-1}p(a^o_1|,a^o_0,i)p(i)\right]\cdot\left[\sum_{i=0}^{M-1}p(a^o_0|i)p(i)\right]}\\
&\qquad\vdots\\
&p(m|a^o_t, \cdots, a^o_0) = \frac{p(a^o_t|a^o_{t-1},\cdots,a^o_{0},m)\cdots p(a^o_0|m)p(m)}{\left[\sum_{i=0}^{M-1}p(a^o_{t}|a^o_{t-1},\cdots,a^o_0,i)p(i)\right]\cdots\left[\sum_{i=0}^{M-1}p(a^o_0|i)p(i)\right]}
\end{align*}

Obviously, with sufficient samples,
\begin{equation*}
    P(m|a^o_t,\cdots,a^o_0)\to P_{\operatorname{true}}(m)\doteq p_m, \quad t\to \infty.
\end{equation*}
This can also be expressed as
\begin{equation*}
    \mathbb{E}[P(m|a^o_t,\cdots,a^o_1,a^o_0)] = p_m
\end{equation*}

When $\alpha_m=\mathbb{E}[P(m|a^o_t,\cdots,a^o_1,a^o_0)]$, we have $\alpha_m = p_m$. Considering Lemma \ref{lem:max-benefit}, the maximum expectation of $\eta_M$ is achieved.
\end{proof}


\newpage
\section{\texorpdfstring{Weights $ \bm{\alpha} $}{$\alpha$}}
\label{app:alpha}

\begin{figure}[h]
    \centering
    \includegraphics[width=1\textwidth]{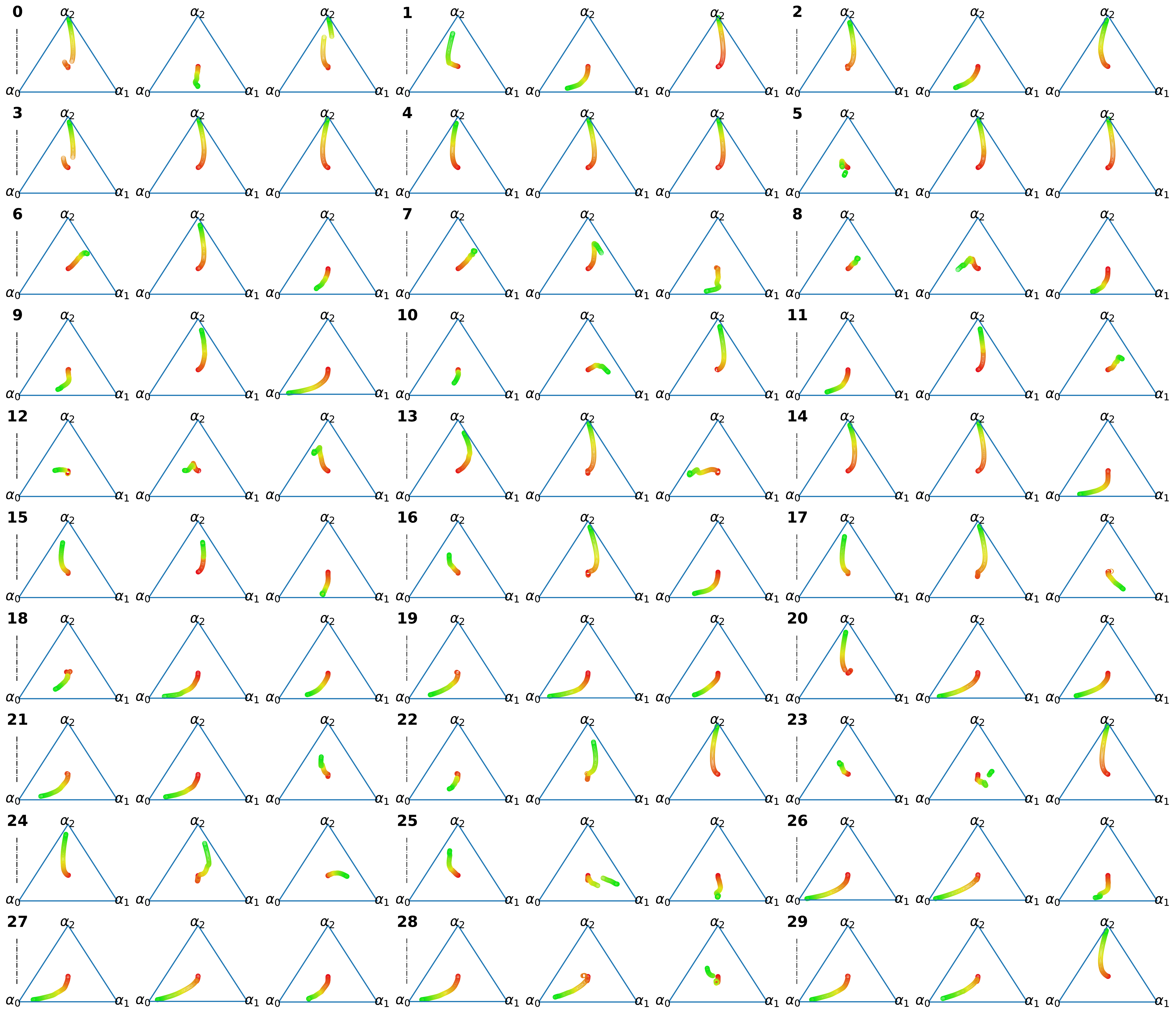}
    \caption{The change of $\bm{\alpha}$ during the adaptation, colored from red (initial $\alpha$) to green (end), in Triangle Game. In each subplot, the top left number is the opponent index, and from left to right are shown the agent plays against fixed policy, na\"ive learner, reasoning learner.}
    \label{fig:alpha-weights-trigame}
\end{figure}

In Bayesian mixing, $\bm{\alpha}$ is the weights to mix IOPs,
\begin{equation*}
\label{eq:mixture_weights_appendix}
    \bm{\alpha}= (\alpha_0, \dots, \alpha_{M-1})  = \text{softer-softmax}(\Psi_0, \dots, \Psi_{M-1}).
\end{equation*}
$\Psi_m^t$ of level-$m$ IOP at timestep $t$ is 
\begin{equation*}
\Psi_m^t = \sum_{l=t-H}^{t-1}\lambda^{t-l}p(m|a_l^{o}),
\end{equation*}
where $\lambda$ is the decay factor, $H$ is the horizon, $p(m|a_t^{o})$ can be calculated using \eqref{eq:bayes}. Moreover, $p(m)$ at timestep $t$ is the moving average of $p(m|a)$ over the horizon $H$ as
\begin{equation*}
    {p}(m)=\sum_{l=t-H}^{t-1}p(m|a_{l}^{o})/H.
\end{equation*}

\begin{figure}[!t]
    \centering
    \includegraphics[width=1\textwidth]{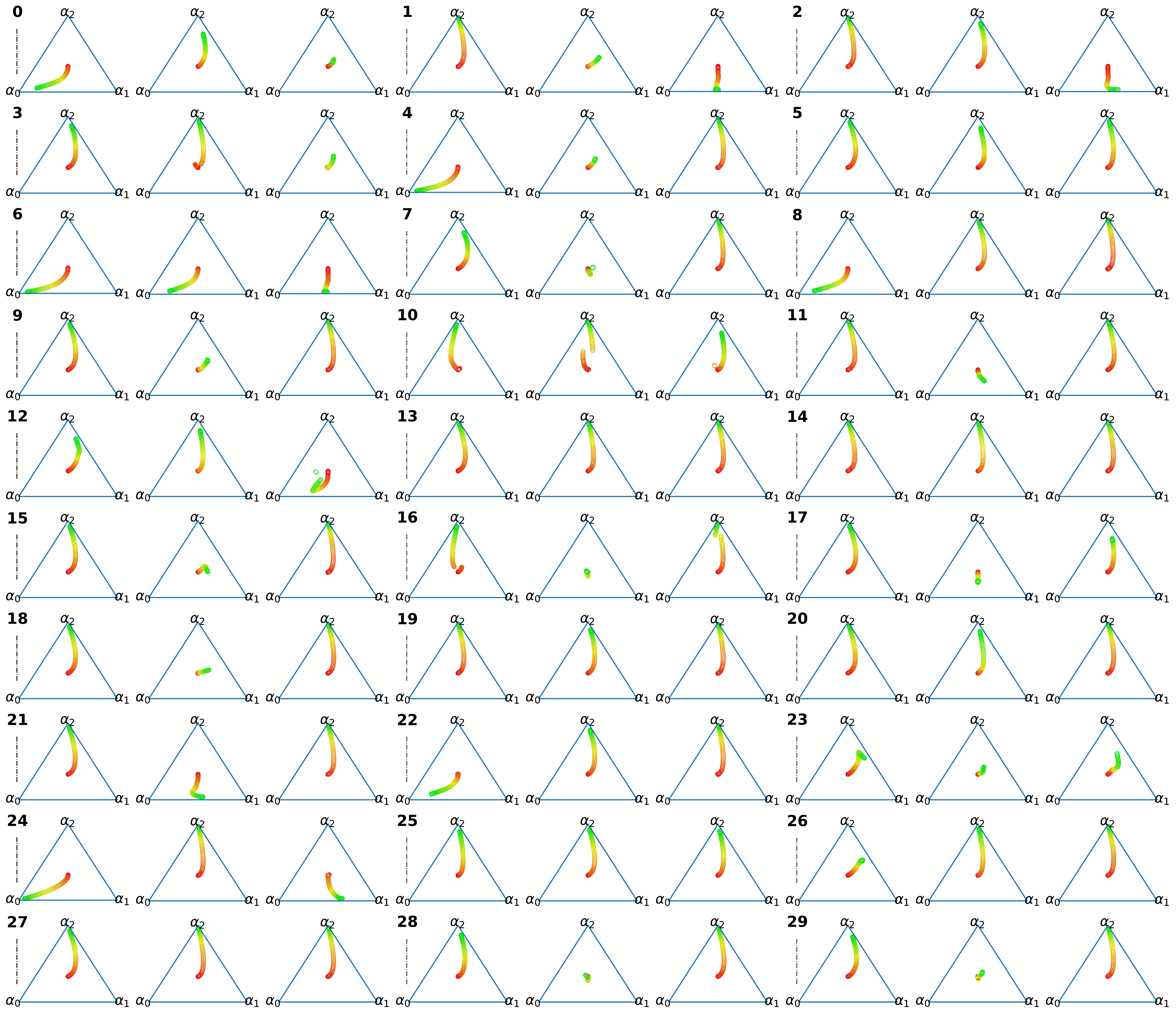}
    \caption{The change of $\bm{\alpha}$ during the adaptation, colored from red (initial $\alpha$) to green (end), in One-on-One. In each subplot, the top left number is the opponent index, and from left to right are shown the agent plays against fixed policy, na\"ive learner, reasoning learner.}
    \label{fig:alpha-weights-football}
\end{figure}

Figure~\ref{fig:alpha-weights-trigame} and \ref{fig:alpha-weights-football} depict the change of $\bm{\alpha}$ ($M=3$) during the adaptation when the agent plays against different test opponents in Triangle Game and One-one-One, respectively. In each subplot, the top left number is the opponent index, and from left to right are fixed policy, na\"ive learner, reasoning learner. The changing trends of $\bm{\alpha}$ are diverse when against different opponents. 

As the level-0 IOP is finetuned during interaction, when playing against fixed policy, $\alpha_0$ should be large. When playing against reasoning learner, intuitively $\alpha_1$ should be large. However, the na\"ive learner's policy is updated only with reward, thus it does not have a counterpart in the IOPs. As illustrated in Figure~\ref{fig:alpha-weights-trigame} and \ref{fig:alpha-weights-football}, $\bm{\alpha}$ does not always converge to the corresponding level of IOP when against fixed policy and reasoning learner. The reason is two-fold. First, as the level-0 IOP is pre-trained against training opponents that are different from test opponents (will be discussed in Appendix~\ref{app:settings}), the level-0 IOP can be largely different from the opponent policy. Thus, the small number of samples obtained during online interaction may not be enough to finetune the level-0 IOP to accurately model the opponent policy. This is referred to as the error of opponent modeling, Thus, $\bm{\alpha}$ does not always converge to $\alpha_0$ when testing against fixed policy. Second, due to such an inaccurate level-0 IOP and the error of the environment model, higher-level IOPs may also be inaccurate, thus $\alpha$ does not always converge to $\alpha_1$ when testing against reasoning learner. Essentially, $\bm{\alpha}$ is a mapping from IOPs reasoned by the agent to the true policy generated by the opponent's learning method. When testing against different types of opponents, the mixed IOP according to $\bm{\alpha}$ may already be capable to well represent the true opponent policy and capture its update, which offsets the errors of opponent modeling and environment model and makes MBOM almost intact and outperform the baselines.

\newpage
\section{Ablation on Hyperparameters}
\label{app:ablation}

\begin{figure}[h]
    \centering
	\begin{subfigure}{.32\textwidth}
	    \centering
	    \setlength{\abovecaptionskip}{2pt}
        \includegraphics[width=1\textwidth]{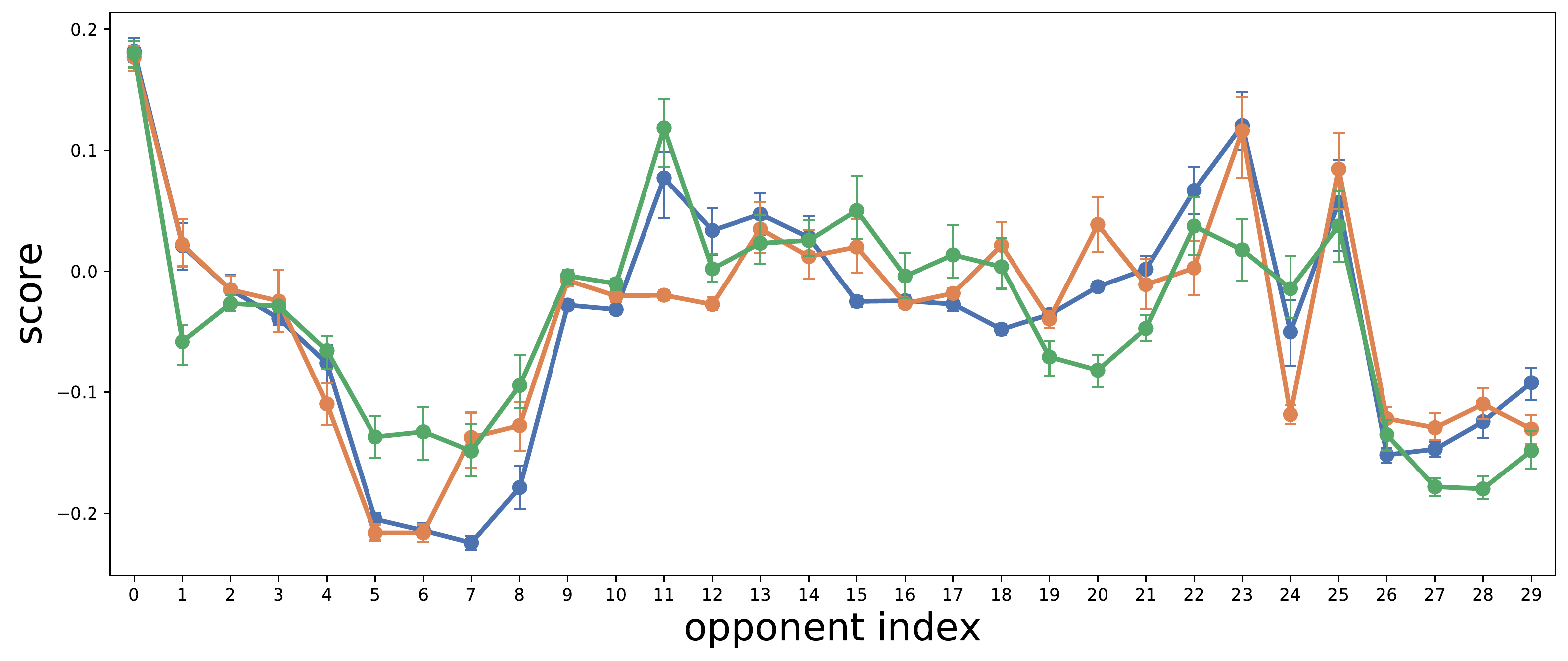}
        \caption{Triangle Game, versus fixed policy}
		\label{fig:ablation_k_trigame_0}
    \end{subfigure}
    \hspace{0.1cm}
    \begin{subfigure}{.32\textwidth}
        \centering
	    \setlength{\abovecaptionskip}{2pt}
        \includegraphics[width=1\textwidth]{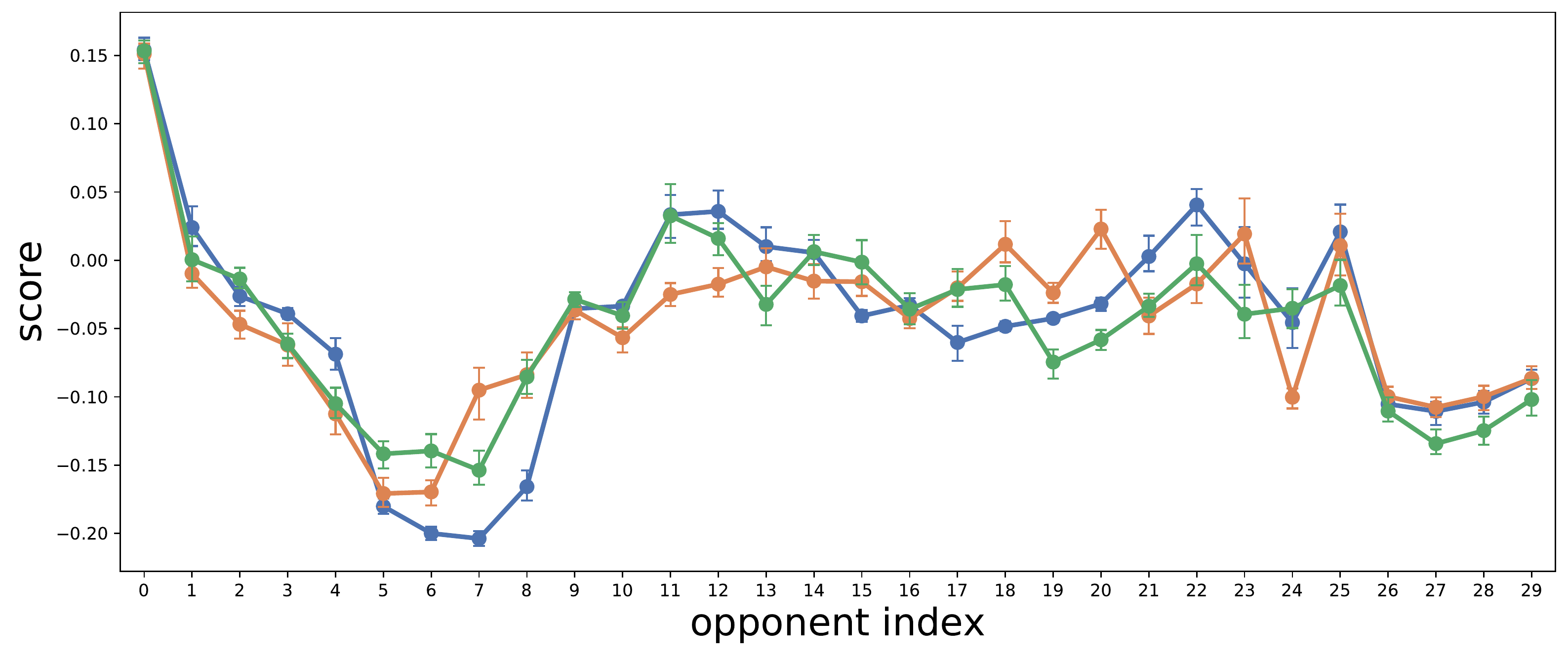}
        \caption{Triangle Game, versus na\"ive learner}
		\label{fig:ablation_k_trigame_1}
    \end{subfigure}
    \hspace{0.1cm}
    \begin{subfigure}{.32\textwidth}
    	\centering
	    \setlength{\abovecaptionskip}{2pt}
        \includegraphics[width=1\textwidth]{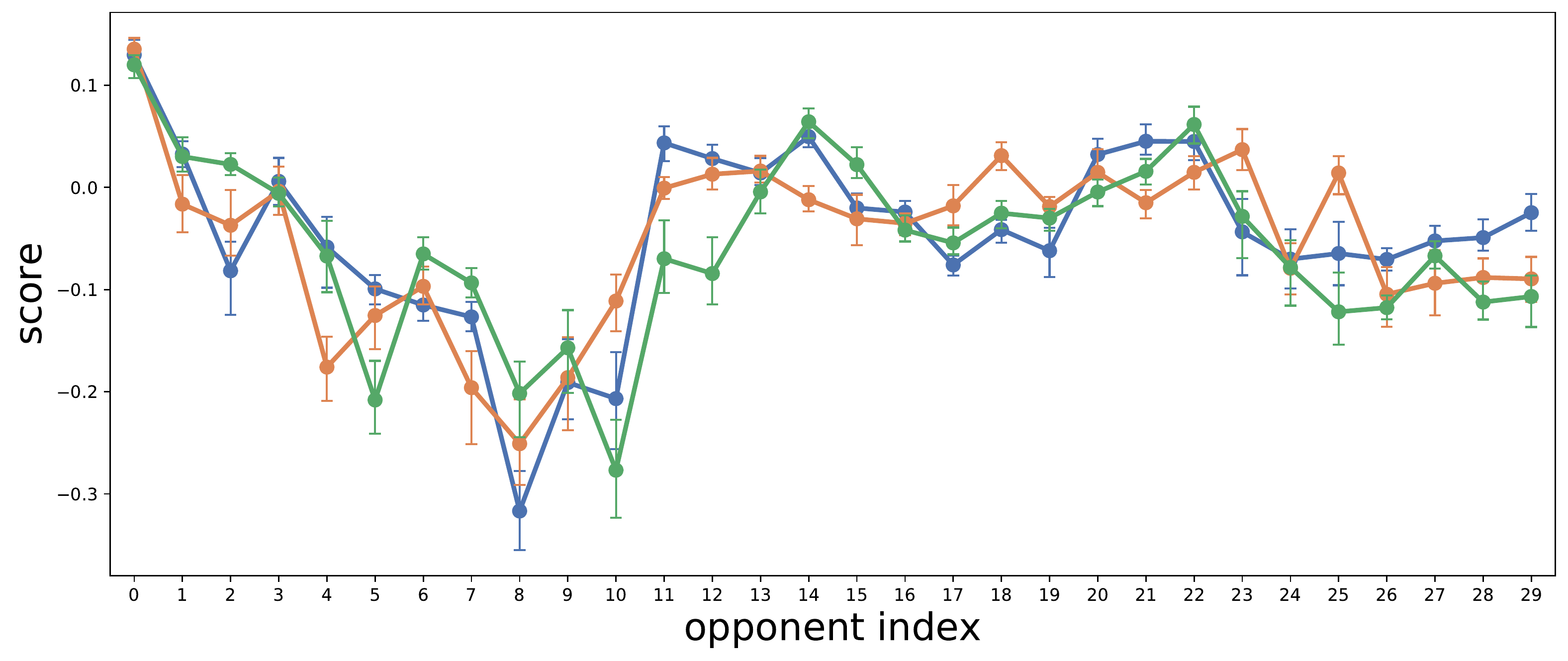}
        \caption{Triangle Game, versus reasoning learner}
		\label{fig:ablation_k_trigame_2}
    \end{subfigure}
    \vspace{0.1cm}\\
    \begin{footnotesize}
    \centering
    \textcolor[RGB]{76,114,176}{$\bullet\ \ $}$k=1$\ \ \ \ \ \textcolor[RGB]{221,132,82}{$\bullet\ \ $}$k=2$\ \ \ \ \ \textcolor[RGB]{85,168,104}{$\bullet\ \ $}$k=3$
    \end{footnotesize}
    \\\vspace{0.2cm}
    \begin{subfigure}{.32\textwidth}
    	\centering
	    \setlength{\abovecaptionskip}{2pt}
        \includegraphics[width=1\textwidth]{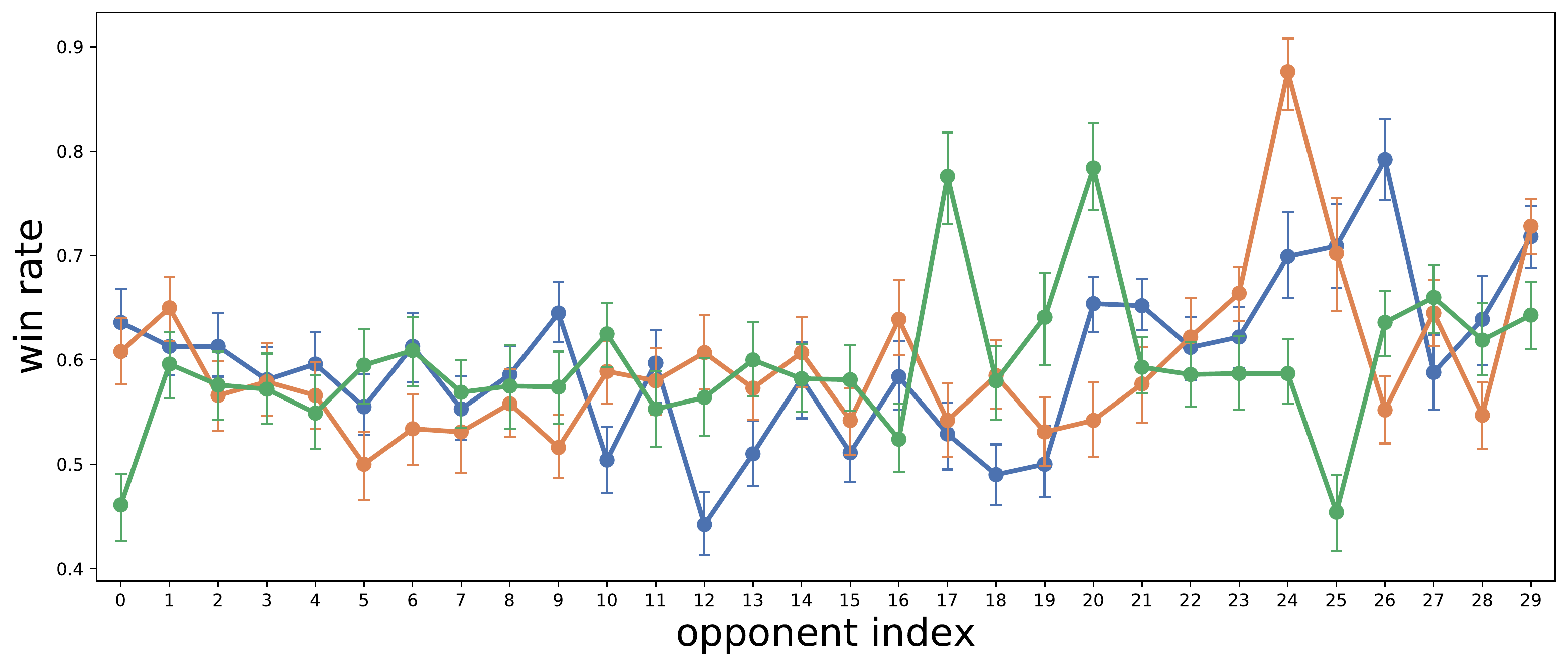}
        \caption{One-on-One, versus fixed policy}
		\label{fig:ablation_k_oneonone_0}
    \end{subfigure}
    \hspace{0.1cm}
    \begin{subfigure}{.32\textwidth}
    	\centering
	    \setlength{\abovecaptionskip}{2pt}
        \includegraphics[width=1\textwidth]{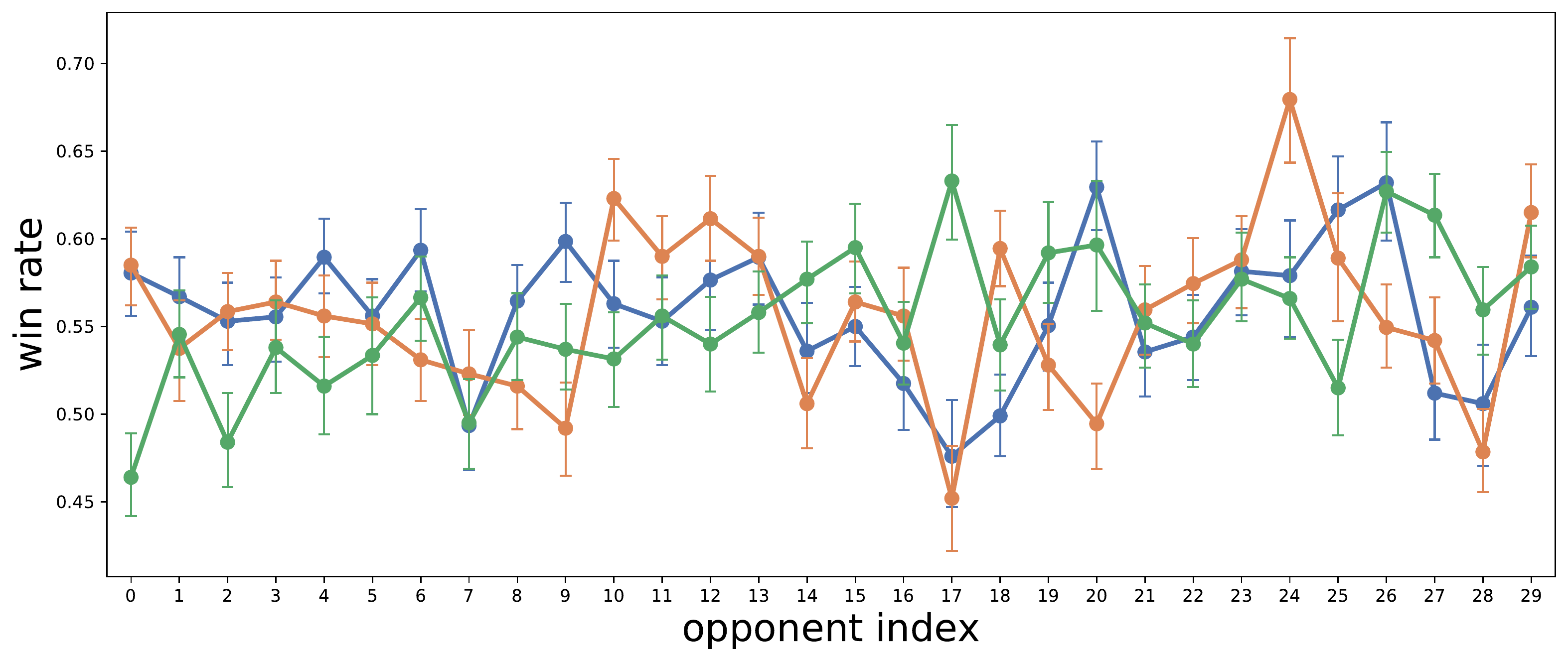}
        \caption{One-on-One, versus na\"ive learner}
		\label{fig:ablation_k_oneonone_1}
    \end{subfigure}
    \hspace{0.1cm}
    \begin{subfigure}{.32\textwidth}
    	\centering
	    \setlength{\abovecaptionskip}{2pt}
        \includegraphics[width=1\textwidth]{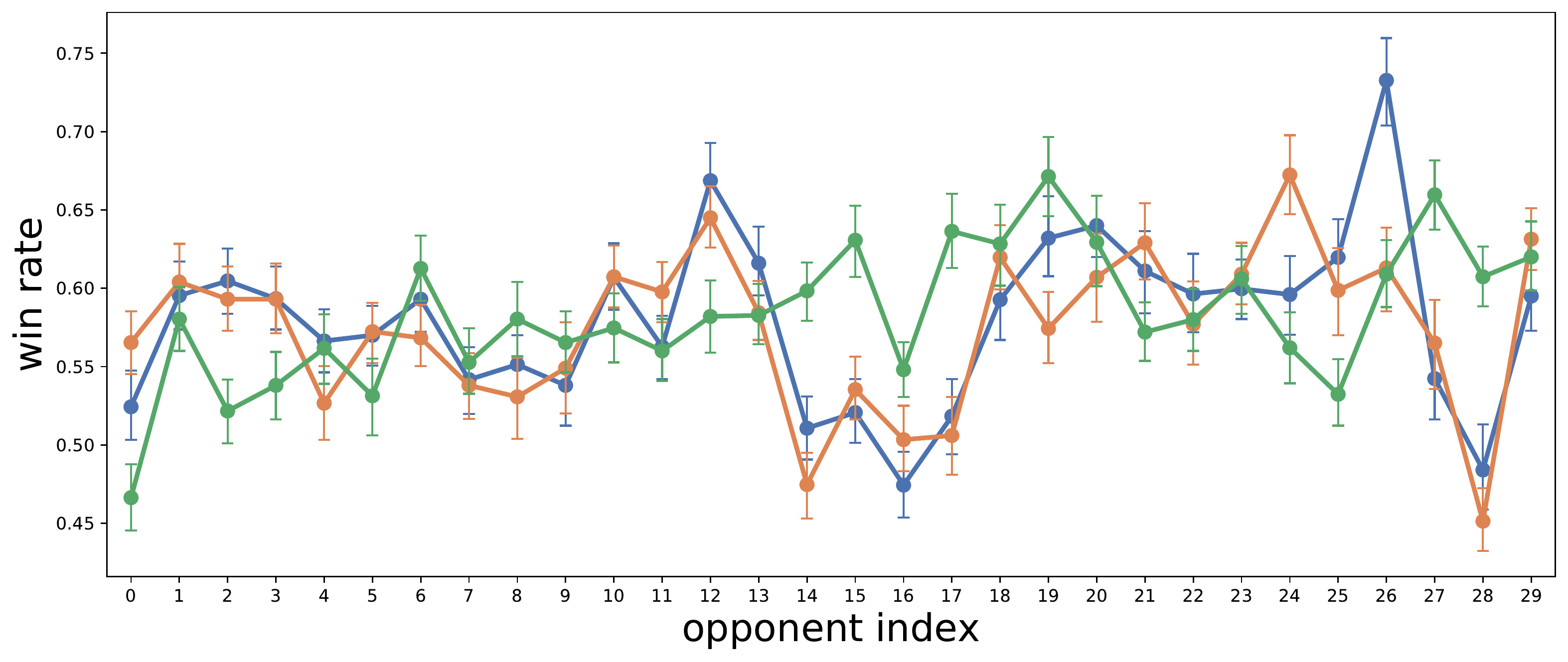}
        \caption{One-on-One, versus reasoning learner}
		\label{fig:ablation_k_oneonone_2}
    \end{subfigure}
    \vspace{0.1cm}\\
    \begin{footnotesize}
    \centering
    \textcolor[RGB]{76,114,176}{$\bullet\ \ $}$k=1$\ \ \ \ \ \textcolor[RGB]{221,132,82}{$\bullet\ \ $}$k=3$\ \ \ \ \ \textcolor[RGB]{85,168,104}{$\bullet\ \ $}$k=5$
    \end{footnotesize}
    \caption{Performance against different types of opponents, \textit{i.e.}, fixed policy, na\"ive learner, and reasoning learner with different rollout planning horizon $k$ as in \eqref{eq:best}. The selection of $k$ is a tradeoff between the error of the environment model error and the estimate accuracy of rollout. The results are plotted using mean and 95\% confidence intervals with five different random seeds (\textit{x}-axis is opponent index). }
    \label{fig:ablation-k}
\end{figure}

\begin{figure}[h]
    \centering
	\begin{subfigure}{.32\textwidth}
	    \centering
	    \setlength{\abovecaptionskip}{2pt}
        \includegraphics[width=1\textwidth]{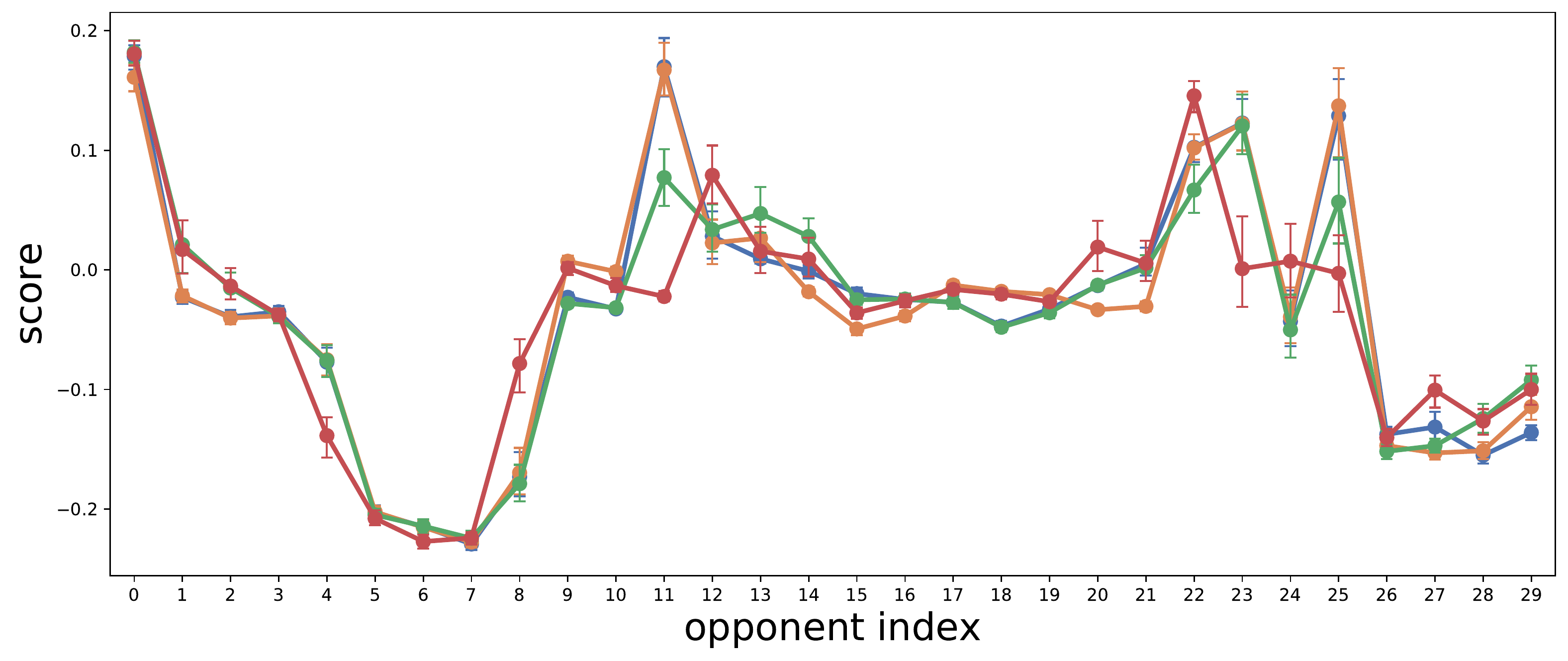}
        \caption{Triangle Game, versus fixed policy}
		\label{fig:ablation_M_trigame_0}
    \end{subfigure}
    \hspace{0.1cm}
    \begin{subfigure}{.32\textwidth}
        \centering
	    \setlength{\abovecaptionskip}{2pt}
        \includegraphics[width=1\textwidth]{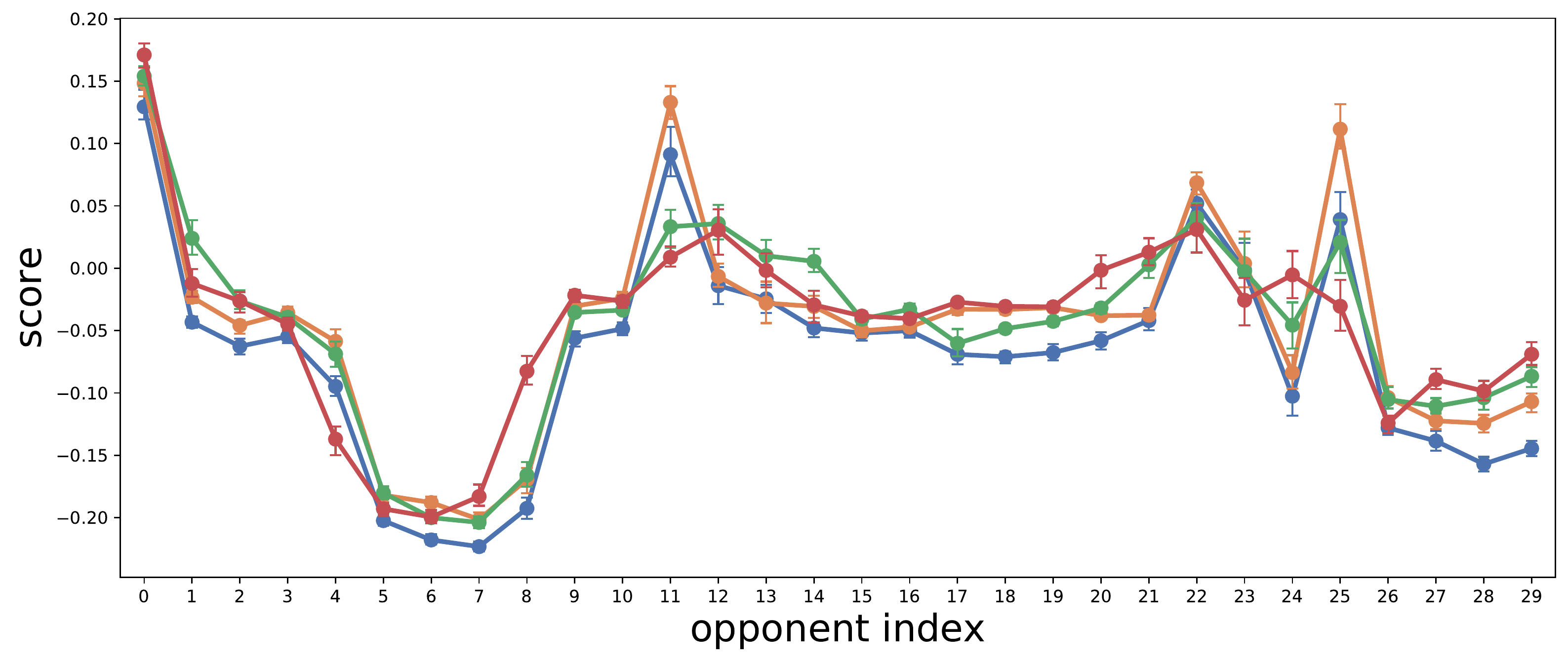}
        \caption{Triangle Game, versus na\"ive learner}
		\label{fig:ablation_M_trigame_1}
    \end{subfigure}
    \hspace{0.1cm}
    \begin{subfigure}{.32\textwidth}
    	\centering
	    \setlength{\abovecaptionskip}{2pt}
        \includegraphics[width=1\textwidth]{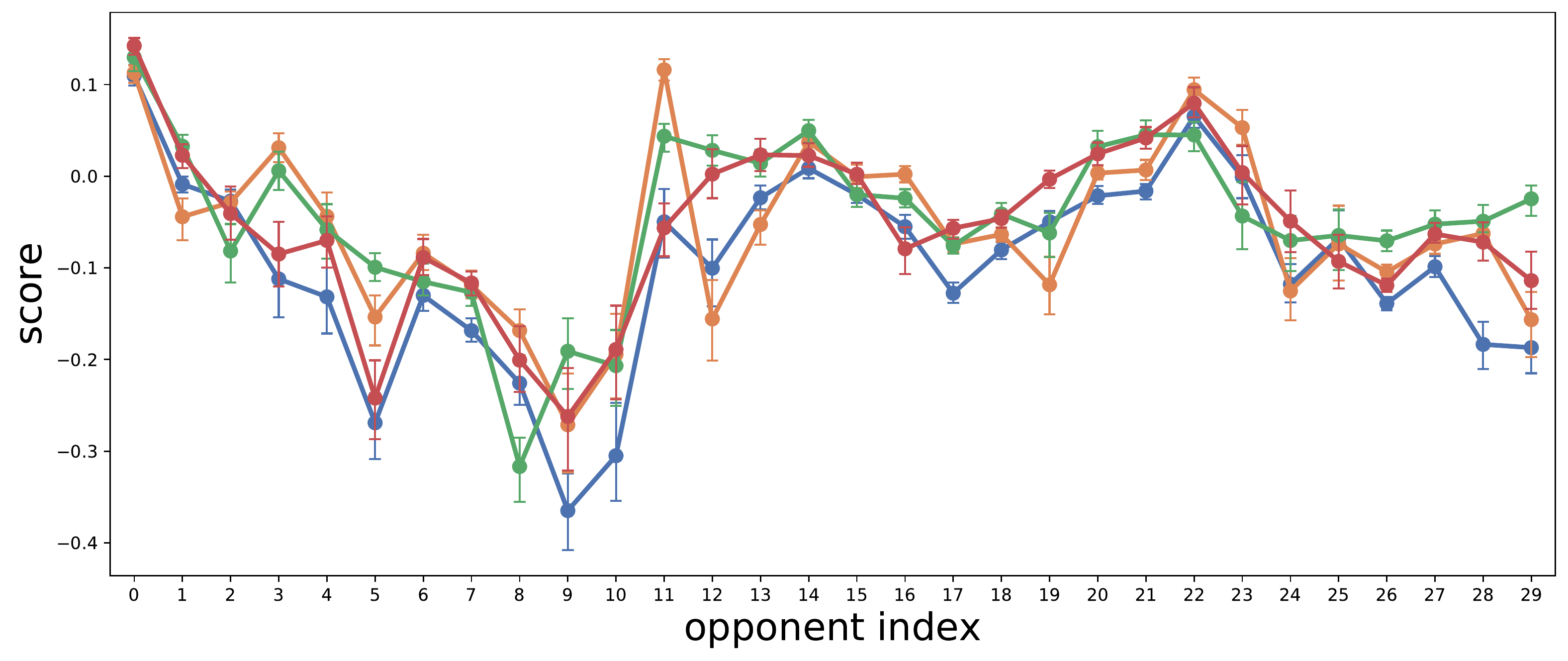}
        \caption{Triangle Game, versus reasoning learner}
		\label{fig:ablation_M_trigame_2}
    \end{subfigure}
    \\\vspace{0.15cm}
    \begin{subfigure}{.32\textwidth}
    	\centering
	    \setlength{\abovecaptionskip}{2pt}
        \includegraphics[width=1\textwidth]{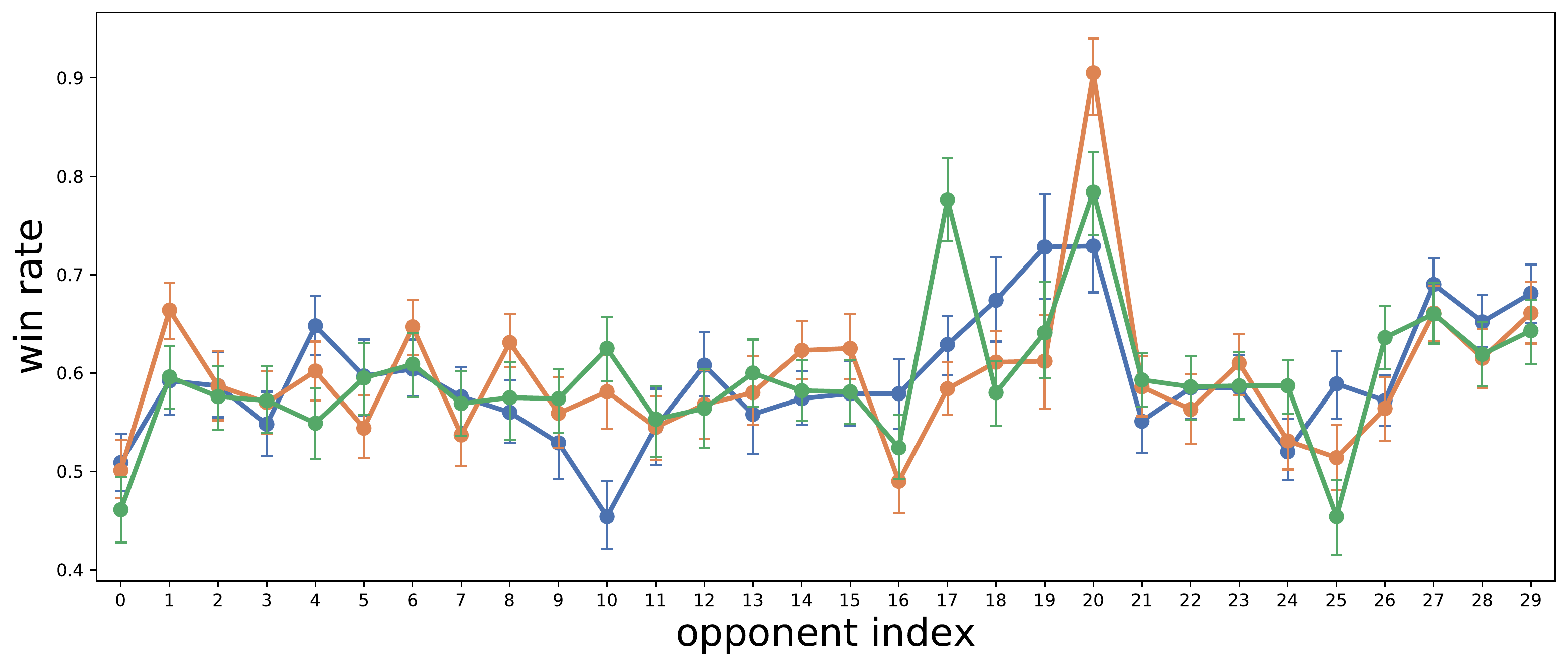}
        \caption{One-on-One, versus fixed policy}
		\label{fig:ablation_M_oneonone_0}
    \end{subfigure}
    \hspace{0.1cm}
    \begin{subfigure}{.32\textwidth}
    	\centering
	    \setlength{\abovecaptionskip}{2pt}
        \includegraphics[width=1\textwidth]{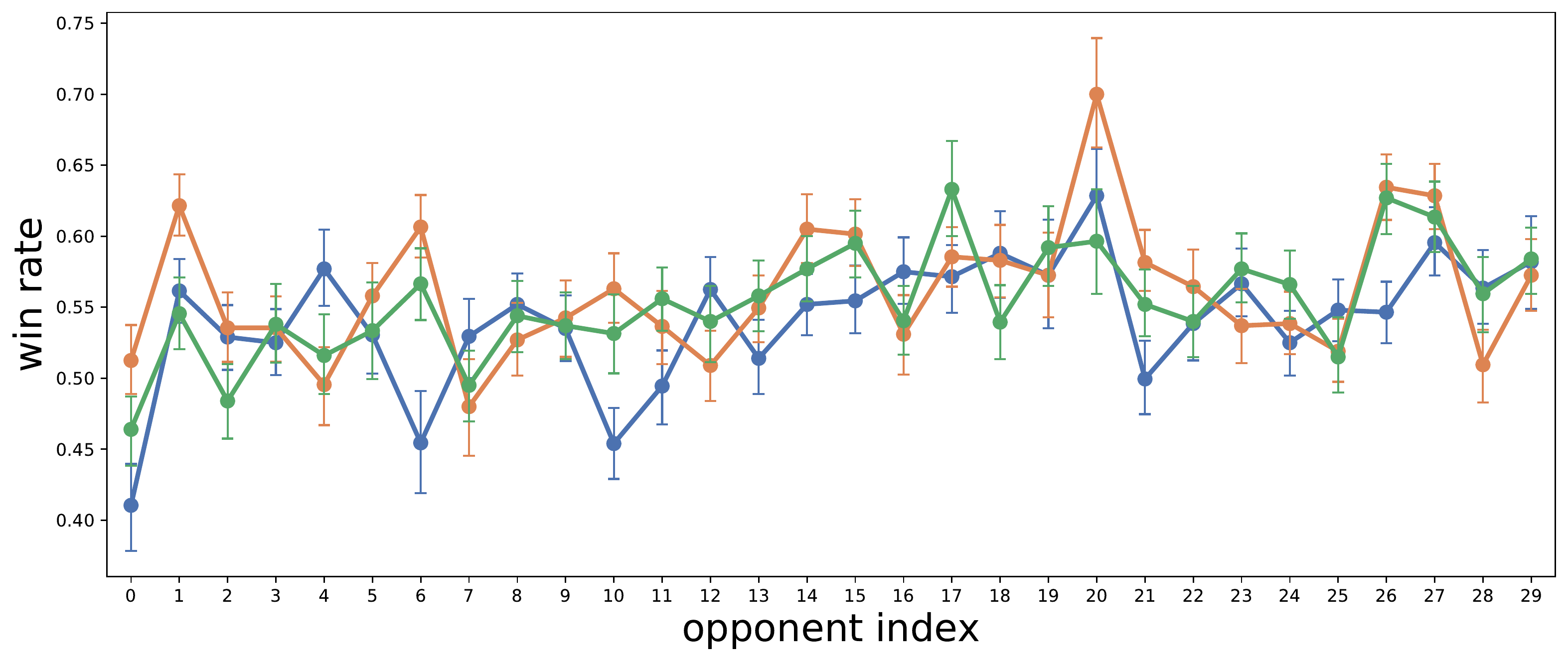}
        \caption{One-on-One, versus na\"ive learner}
		\label{fig:ablation_M_oneonone_1}
    \end{subfigure}
    \hspace{0.1cm}
    \begin{subfigure}{.32\textwidth}
    	\centering
	    \setlength{\abovecaptionskip}{2pt}
        \includegraphics[width=1\textwidth]{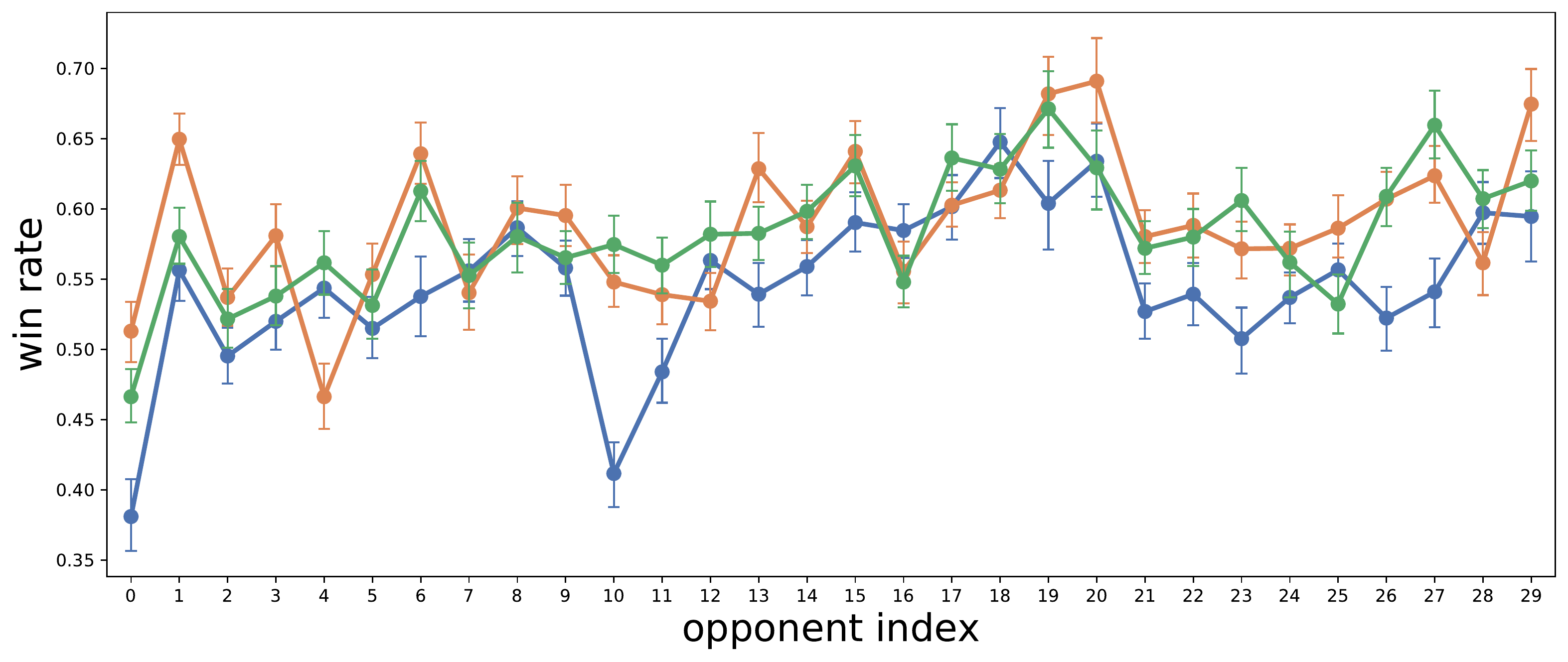}
        \caption{One-on-One, versus reasoning learner}
		\label{fig:ablation_M_oneonone_2}
    \end{subfigure}\\
    \begin{footnotesize}
    \centering
    \textcolor[RGB]{76,114,176}{$\bullet\ \ $}$M=1$\ \ \ \ \ \textcolor[RGB]{221,132,82}{$\bullet\ \ $}$M=2$\ \ \ \ \ \textcolor[RGB]{85,168,104}{$\bullet\ \ $}$M=3$\ \ \ \ \ \textcolor[RGB]{196,78,82}{$\bullet\ \ $}$M=4$
    \end{footnotesize}
    \caption{Performance against different types of opponents, \textit{i.e.}, fixed policy, na\"ive learner, and reasoning learner with different $M$ (number of recursive imagination levels). The results are plotted using mean and 95\% confidence intervals with five different random seeds (\textit{x}-axis is opponent index). Higher $M$ improves the representative capability of the opponent model, but also accumulates errors. Thus, $M$ is a tradeoff between these two. $M=1,2,3,4$ are performed in Triangle Game, while $M=1,2,3$ are performed in One-on-One. In general, $M\ge 2$ performs similarly, which verifies $M$ is robust. Note that $M=1$ is MBOM w/o IOPs.}
    \label{fig:ablation-M}
\end{figure}

Figure~\ref{fig:ablation-k} shows the performance of MBOM with different rollout planning horizon $k$. The selection of $k$ is a tradeoff between the environment model error and the accuracy of value estimation. In addition, the computational complexity of IOPs increases exponentially with $k$. In a sparse reward environment, appropriately increasing $k$ makes the algorithm robust. While in a dense reward environment, a smaller $k$ works well. 

Figure~\ref{fig:ablation-M} shows the performance of MBOM with different recursive imagination levels $M$. From our theoretical analysis, we know higher $M$ improves the representation capability of the opponent model, but also accumulates the model error. As illustrated in Figure~\ref{fig:ablation-M}, except $M=1$ (\textit{i.e.}, MBOM w/o IOPs), $M=2,3,4$ perform similarly. This indicates that $M \ge 2$ is robust. Larger $M$ increases the representation capability of IOPs, but does not always improve the performance, \textit{i.e.}, the gain is vanishing when $M$ increases due to the compounding error. Moreover, in practice $M$ also linearly increases the computational cost, thus in general smaller $M$ are preferred, \textit{e.g.}, 2 or 3. 


\newpage
\section{Detailed Results on Predator-Prey}
\label{app:predator-prey}

\begin{figure}[h!t]
    \centering
	\begin{subfigure}{.32\textwidth}
	    \centering
	    \setlength{\abovecaptionskip}{2pt}
        \includegraphics[width=1\textwidth]{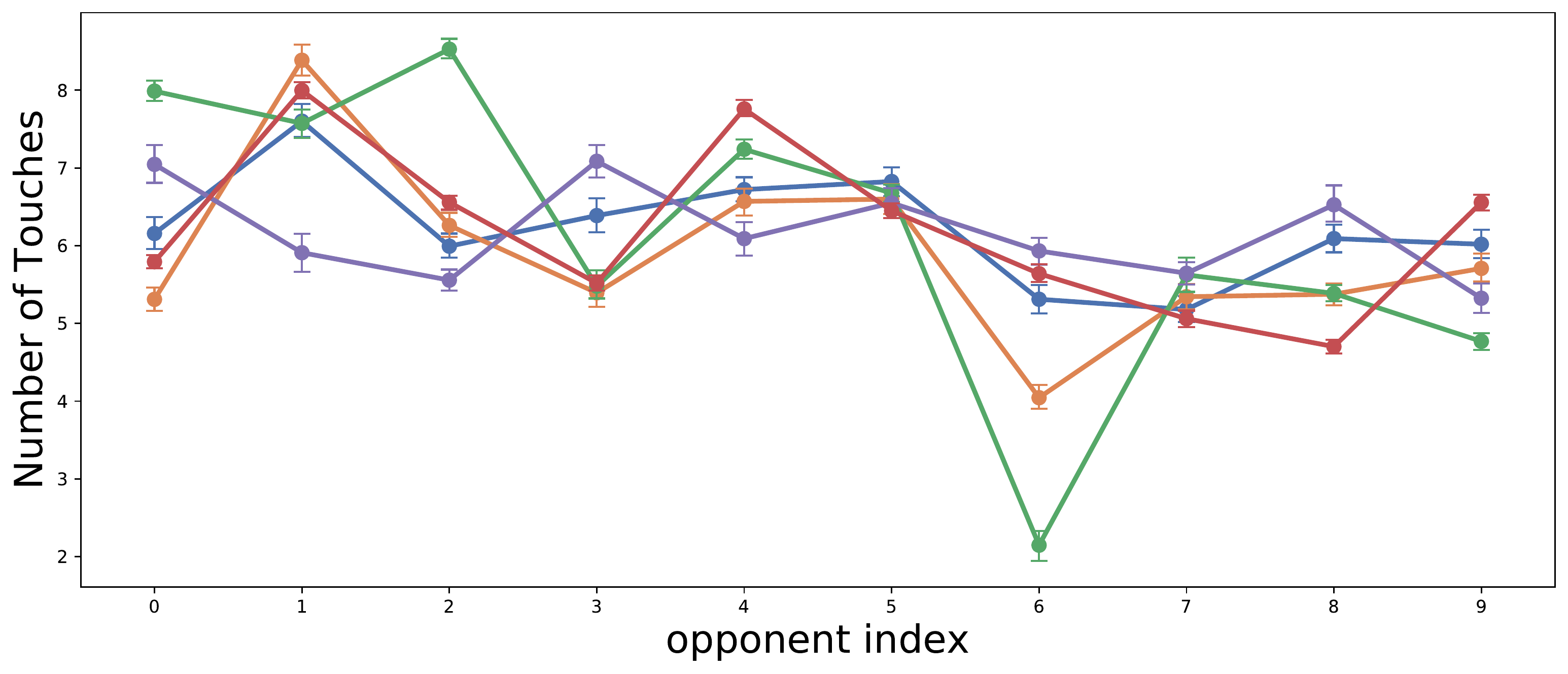}
        \caption{versus fixed policy}
		\label{fig:simple_tag_curve_0}
    \end{subfigure}
    \hspace{0.1cm}
    \begin{subfigure}{.32\textwidth}
        \centering
	    \setlength{\abovecaptionskip}{2pt}
        \includegraphics[width=1\textwidth]{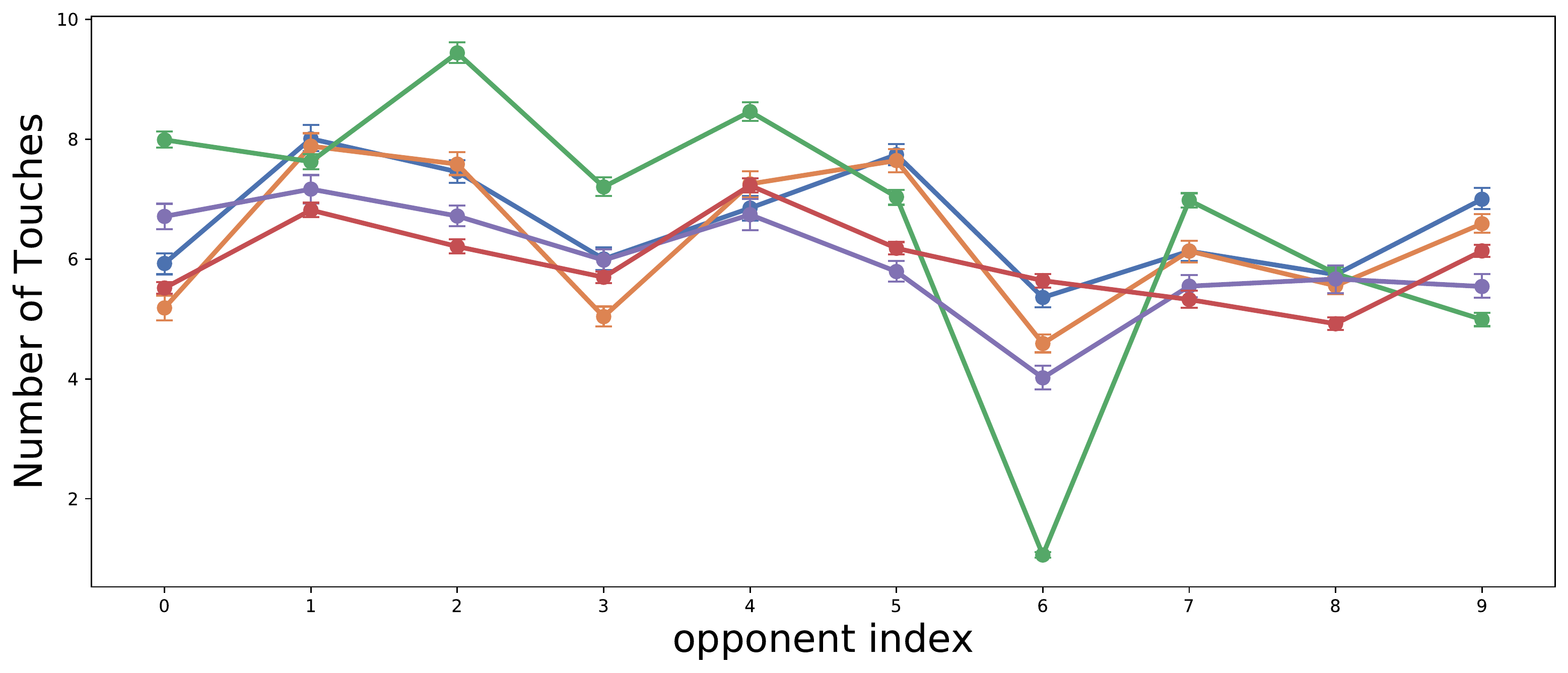}
        \caption{versus na\"ive learner}
		\label{fig:simple_tag_curve_1}
    \end{subfigure}
    \hspace{0.1cm}
    \begin{subfigure}{.32\textwidth}
    	\centering
	    \setlength{\abovecaptionskip}{2pt}
        \includegraphics[width=1\textwidth]{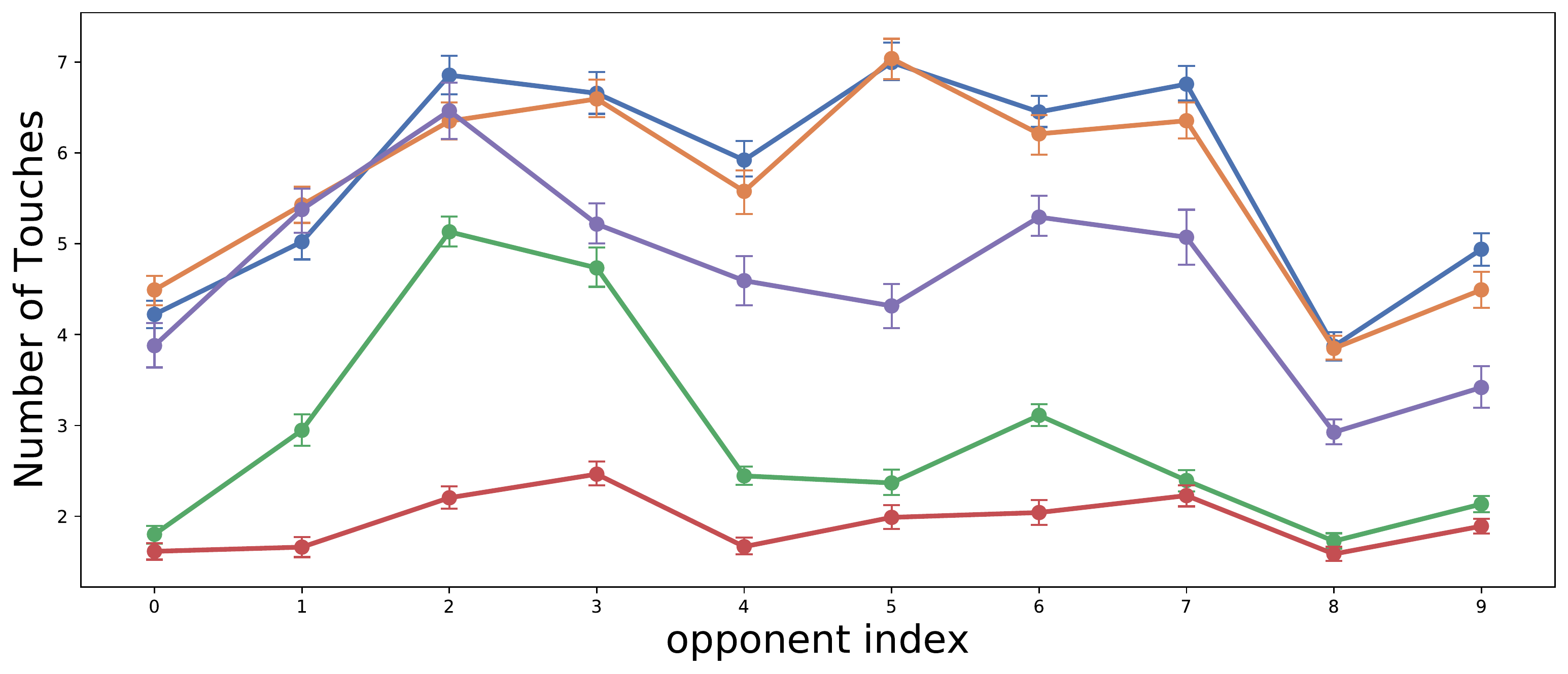}
        \caption{versus reasoning learner}
		\label{fig:simple_tag_curve_2}
    \end{subfigure}\\
    \centering
    \begin{small}
    \textcolor[RGB]{76,114,176}{$\bullet\ \ $}Meta-MAPG\ \ \ \ \ \textcolor[RGB]{221,132,82}{$\bullet\ \ $}Meta-PG\ \ \ \ \ \textcolor[RGB]{85,168,104}{$\bullet\ \ $}PPO\ \ \ \ \  \textcolor[RGB]{129,114,179}{$\bullet\ \ $}LOLA-DiCE\ \ \ \ \
    \textcolor[RGB]{196,78,82}{$\bullet\ \ $}MBOM
    \end{small}
    \caption{Performance against different types of opponents, \textit{i.e.}, fixed policy, na\"ive learner, and reasoning learner in Predator-Prey, where \textit{x}-axis is joint opponent index. Smaller touch number means better performance.}
    \label{fig:pp}
\end{figure}

Figure~\ref{fig:pp} shows the performance when against different types of opponents compared with the baselines. For each type, there are ten test joint opponent policies. The results show MBOM substantially outperforms the baselines against reasoning learners. 
LOLA-DiCE, Meta-PG and Meta-MAPG underperform PPO when against multiple reasoning learners, indicating their adaptation may induce a negative effect on the agent performance. The reasons may be as follows. LOLA-DiCE exploits the opponent model to estimate the learning gradient of opponent policy. However, when against multiple reasoning learners, the estimated gradient of their joint policy can hardly be accurate enough to capture the change of their individual policies as each opponent learns conditioned on the agent's policy. Meta-PG and Meta-MAPG both update the agent' policy to accommodate the future policy of opponent. However, the future joint policy of multiple reasoning opponents is much harder to anticipate than in single-opponent cases.



\section{Experiment Settings}
\label{app:settings}

The experiment environments are detailed as follows:

\textsf{Triangle Game.} As shown in Figure~\ref{fig:trigame}, there are two moving players, player 1 and 2, and three fixed landmarks, $L1-L3$, in a square field. The landmarks are located at the three vertexes of an equilateral triangle with a side length $0.6$. When the distance between a player and a landmark is less than $0.15$, the agent touches the landmark and has the state $T$. $T1$ indicates that the player touches the landmark $L1$, and so on. If the player does not touch any landmark, the player state is $F$. The payoff matrix of the two players is shown in Table~\ref{tb:trigame_reward}, where player 2 has inherent disadvantages since the optimal solution of player 2 always strictly depends on the state of player 1. When facing different policies of player 1, player 2 has to adjust its policy to adapt to player 1 for higher reward. We control player 2 as the agent and take player 1 as the opponent.

\textsf{One-on-One.} As shown in Figure~\ref{fig:oneonone}, there is a goalkeeper and a shooter who controls the ball in the initial state and could dribble or shoot the ball. At the end of an episode, if the shooter shoots the ball into the goal, the shooter will get a reward $+1$, and the goalkeeper will get a reward $-1$. Otherwise, the shooter will get a reward $-1$, and the goalkeeper will get a reward $+1$. The goalkeeper could only passively react to the strategies of the shooter and makes policy adaptation when the shooter strategy changes. We control the goalkeeper as the agent and take the shooter as the opponent.

\textsf{Predator-Prey.} We follow the setting in MPE \cite{lowe2017multi}. In each game, the agent plays against three opponents (predators), and the episode length is 200 timesteps. 

\textsf{Coin Game.} As shown in Figure~\ref{fig:coin_game}, there are two players, red and blue, moving on a $3 \times 3$ grid field, and two types of coins, red and blue, randomly generated on the grid field. If the player moves to the position of the coin, the player collects the coin and receives a reward of $+1$. However, if the color of the collected coin is different from the player's color, the other player receives a reward of $-2$. The length of the game is $150$ timesteps. 

\begin{figure}[h]
	\centering
	\begin{subfigure}{0.322\textwidth}
	    \centering
        \includegraphics[width=1\textwidth]{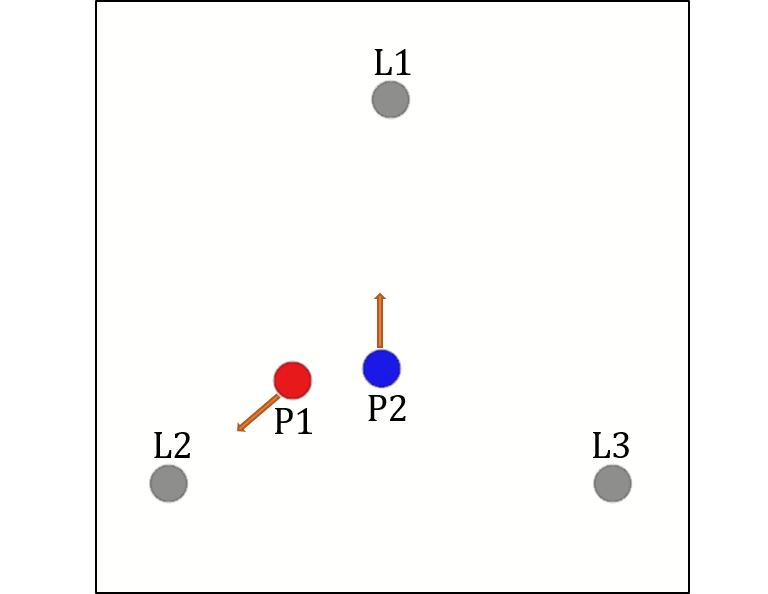}
        \caption{Triangle Game}
		\label{fig:trigame}
    \end{subfigure}
	\begin{subfigure}{0.39\textwidth}
    	\centering
		\includegraphics[width=1\textwidth]{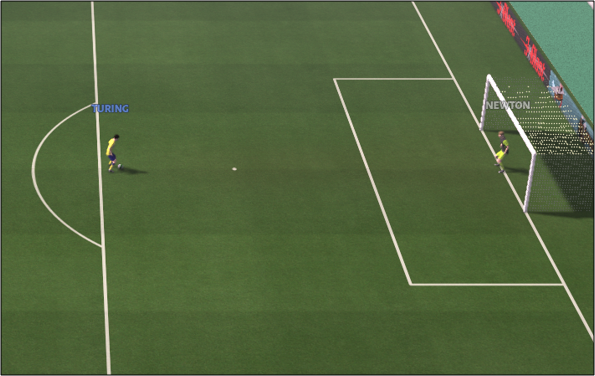}
	    \caption{One-on-One}
	    \label{fig:oneonone}
    \end{subfigure}
    \hspace{3pt}
    \begin{subfigure}{0.26\textwidth}
    	\centering
		\includegraphics[width=1\textwidth]{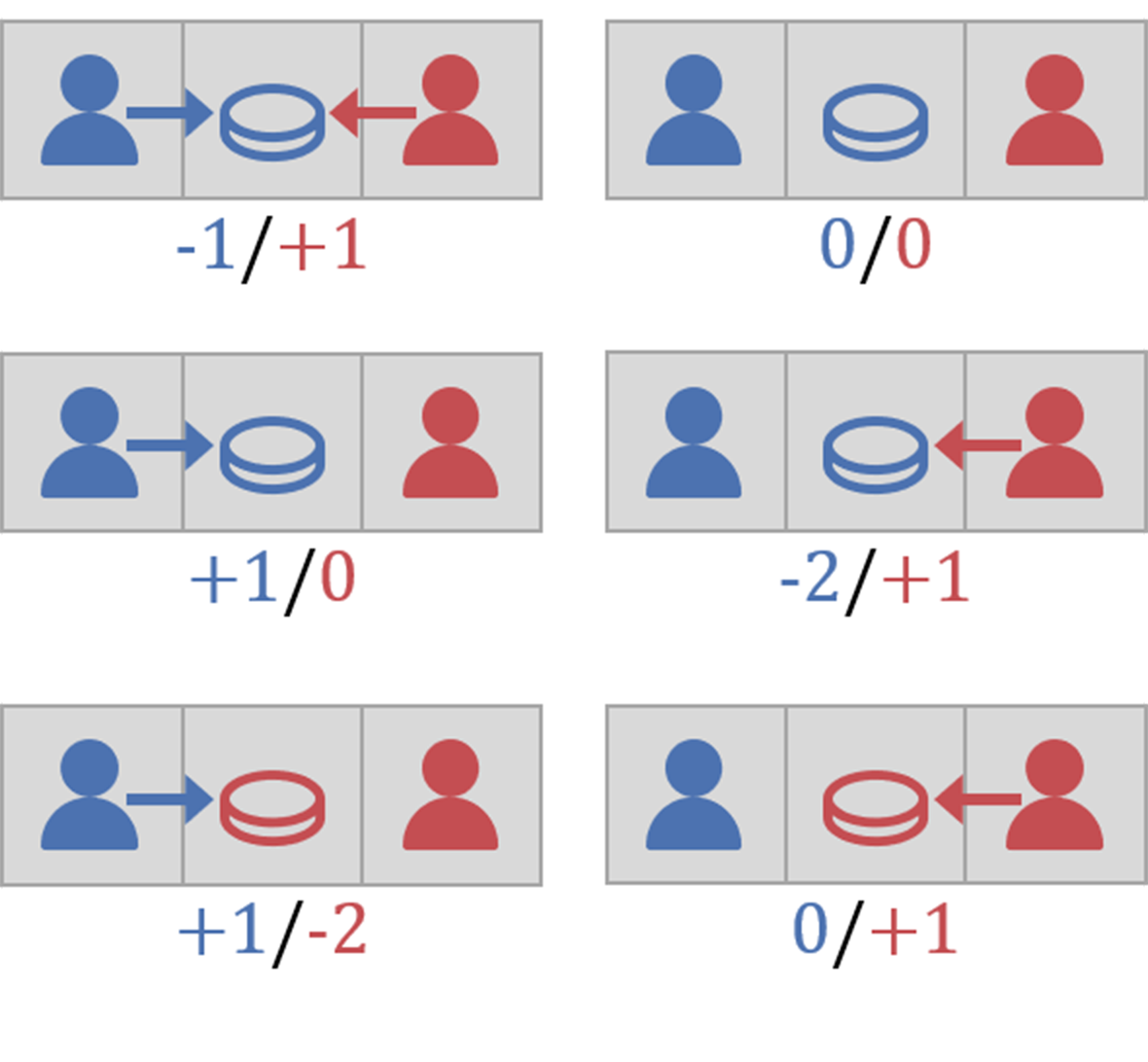}
	    \caption{Coin Game}
	    \label{fig:coin_game}
    \end{subfigure}
	\caption{Illustrations of the scenarios.}
	\label{fig:scenarios}
\end{figure}

\newcommand{\rbcell}[3]{\begin{tabular}{@{}#1@{}}\textcolor{red}{#2}\\\textcolor{blue}{#3}\end{tabular}}
\newcommand{\rbtext}[2]{\textcolor{red}{#1}/ \textcolor{blue}{#2}}
\renewcommand\tabcolsep{5pt}
\begin{table}[h]
\renewcommand\arraystretch{1}
	\caption{Payoff matrix of Triangle Game.}
	\vspace{2mm}
	\label{tb:trigame_reward}
	\centering
	\begin{footnotesize}
	\begin{tabular}{llcccc}
		\toprule
		\multicolumn{2}{l}{\multirow{2}{*}{}} & \multicolumn{4}{c}{\textcolor{blue}{Player 2}} \\ \cmidrule(r){3-6}
		\multicolumn{2}{l}{} & F & T1 & T2 & T3 \\ \midrule
		\multirow{4}{*}{\rotatebox{90}{\textcolor{red}{Player1\ \ \ \ }}} & F & \rbtext{$0$}{$0$} & \rbtext{$-0.5$}{$+0.5$} & \rbtext{$-0.5$}{$+0.5$} & \rbtext{$-0.5$}{$+0.5$} \\ 
		\cmidrule(r){3-6}
		& T1 & \rbtext{$+0.5$}{$-0.5$} & \rbtext{$+1$}{$-1$} & \rbtext{$+1$}{$-1$} & \rbtext{$-1$}{$+1$} \\ 
		\cmidrule(r){3-6}
		& T2 & \rbtext{$+0.5$}{$-0.5$} & \rbtext{$-1$}{$+1$} & \rbtext{$+1$}{$-1$} & \rbtext{$+1$}{$-1$} \\  
		\cmidrule(r){3-6}
		& T3 & \rbtext{$+0.5$}{$-0.5$} & \rbtext{$+1$}{$-1$} & \rbtext{$-1$}{$+1$} & \rbtext{$+1$}{$-1$} \\
		\bottomrule
	\end{tabular}
	\end{footnotesize}
\end{table}


\textbf{Preparing opponents.} For the two types of opponents, fixed policy and na\"ive learner, we run independent PPO \citep{schulman2017proximal} algorithm for $10$ times. During each run, we store $20$ opponent policies in the training set, $3$ opponent policies in the validation set, and $3$ opponent policies in the test set. So the sizes of the training set, validation set, and test set are $200$, $30$, and $30$, respectively. The validation set is only required by Meta-PG and Meta-MAPG. The reasoning learner learns a model to predict the action of the agent and a policy conditioned on the predicted action. Since the initial parameters of the reasoning learner should not be shared with the first two types of opponents, we train additional $30$ reasoning learners in the same way aforementioned and add them to the test set.

To increase the diversity of the opponent policy, the method \citep{tang2021discovering} can be adopted, but here we use some tricks to increase the diversity without incurring too much training cost. For the Triangle Game, we trained the opponent set with a modified reward, so that we could get the opponent that commuting between T1 and T2. Other types of opponents, such as hovering around a landmark, commuting between 2 landmarks, or rotating among 3 landmarks, are obtained in a similar way. For One-on-One, we set a barrier in front of the goal (invisible, but can block the ball) and only keep a gap so that the ball can enter the goal. We trained opponents with such gaps in different positions. 

\textbf{Pre-training and Test.} In the pre-training phase, all methods are well trained with $\nu$ learning opponents of training set. The data during this phase is collected to fit the environment model and the level-0 IOP for MBOM. In the test phase, the agent interacts with the opponents in the test set to evaluate the ability to adapt to various opponents. The test phase lasts for 100 episodes, during which the environment model is no longer trained and the agent continuously finetunes parameters. Fixed policy, na\"ive learner, and reasoning learner use the test set to initialize parameters and continuously learn by respective learning ways. All methods use the same training set, validation set, and test set. There are enough opponents with different policies for testing to ensure that experimental results are unbiased.


The hyperparameters of MBOM are summarized in Table~\ref{tb:hyper-param}.

\begin{table}[h]
\renewcommand\arraystretch{1.1}
	\caption{Hyperparameters}
	\label{tb:hyper-param}
    \centering
    \begin{small}
    \begin{tabular}{llcccc}
    \toprule
    \multicolumn{2}{c}{} &Triangle Game & One-on-One  & Predator-prey & Coin Game\\\midrule
    \multirow{9}{*}{PPO}
    &policy hidden units & MLP[64,32] & LSTM[64,32] & MLP[64,32] & MLP[64,32]\\
    &value hidden units & MLP[64,32] & MLP[64,32] & MLP[64,32]& MLP[64,32]\\
    &activation function & ReLU & ReLU & ReLU & ReLU\\
    &optimizer & Adam & Adam & Adam & Adam\\
    &learning rate & 0.001 & 0.001 & 0.001 & 0.001\\
    &num. of updates & 10 & 10 & 10 & 10\\
    &value discount factor & 0.99 & 0.99 & 0.99 & 0\\
    &GAE parameter & 0.99 & 0.99 & 0.99 & 0\\
    &clip parameter & 0.115 & 0.115 & 0.115 & 0.115\\ \midrule
    \multirow{4}{*}{Opponent model}
    &hidden units & MLP[64,32] & MLP[64,32] & MLP[64,32] & MLP[64,32]\\
    &learning rate & 0.001 & 0.001 & 0.001 & 0.001\\
    &batch size & 64 & 64 & 64 & 64\\
    &num. of updates & 10 & 10 & 10 & 10\\ \midrule
    \multirow{7}{*}{IOPs}
    &num. of levels $M$ & 3 & 3 & 2 & 2\\
    &learning rate & 0.005 & 0.005 & 0.005 & 0.005\\
    &update times & 3 & 3 & 3 & 3\\ 
    &rollout horizon & 2 & 5 & 1 & 1 \\
    &decayed factor of $\Psi$ & 0.9 & 0.9 & 0.9 & 0.9 \\
    &horizon of $\Psi$ & 10 & 10 & 10 & 10 \\
    &s-softmax parameter & 1 & $1.1/e$ & 1 & 1\\
    \bottomrule
    \end{tabular}
    \end{small}
\end{table}

\section{Future Work}
\label{app:future work}
In MBOM, the learning agent treats all opponents as a joint opponent. If the size of the joint opponent is large, the agent will need a lot of rollouts to get an accurate best response. The cost increases dramatically with the size of the joint opponent. How to reduce the computation overhead in such scenarios will be considered in future work. Moreover, MBOM implicitly assumes that the relationship between opponents is fully cooperative. How to deal with the case where their relationship is non-cooperative is also left as future work.


\end{document}